\documentclass{article}
\usepackage{ijcai26}
\makeatletter
\ifx\latex@outputpage\@undefined\else
  \let\@outputpage\latex@outputpage
\fi
\let\citep\cite
\newcommand{\citet}[1]{\citeauthor{#1}~\shortcite{#1}}
\makeatother

\usepackage{times}
\usepackage{soul}
\usepackage{url}
\usepackage[hidelinks]{hyperref}
\usepackage[utf8]{inputenc}
\usepackage[small]{caption}
\usepackage{graphicx}
\graphicspath{{./}{../}{../../}}
\usepackage{amsmath}
\usepackage{amsthm}
\usepackage{booktabs}
\usepackage{algorithm}
\usepackage{algpseudocode}
\usepackage[switch]{lineno}

\usepackage{mathtools}
\usepackage{dsfont} 
\usepackage{xspace} 
\usepackage{subcaption} 
\usepackage{caption}
\usepackage{xcolor}
\usepackage{framed}

\definecolor{shadecolor}{RGB}{248,248,248}
\usepackage{amsfonts}
\usepackage{proof-at-the-end}
\usepackage{thmtools,thm-restate}
\usepackage{aligned-overset} 
\usepackage{enumitem}
\setlist[enumerate]{wide=-3pt, widest=99,leftmargin=\parindent, labelsep=*} 

\theoremstyle{plain} 
\newtheorem{theorem}{Theorem}
\newtheorem*{theorem*}{Theorem}
\newtheorem{lemma}[theorem]{Lemma}

\theoremstyle{definition} 

\theoremstyle{remark} 
 
\newcounter{casecount}
\AtBeginEnvironment{proof}{\setcounter{casecount}{0}}

\usepackage{romannum}

\renewcommand{\epsilon}{\varepsilon}

\newcommand{\F}{\ensuremath{\mathcal{F}}\xspace}

\newcommand{\E}[1]{\ensuremath{\mathrm{E}\mathord{\left(#1\right)}}}

\usepackage{graphicx}

\usepackage{booktabs}
\newcommand{\best}[1]{{\bfseries\boldmath $#1$}}     
\newcommand{\runner}[1]{\underline{$#1$}}            

\newcommand{\appendixtableofcontents}{
  \begingroup
  \let\clearpage\relax
  \let\cleardoublepage\relax
  \let\cleardoublepage\relax
  \tableofcontents
  \endgroup
}

\usepackage{amssymb} 

\newcommand{\St}{\mathcal{S}}                   
\newcommand{\A}{\mathcal{A}}                    
\newcommand{\cP}{\mathcal{P}}                   
\newcommand{\RE}{\mathcal{R}}                   
\newcommand{\Epi}[1]{\mathbb{E}_{\pi}\left[#1\right]} 

\newcommand{\Input}{\item[\textbf{Input:}]}
\newcommand{\Output}{\item[\textbf{Output:}]}
\usepackage{framed} 

\providecommand{\pdfinfo}[1]{}
\title{Provably Sub-Linear Two-Timescale NeuroEvolution with Online Plasticity}

\author{
{\fontsize{12}{14}\selectfont
\textbf{Shishen Lin$^{1,*}$} {\normalfont and} \textbf{Yixin Chen$^{2,}$\footnote{denotes the equal contribution.}}
}
\\
\affiliations
$^1$Department of Statistics, University of Warwick, Coventry, United Kingdom\\
$^2$Biozentrum, University of Basel, Basel, Switzerland
\\
\emails
shishen.lin.1@warwick.ac.uk,
chenyx1119@gmail.com
}

\begin{document}

\maketitle

\begin{abstract}
    NeuroEvolution of Augmenting Topologies (NEAT) is a widely used neuroevolution algorithm for learning neural network architectures and weights for control tasks.
    However, standard offline optimisation searches for connection strengths directly, which can scale poorly in high-dimensional weight spaces and more difficult continuous control problems.
    Hybrid methods that combine neuroevolution with online learning can address this challenge, but their theoretical properties remain underexplored.
    This paper gives the first regret analysis for a general NeuroEvolutionary Online Learning (NEOL) framework, which decouples learning into two timescales: an outer loop for architecture search and an inner loop for online weight adaptation via reward-modulated plasticity.
    Under mild conditions, we prove that NEOL achieves sublinear regret.
    Empirically, under fixed interaction budgets on four standard control benchmarks, a NEAT-based NEOL implementation achieves higher final fitness and lower variance than pure NEAT, and is competitive with strong reinforcement learning (RL) baselines on several tasks.
    The results are supported by Wilcoxon rank-sum tests and ablation studies.
    Overall, the findings show that online plasticity can improve the sample efficiency and robustness of two-timescale neuroevolution.
    Code is available at \url{https://github.com/boobaa2001/NeuroEvolution_Online_Learning_NEOL}.
    
\end{abstract}

\section{Introduction}

NeuroEvolution (NE) employs evolutionary operators, rather than gradient descent, to optimise neural network architectures and parameters~\citep{miikkulainen2025neuroevolution,miikkulainen2024evolving,stanley2019designing,khan2010evolution,yao1999evolving,angeline1994evolutionary}. 
It has been successfully applied to biologically inspired models for lifelong learning~\citep{kudithipudi2022biological}, reinforcement learning (RL) tasks~\citep{xue2024sample,chalumeau2023neuroevolution,coreyes2021evolving,khadka2018evolution,stanley2002efficient}, and domain-specific challenges (i.e., optimisation of land-use planning policies for carbon reduction~\citep{young2025discovering}). 

A key limitation of pure neuroevolution is that weight optimisation in high-dimensional spaces relies on mutation-based perturbations. 
These often provide weak fitness assignment, leading to unstable optimisation dynamics and premature convergence in complex continuous-control problems~\citep{stanley2009hypercube}. 
Furthermore, many neuroevolutionary algorithms are offline: a population is evolved on a task, and the policy is fixed at deployment. 
This approach is often inadequate in settings requiring real-time adaptation under sequential interaction and non-stationary environments~\citep{agogino2000online,bellman1966dynamic,sutton1998reinforcement}.
These challenges motivate hybrid methods that retain evolutionary local search for structure and network, while introducing direct mechanisms for online weight adaptation~\citep{agogino2000online,Stanley2003AdaptiveSynapses,Peng2018neatpg}.

Combining population-based search with lifelong learning is a foundational goal for adaptive intelligence~\citep{schmidhuber1987evolutionary,holland1992adaptation,miikkulainen2025neuroevolution}. 
While early attempts to evolve synaptic plasticity rules directly struggled as genetic search spaces for learning rules expanded~\citep{agogino2000online,Stanley2003AdaptiveSynapses}, the introduction of neuromodulation, using Hebbian updates with reward signals, has enabled networks to solve dynamic control tasks more efficiently than were previously intractable for both fixed-weight and non-modulated plastic architectures \citep{Soltoggio2008NeuromodulatedPlasticity,Soltoggio2018EPANN,Najarro2020HebbianMetaLearning}.
This established a robust two-timescale loop where an outer evolutionary process designs an inner online learner. 

Despite these empirical successes, theoretical analysis of neuroevolution is limited. 
Existing studies focus entirely on the runtime analysis of evolutionary neural architecture search for discrete optimisation tasks~\citep{fischer2023first,lv2024nas,lv2025nas}, building an important block for a rigorous understanding of neuroevolution for discrete search spaces. 
Although they are an important first step, these theoretical analyses cannot directly be applied to neuroevolution with online learning, especially in a continuous search space.
To the best of our knowledge, a formal regret analysis of neuroevolution or its online variants remains unexplored. 
This paper addresses this gap by introducing a general NeuroEvolutionary Online Learning (NEOL) framework. 
As a reasonable starting point, we employ an exponential weight update mechanism, in which the algorithm selects the architecture via softmax selection (we will explain further in Section 3, A6), using a simplified model for the selection process rather than more complicated selection algorithms. 
We do not model the full nonconvex MDP dynamics of MuJoCo (Multi-Joint dynamics with Contact)/Box2D (2D game-like RL environments) or other control tasks, as NEOLs (e.g., NEAT) might be highly intractable due to their design. 
Instead, we isolate the algorithmic roles of:
(i)  outer search over network topology, and
(ii) inner online weight adaptation from interaction feedback via local plasticity (NEOL-style reward-modulated, including Hebb/Oja/BCM updates),
Therefore, this paper aims to answer:

\begin{enumerate}
    \item[(1)] Can we provide a regret guarantee for NeuroEvolutionary Online Learning (NEOL) methods?
    \item[(2)] How can we effectively decompose policy learning into structural evolution and online weight adaptation?
\end{enumerate}

\subsection{Contribution}
\noindent Our contributions are primarily in three folds.
First, we introduce a general NEOL framework and provide a first regret analysis of this framework, showing that under mild conditions and exponential-weights selection over neural architectures, NEOL with reward-modulated plasticity rules (BCM, Hebb, and Oja) achieves sublinear regret $O(\sqrt{T})$, i.e., implying asymptotically optimal average performance.
Second, we propose NEAT-NEOL, a practical implementation that decouples topology (network architecture) evolution and online weight adaptation via plasticity rules.
Third, we conduct extensive experiments on four standard control benchmarks (CartPole, Lunar Lander, Hopper, and Bipedal Walker) under fixed interaction budgets.
We also compare NEAT-NEOL with standard NEAT and strong RL baselines (PPO and SAC), and validate the results through Wilcoxon rank-sum tests and ablation studies.
These findings highlight the potential of online plasticity in improving the performance (best fitness) and robustness of two-timescale NeuroEvolution in interactive environments.

\section{Preliminaries and Notation}
\label{sect_back}

\noindent \textbf{Notation.} 
$|A|$ is the cardinality of a finite set $A$.
We use the standard Euclidean inner product on $\mathbb{R}^{d}$:
$\langle w, u\rangle := \sum_{j=1}^{d} w_j\,u_j, \|w\|_2 := \sqrt{\langle w,w\rangle}$.
We define Euclidean projection of $x \in \mathbb{R}^d$ on space $\mathcal{W}$ by $\Pi_{\mathcal{W}}(x) := \arg\min_{w\in\mathcal{W}}\frac{1}{2}\|w-x\|_2^2$.
For a convex function $f:\mathbb{R}^d\to\mathbb{R}$, the {subdifferential} at $x$ is
$\partial f(x)
:=\left\{g\in\mathbb{R}^d:\;
f(y)\ge f(x)+\langle g,\,y-x\rangle\ , \ \forall\,y\in\mathbb{R}^d\right\}$.
Any $g\in\partial f(x)$ is called a {subgradient} of $f$ at $x$.
An algorithm is no-regret if its regret is sublinear $\mathbb{E}[R_T]=o(T)$, i.e., its average performance converges asymptotically to the optimal policy\footnote {Note that different no-regret bounds can differ in different convergence rates. ``Optimal" here does not mean the convergence rate is the best~\citep{orabona2019modern}.}.
The natural filtration,  $(\mathcal{G}_{n})_{n\geq 0}$ associated with the stochastic process $(X_{n})_{n\geq 0}$ on probability space $(\Omega, \mathcal{F},\Pr)$ is given by $\mathcal{G}_{n}=\sigma (X_{0},\ldots ,X_{n} )$ (sigma-algebra generated by a family of r.v.s..) for $n\geq 0$.

\paragraph{Online Learning via Synaptic Plasticity.}
Online learning follows a sequential protocol: a learner repeatedly acts, observes (reward) feedback, and updates its parameters immediately, to minimise cumulative loss over time \citep{ShalevShwartz2011Online}. 
In biological systems, life cycle adaptation is often attributed to synaptic plasticity, in which synaptic strength varies with changes in local neural activity~\citep{Martin2000SPM,Takeuchi2014SPM}.
Importantly, many plasticity mechanisms are gated by a third factor, such as neuromodulatory signals correlated with reward, yielding three-factor rules that support behaviourally relevant credit assignment without backpropagating gradients through time \citep{FremauxGerstner2016ThreeFactor,Gerstner2018Eligibility,Florian2007ModulatedSTDP}. Motivated by this, we use reward-modulated synaptic updates as the inner-loop learning mechanism in NEOL.
Let $x$ denote presynaptic activity, $y$ postsynaptic activity, $w$ a synaptic weight, $\eta>0$ a step size, $r$ the scalar reward, $\tau \in \mathbb{N}$ denote the time step in the \textsc{Rollout} and $\beta>0$ a reward scaling factor. 
We study three standard local rules.
\paragraph{Reward-modulated Hebbian rule.}
Following Hebb's rule \citep{Hebb1949}, correlated pre- and postsynaptic activities strengthen a synapse.
We use a reward-gated three-factor form whose local correlation defines $e_\tau=x_\tau y_\tau$:
$
\Delta w_\tau = \eta\,\beta\, r_\tau\, e_\tau =  \eta\,\beta\, r_\tau\, x_\tau\, y_\tau .
$
This rule is maximally local and simple, but can be unstable without additional constraints \citep{CaporaleDan2008STDPReview}.

\paragraph{Reward-modulated Oja rule.}
Oja’s rule varied from Hebbian learning with an additional activity-dependent negative feedback normalisation that prevents divergence \citep{Oja1982}. Using the same reward-gating yields
$
e_\tau =  y_\tau\bigl(x_\tau - y_\tau w_\tau\bigr), 
\Delta w_\tau =  \eta\,\beta\, r_\tau\, e_\tau
=  \eta\,\beta\, r_\tau\, y_\tau\bigl(x_\tau - y_\tau w_\tau\bigr).
$
This stabilises the dynamics and yields a principal-component interpretation under stationary inputs.

\paragraph{Reward-modulated BCM rule.}
The BCM rule introduces a sliding threshold $\theta_\tau$ that separates depression from potentiation and supports selectivity with homeostasis \citep{Bienenstock1982BCM}. In reward-modulated form,
$
e_\tau=  y_\tau\bigl(y_\tau-\theta_\tau\bigr)x_\tau,
\Delta w_\tau =  \eta\,\beta\, r_\tau\, e_\tau
=  \eta\,\beta\, r_\tau\, y_\tau\bigl(y_\tau-\theta_\tau\bigr)x_\tau .
$
The threshold $\theta_\tau$ is updated on a slower timescale to track recent postsynaptic activity.

\section{NeuroEvolutionary Online Learning}
\label{sec:neol}
\algnewcommand\Input{\item[\text{Input:}]}
\algnewcommand\Output{\item[\text{Output:}]}

In this section, we present a NeuroEvolutionary Online Learning (NEOL) framework. 
We begin with a high-level overview, then describe the decoupled update strategy for weights and topology in Section~\ref{sec:decoupling}, and finally detail the main algorithm in Section~\ref{sec:neol_alg}. 

\begin{figure}[t]
    \centering
    \includegraphics[width=1.0\linewidth]{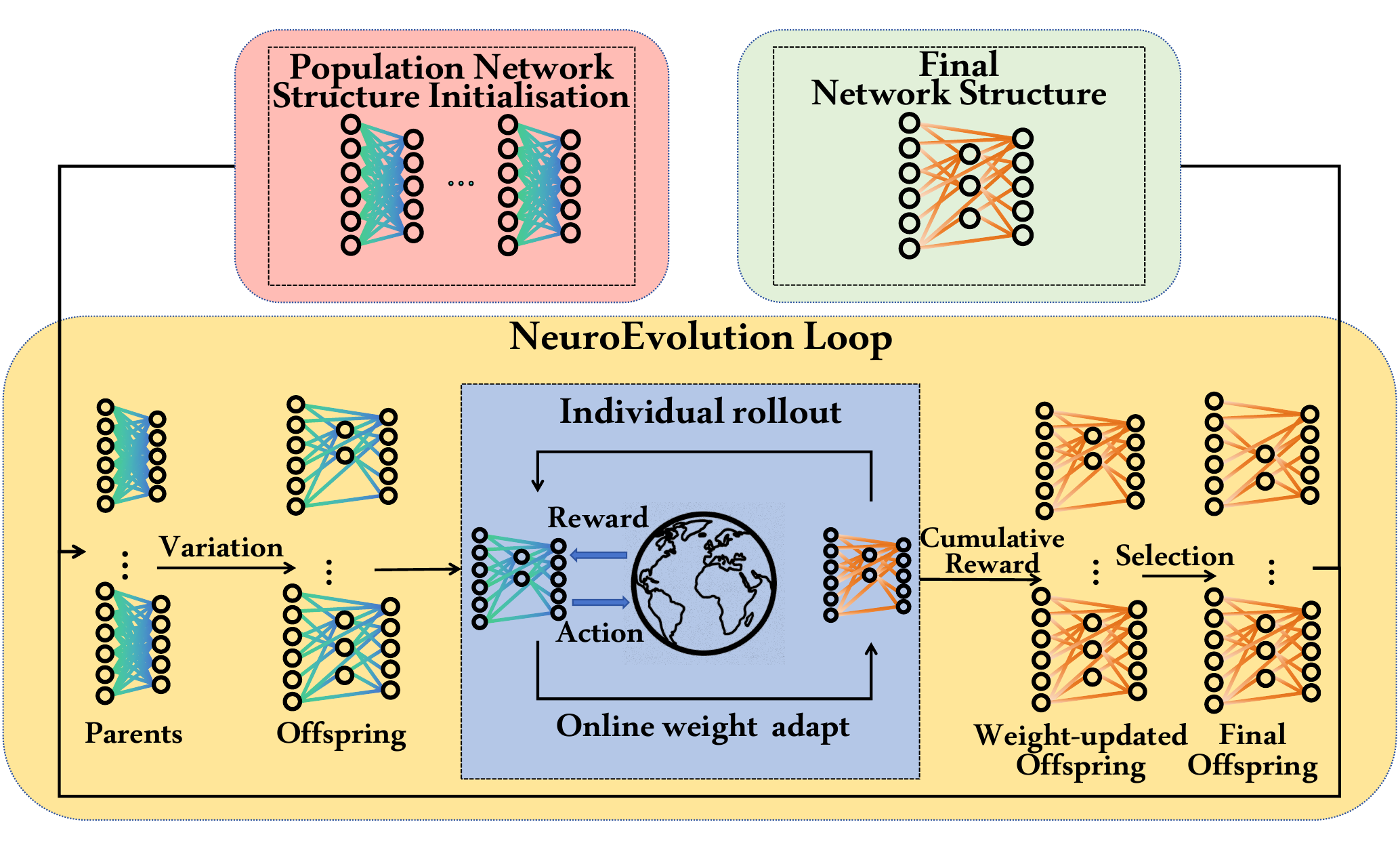}
    \vspace{-0.5cm} 
    \caption{
    Overview of the NEOL framework. 
    The procedure begins by initialising the network structures population. In each generation, the variation operators generate offspring from the current parent generation. 
    Each offspring is evaluated in a separate rollout, during which its weights are adapted online via reward-modulated plasticity (illustrated connectivity lines colour change from blue to orange). 
    The cumulative reward from the rollout is used as the fitness for selection to form the next generation. 
    The best individual in the population determines the final network structure, including its adapted weights and evolved topology.}
    \vspace{-0.5cm} 
    \label{fig:ERLchart}
\end{figure}

\subsection{Decoupling Updates for Weight and Topology}
\label{sec:decoupling}
We maintain a population of network architectures, each encoded as a genome. 
The flowchart in Fig.~\ref{fig:ERLchart} shows a generational loop in which evolution and evaluation are interleaved to progressively improve solutions. 
Unlike standard NEAT, which mutates both topology and weights offline between episodes and evaluates policies with fixed weights during rollouts, NEOL decouples the two update processes. 
During each rollout, synaptic plasticity performs online weight adaptation once the reward feedback is observed and then, based on the current policy, the agent chooses an action to interact with the environment until the inner loop phase ends.
Then, the outer loop uses an evolutionary variation to modify the topology/architecture of networks and select the final offspring based on final fitness, which is the cumulative reward of the agent interacting with the environment.
Such a separation reserves the structural innovation and adapts to real-time feedback, thereby improving the performance of the algorithm (in the sense of fitness) and  stabilising the search process (see Section~\ref{sec:experiments}).

\subsection{General Outer-Inner Framework for NEOL}
\label{sec:neol_alg}

This section provides more detailed pseudocode for NEOL.
Algorithm~\ref{alg:NEOL_general_decoupled} consists of a two-timescale learning phase.
At the outer loop ($t=1,\dots,T$), the algorithm selects an architecture $h_t\in\mathcal{H}$ using an outer
selector state $\Theta_t$ (which might be a probability distribution over candidates, or a population of architectures in evolutionary
algorithms such as NeuroEvolution of Augmenting Topologies (NEAT, a neuroevolution method that evolves both neural network topology and connection weights in an offline manner~\citep{Stanley2002neat}.). 
The selected architecture is then evaluated by an inner \textsc{Rollout} learning procedure
(Algorithm~\ref{alg:NEOL_inner_decoupled}), which adapts the weights online through reward-modulated plasticity and returns
an estimated fitness $\widehat{J}_t$. 
``Env" denotes the task environment, modelled as an episodic MDP \(\mathcal M=(\mathcal S,\mathcal A,P,r,\rho_0)\) with horizon \(T_{\max}\), where \(\mathcal S\) is state space, \(\mathcal A\) is action space, \(P(\cdot\mid s,a)\) is transition kernel, \(r(s,a)\) is a reward function, and \(\rho_0\) is the initial-state distribution.
Finally, the outer state $\Theta_t$ is updated based on this evaluation, favouring future selections toward higher-performing architectures.

Algorithm~\ref{alg:NEOL_inner_decoupled} evaluates a fixed architecture $h$ by performing an online adaptation through reward-modulated plasticity. 
For each of the $N$ episodes, the environment is reset, and the network
weights are initialised uniformly at random over candidate architectures $h$ (line~4). 
At every interaction step $\tau\le T_{\max}$, the current policy $\pi_{w^{(h)}_{\tau}}$ produces an action $a$, the environment returns the next state and reward, and a local synaptic trace $u_{\tau}^{(h)}$ is computed from pre/post-synaptic activity (line~9). 
The weights are then updated online via a reward-gated plasticity rule (Hebb/Oja/BCM) with step size $\eta$ and reward scaling $\beta$ (line~11).
Finally, the episode return is accumulated as fitness, and the final output
is the average return $\widehat{J}(h,w^{(h)})$ across episodes together with the adapted weight vector $w^{(h)}$.



\begin{algorithm}[!t]
\caption{General \textsc{Outer}-\textsc{Inner} framework}
\label{alg:NEOL_general_decoupled}
\begin{algorithmic}[1]
\Input Candidate space $\mathcal{H}$ (architectures/genomes); outer rounds $T$.
Outer selector state $\Theta$ (e.g., a distribution $p_t$ on $\mathcal{H}$, or a NEAT population $\mathcal{P}_t$).
Inner loop: learning rule $\mathcal{L}$; step size $\eta$; reward scale $\beta$; episodes $N$; horizon $T_{\max}$; env.
\Output Final architecture and weights $(h_T,w_T^{(h_T)})$.

\State $\Theta_1 \gets \textsc{OuterInit}(\mathcal{H})$
\Comment{e.g., uniform $p_1(h)=1/|\mathcal{H}|$ or random initial population $\mathcal{P}_1$}
\For{$t=1,\dots,T$} \Comment{Outer: architecture selection}
    \State $h_t \gets \textsc{OuterSelect}(\Theta_t)$ 
    \State $(w_t^{(h_t)},\, \widehat{J}_t) \gets \textsc{Rollout}(h_t,\mathcal{L},\eta,\beta,N,T_{\max},\textit{env})$  
    \State ${\ell}_t(h_t) \gets \textsc{Loss}(\widehat{J}_t)$ \Comment{e.g., ${\ell}_t=-\widehat{J}_t$}
    \State $\Theta_{t+1} \gets \textsc{OuterUpdate}(\Theta_t, h_t, {\ell}_t(h_t))$
\EndFor
\State \Return $(h_T, w_T^{(h_T)})$
\end{algorithmic}
\end{algorithm}

\begin{algorithm}[!t]
\caption{\textsc{Rollout}: reward-modulated plasticity}
\label{alg:NEOL_inner_decoupled}
\begin{algorithmic}[1]
\Input Fixed architecture $h$; learning rule $\mathcal{L}$; step size $\eta$; reward scale $\beta$;
episodes $N$; inner horizon $T_{\max}$; env.
\Output Adapted weights $w^{(h)}$; average return $\widehat{J}(h,w^{(h)})$.

\State $\textsc{returns}\gets[]$
\State $w^{(h)} \gets \text{null}$
\For{$e=1,\dots,N$}
    \State Instantiate net from $h$; initialise plastic weights $w^{(h)}_{1}$
    \State $s\gets\textit{env.reset}()$, $R\gets0$
    \For{$\tau=1,\dots,T_{\max}$} 
        \State $a \gets \pi_{w^{(h)}_{\tau}}(s)$
        \State $(s',r,\text{done})\gets \textit{env.step}(a)$
        \State $u_{\tau}^{(h)} \gets \textsc{LocalTrace}(s,a,s')$ 
        \State $w^{(h)}_{\tau+1} \gets \textsc{PlasticUpdate}(w^{(h)}_{\tau}, u_{\tau}^{(h)}, r;\mathcal{L},\eta,\beta)$
        \State $R\gets R+r$, $s\gets s'$
        \If{done} \textbf{break} \EndIf
    \EndFor
    \State $\textsc{returns.append}(R)$
    \State $w^{(h)} \gets w^{(h)}_{\tau+1}$ 
\EndFor
\State $\widehat{J}(h,w^{(h)})\gets \frac{1}{N}\sum_{e=1}^N \textsc{returns}[e]$
\State \Return $(w^{(h)},\, \widehat{J}(h,w^{(h)}))$
\end{algorithmic}
\end{algorithm}

\par This paper presents a simplified theoretical lens for the outer-loop with inner-loop structure in Algorithms~\ref{alg:NEOL_general_decoupled}--\ref{alg:NEOL_inner_decoupled}.
We show that under standard boundedness assumptions, the hybrid admits a sublinear regret bound.
For simplicity of both theoretical and empirical analysis, we consider episode $N=1$ in this paper.

\subsection{Simplified protocol and regret definition}
\label{app:toy_setting}
Let $\mathcal{H}$ be a finite set of candidate genomes/architectures (e.g., a finite pool of NEAT topologies), with $M := |\mathcal{H}| \in \mathbb{N}$.
Each $h\in\mathcal{H}$ induces a parameter space $\mathcal{W}_h\subset\mathbb{R}^{d_h}$.
We consider an online interaction protocol over outer rounds $t=1,\dots,T$.


\begin{enumerate}
  \item At outer round $t$, the environment defines an evaluation instance (e.g., initial state distribution).
  \item The outer loop selects an architecture $h_t\in\mathcal{H}$ according to $p_t\in\Delta(\mathcal{H})$.
  \item Given $h_t$, the inner loop runs for $N$ episodes at most $T_{\max}$ iterations using reward-modulated plasticity,
  generating adaptive weights $w_t^{(h_t)}\in\mathcal{W}_{h_t}$ and a (stochastic) average return
  $J_t\big(h_t,w_t^{(h_t)}\big)\in[0,1]$ (normalised).
  \item Define the (stochastic) loss at round $t$ as $ \ell_t\big(h_t,w_t^{(h_t)}\big) := 1 - J_t\big(h_t,w_t^{(h_t)}\big)\in[0,1]$.
\end{enumerate}

As a simplified mode, in our analysis, we consider $N=1$ and $T_{\max}$ is constant with respect to the outer iteration $T$.


We define the (expected) regret of the hybrid method as
\begin{align*}
 \mathbb{E}[R_T]
:= \mathbb{E}\Bigg[\sum_{t=1}^T \ell_t\big(h_t, w_t^{(h_t)}\big)
- \min_{h\in\mathcal{H}}\min_{w\in\mathcal{W}_h}\sum_{t=1}^T \ell_t(h,w)\Bigg],
\end{align*}
where $\ell_t(h,w)$ denotes the loss that would be incurred on round $t$ if the candidate $(h,w)$that is in round $t$ if deployed on the same evaluation instance, and the expectation is taken from all the randomness in evaluation instances, rollouts, and the algorithm.
Regret quantifies the cost of online learning.
The use of the Regret metric $R(T)$ is motivated by the need to balance structural exploration (finding the right NEOL topology) with parametric exploitation (optimising weights for a given topology). 
A sub-linear regret bound, $R(T)/T \to 0$ as $T \to \infty$, guarantees that the learner’s cumulative reward is asymptotically as good as the best possible architecture-weight pair $(h^*, w^*)$ in the candidate set.


\subsection{Assumptions}
\label{app:toy_assumptions}

Most NEOL algorithms, including NEAT and its variants, are highly intractable. 
It is sensible to make some assumptions before conducting regret analysis.

\noindent{\bf A1 Finite architecture.}
$\mathcal{H}$ is finite with $|\mathcal{H}|=M<\infty$.

\noindent{\bf A2 Bounded rewards/losses.}
Rewards are normalised so that $r_t(h,w)\in[0,1]$ for all $t,h,w$, hence $\ell_t(h,w)\in[0,1]$.

\noindent{\bf A3 Bounded activities and bounded parameter domain.}
For each architecture $h$, the (vectorised) weights lie in a convex and compact set $\mathcal{W}_h$ with diameter at most $D$:
$
\|w-w'\|_2 \le D, \forall w,w'\in\mathcal{W}_h.
$
Moreover, for each $h$ and each round $t$, the network induces bounded {local synaptic traces}
(e.g., pre-/post-synaptic activity products) denoted by $z_t:=x_ty_t\in\mathbb{R}^{d_h}$,
there exists a constant $Z$ such that
$
\|z_t\|_2 \le Z  \text{ for all } t,h.
$
For BCM, we additionally assume a bounded sliding threshold $\theta_t\in[-1,1]$ and bounded post-synaptic activity $y_t\in[-1,1]$.

\noindent{\bf A4 NEOL inner update is a reward-modulated local plasticity rule.}
For each fixed architecture $h$, the within-lifetime update (Algorithm~\ref{alg:NEOL_inner_decoupled}, line 9-10) is a local rule driven by synaptic traces and reward,
implemented with projection to keep weights in $\mathcal{W}_h$:
\begin{equation}
\label{eq:generic_local_update}
w_{t+1}^{(h)} = \Pi_{\mathcal{W}_h}\!\big(w_t^{(h)} + \eta\, u_t^{(h)}\big),
\end{equation}
where $u_t^{(h)}$ depends only on local quantities and the reward.



\noindent
Assumption~4 does not assume backpropagation or policy gradients.
It only assumes that the inner-loop update is a bounded local synaptic rule of the form \eqref{eq:generic_local_update},
which holds for the reward-modulated Hebb/Oja/BCM updates below when weights are clipped/projected.

\begin{align*}
\label{eq:plasticity_updates}
u_t^{(h)} = \,
\begin{cases}
r_tz_t  & \text{Hebb},\\
r_t \left(z_t - \alpha_t w_t^{(h)} \right) & \text{Oja},\\
r_ty_t (y_t-\theta_t)\, x_t & \text{BCM}.
\end{cases}
\end{align*}
Here $\alpha_t\ge 0$ is bounded (e.g., $\alpha_t=y_t^2$ with $y_t\in[-1,1]$), and
$y_t,\theta_t$ are bounded as in Assumption~3. 
The pre-synaptic activity $x_t$ can be absorbed into
$z_t $ if desired.
We say the plasticity rule is reward-modulated if $u_t^{(h)}=r_t \cdot \text{local trace}$, where $r_t$ is the reward.

\noindent{\bf A5 Surrogate domination almost certainly.}
For each architecture $h\in\mathcal{H}$ and each round $t$, define a linear surrogate
$L_t^{(h)}(w):=-\langle w,u_t^{(h)}\rangle$.
We assume there exists a constant $C\ge 1$ such that for all $w\in\mathcal{W}_h$,
\begin{align*}
\ell_t\big(h,w_t^{(h)}\big)-\ell_t(h,w)
\;\le\;
C\, \left( L_t^{(h)}\big(w_t^{(h)}\big)-L_t^{(h)}(w) \right)
\end{align*}
holds almost surely.
An informal intuition for Assumption~5 is:
if $\ell_t(h,\cdot)$ is locally smooth and the reward-modulated plasticity direction $u_t^{(h)}$ is positively aligned with the descent direction of $\ell_t(h,\cdot)$ at $w_t^{(h)}$, then the decrease in the true loss can be controlled by the decrease in the linear surrogate,

\noindent{\bf A6 Selection as a soft and fitness-based reweighting.}
At each outer round $t$, NEOL produces a population of genomes and applies selection based on their fitness values. 
For theoretical tractability, we model this selection as maintaining a probability distribution $p_t$ over a finite candidate set $\mathcal{H}$ of genomes/architectures, and updating it by a monotone fitness-based reweighting rule of exponential-weights form:
\begin{equation}
\label{eq:ew_update}
p_{t+1}(h)\;=\;\frac{p_t(h)\exp\!\big(-\gamma\,\widehat{\ell}_t(h)\big)}
{\sum_{h'\in\mathcal{H}} p_t(h')\exp\!\big(-\gamma\,\widehat{\ell}_t(h')\big)},
\end{equation}
where $\widehat{\ell}_t(h)$ is an empirical estimate of $\ell_t(h,w_t^{(h)})$.
The exponential weight update follows the classical Hedge algorithm~\citep{freund1997decision}.
This abstraction captures that genomes with smaller estimated loss (higher fitness) become more likely to be selected, while ignoring NEAT-specific mechanisms such as speciation, explicit fitness sharing, and structural mutations.
Exponential weights are a standard setting and starting point for such selection dynamics, helping us conduct a clear outer-loop regret analysis.




\noindent{\bf A7 Full-information evaluation over the candidate pool.}
At each outer round $t$, we assume the loss values are revealed for all candidates in the pool $\mathcal{H}$:
the outer update has access to the entire loss vector
$\big(\ell_t(h,w_t^{(h)})\big)_{h\in\mathcal{H}}$ at round $t$.

\section{Regret Analysis of a Simplified NEOL}
First, we consider the inner-loop regret for reward-modulated local updates.
\begin{restatable}{lemma}{Lemmaone}
\label{lem:inner}
Fix any $h\in\mathcal{H}$. Under Assumptions~A2--4, define the (convex) surrogate inner loss: $L_t^{(h)}(w) := -\langle w, u_t^{(h)}\rangle$,
where $u_t^{(h)}$ is the local update direction from \eqref{eq:generic_local_update}.
Then the projected update \eqref{eq:generic_local_update} 
satisfies
\begin{align*}
\sum_{t=1}^T L_t^{(h)}\!\big(w_t^{(h)}\big)
- \min_{w\in\mathcal{W}_h}\sum_{t=1}^T L_t^{(h)}(w)
\le \frac{D^2}{2\eta} + \frac{\eta}{2}\sum_{t=1}^T \|u_t^{(h)}\|_2^2.
\end{align*}
Moreover, if $\|u_t^{(h)}\|_2\le U$ for all $t$ (implied by A2--4), then
\begin{align*}
\sum_{t=1}^T L_t^{(h)}\!\big(w_t^{(h)}\big)
- \min_{w\in\mathcal{W}_h}\sum_{t=1}^T L_t^{(h)}(w)
\;\le\; \frac{D^2}{2\eta} + \frac{\eta U^2 T}{2}.
\end{align*}
Choosing $\eta = D/(U\sqrt{T})$ yields
\[
\sum_{t=1}^T L_t^{(h)}\!\big(w_t^{(h)}\big)
- \min_{w\in\mathcal{W}_h}\sum_{t=1}^T L_t^{(h)}(w)
\;\le\; DU\sqrt{T}.
\]
\end{restatable}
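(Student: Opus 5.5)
The plan is to recognise the update \eqref{eq:generic_local_update} as projected online gradient descent on the linear surrogate losses $L_t^{(h)}$. Since $\nabla L_t^{(h)}(w) = -u_t^{(h)}$ for every $w$, the step $w_{t+1}^{(h)} = \Pi_{\mathcal{W}_h}(w_t^{(h)} - \eta \nabla L_t^{(h)}(w_t^{(h)}))$ is exactly an OGD step, and the standard Zinkevich-type argument applies. Fix an arbitrary comparator $w\in\mathcal{W}_h$; at the end we take the minimiser of $\sum_t L_t^{(h)}$, which exists because $\mathcal{W}_h$ is compact (A3) and the sum is continuous. Linearity gives the exact identity $L_t^{(h)}(w_t^{(h)}) - L_t^{(h)}(w) = \langle u_t^{(h)}, w - w_t^{(h)}\rangle$, so no convexity inequality is needed beyond this.

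The key one-step inequality comes from nonexpansiveness of the Euclidean projection onto the convex set $\mathcal{W}_h$, using $w = \Pi_{\mathcal{W}_h}(w)$:
\[
\|w_{t+1}^{(h)} - w\|_2^2 \le \|w_t^{(h)} + \eta u_t^{(h)} - w\|_2^2 = \|w_t^{(h)}-w\|_2^2 + 2\eta\langle u_t^{(h)}, w_t^{(h)} - w\rangle + \eta^2\|u_t^{(h)}\|_2^2 .
\]
Rearranging gives
\[
\langle u_t^{(h)}, w - w_t^{(h)}\rangle \le \frac{\|w_t^{(h)}-w\|_2^2 - \|w_{t+1}^{(h)}-w\|_2^2}{2\eta} + \frac{\eta}{2}\|u_t^{(h)}\|_2^2 .
\]
Summing over $t=1,\dots,T$ telescopes the distance terms to at most $\|w_1^{(h)}-w\|_2^2/(2\eta)\le D^2/(2\eta)$ by the diameter bound in A3. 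This yields the first display, provided $w_1^{(h)}\in\mathcal{W}_h$, which I would state as part of the initialisation or enforce by an initial projection.

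For the second display, I would bound $\|u_t^{(h)}\|_2 \le U$ case by case using A2 ($r_t\in[0,1]$) and A3. For Hebb, $U = Z$. For Oja, the triangle inequality gives $\|z_t - \alpha_t w_t^{(h)}\|_2 \le Z + \bar\alpha \sup_{w\in\mathcal{W}_h}\|w\|_2$, which is finite by compactness and bounded $\alpha_t$. For BCM, $|y_t(y_t-\theta_t)|\le 2$ and $x_t$ is absorbed into the bounded trace, giving $U\le 2Z$ or a similar constant.

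This bounding step is the only place needing care, since the paper's assumptions are somewhat loose. In particular, the Oja case needs a bound on $\|w\|_2$ rather than on the diameter, so I would invoke compactness of $\mathcal{W}_h$ rather than $D$. Substituting $\sum_t \|u_t^{(h)}\|_2^2\le U^2T$ gives $D^2/(2\eta)+\eta U^2T/2$. Plugging in $\eta = D/(U\sqrt T)$, both terms equal $DU\sqrt T/2$, so the total is $DU\sqrt T$. All steps are deterministic (pathwise), so no expectation or filtration argument is required.
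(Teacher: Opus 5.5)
Your proposal is correct and follows the paper's proof essentially step for step. Both arguments read the update as a projected subgradient step, use non-expansiveness of $\Pi_{\mathcal{W}_h}$ together with $\Pi_{\mathcal{W}_h}(w^\star)=w^\star$, expand the square, use the linear identity $L_t^{(h)}(w_t^{(h)})-L_t^{(h)}(w^\star)=\langle u_t^{(h)}, w^\star-w_t^{(h)}\rangle$, telescope with the diameter bound, and then tune $\eta$. Your case-by-case derivation of $U$ (including the remark that Oja needs $\sup_{w\in\mathcal{W}_h}\|w\|_2$ rather than $D$) is extra detail the paper only asserts; for BCM it is cleanest to write $u_t^{(h)}=r_t(y_t-\theta_t)z_t$, which gives $\|u_t^{(h)}\|_2\le 2Z$ directly without a separate bound on $x_t$.
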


Lemma~\ref{lem:inner} shows that under the bounded assumptions for architecture and bounded local synaptic update, we can bound the inner regret by $DU\sqrt{T}$, where the weight vectors lie in a convex and compact set $\mathcal{W}_h$ with diameter $D$ and the $2$-norm of the bounded local synaptic traces is at most $U$.
This implies that the inner regret can be controlled by the size of the space of weights and local synaptic update in a linear sense. 



Next, we consider the regret of the outer-loop, assuming the architecture uses exponential weights/Hedge.

\begin{restatable}{lemma}{Lemmatwo}
\label{lem:outer}
Let $\{\mathcal{F}_t\}_{t\ge 1}$ be a natural filtration generated by all randomness and loss observations up to the end of round $t-1$.
If $h_t\sim p_t$ is drawn after observing
$\mathcal{F}_t$ and 
under Assumptions~1,2,6,7 with losses in $[0,1]$, the exponential-weights update \eqref{eq:ew_update} satisfies
\begin{align*}
\begin{split}
&
\sum_{t=1}^T \mathbb{E}_{h_t\sim p_t}\!\left[\ell_t\big(h_t,w_t^{(h_t)}\big)\mid \F_t\right]- \min_{h\in\mathcal{H}} \sum_{t=1}^T \ell_t\big(h,w_t^{(h)}\big)  \\
&\quad \le \frac{\log M}{\gamma} + \frac{\gamma T}{8}.
\end{split}
\end{align*}
Choosing $\gamma=\sqrt{8\log M / T}$ yields
\begin{align*}
\begin{split}
&
\sum_{t=1}^T \mathbb{E}_{h_t\sim p_t}\!\left[\ell_t\big(h_t,w_t^{(h_t)}\big) \mid \F_t \right]- \min_{h\in\mathcal{H}} \sum_{t=1}^T \ell_t\big(h,w_t^{(h)}\big)  \\
&\quad \;\le\; \sqrt{\frac{T\log M}{2}}.
\end{split}
\end{align*}

\end{restatable}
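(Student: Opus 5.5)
This is the classical Hedge analysis via the potential $\Phi_t := \log \sum_{h\in\mathcal{H}} \exp\big(-\gamma \sum_{s<t} \ell_s(h,w_s^{(h)})\big)$. By A7 the full loss vector is observed, so we take $\widehat{\ell}_t(h)=\ell_t(h,w_t^{(h)})$ in \eqref{eq:ew_update}. Unrolling the update with $p_1$ uniform gives $p_t(h)\propto \exp(-\gamma L_{t-1}(h))$, where $L_{t-1}(h):=\sum_{s<t}\ell_s(h,w_s^{(h)})$. The key observation is that $p_t$ is $\mathcal{F}_t$-measurable, because it depends only on losses revealed before round $t$. Hence
\[
\mathbb{E}_{h_t\sim p_t}\!\left[\ell_t(h_t,w_t^{(h_t)})\mid\mathcal{F}_t\right]=\sum_h p_t(h)\,\ell_t(h,w_t^{(h)}).
\]
Here the loss vector $(\ell_t(h,w_t^{(h)}))_h$ is treated as fixed when $h_t$ is drawn; it is generated by the rollouts independently of the draw $h_t$, which is the standard oblivious-within-round convention. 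This reduces the statement to a deterministic, pathwise bound on $\sum_t \langle p_t,\ell_t\rangle - \min_h L_T(h)$.

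\textbf{Step 1 (per-round increment).} Write
\[
\Phi_{t+1}-\Phi_t=\log \sum_h p_t(h)\,e^{-\gamma \ell_t(h)}=\log \mathbb{E}_{X\sim p_t}\!\left[e^{-\gamma X}\right],
\]
where $X=\ell_t(h,w_t^{(h)})\in[0,1]$ by A2. Hoeffding's lemma then gives
\[
\Phi_{t+1}-\Phi_t\le -\gamma\langle p_t,\ell_t\rangle+\frac{\gamma^2}{8}.
\]

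\textbf{Step 2 (telescoping).} Summing over $t=1,\dots,T$ yields
\[
\Phi_{T+1}-\Phi_1\le -\gamma\sum_t\langle p_t,\ell_t\rangle+\frac{\gamma^2T}{8}.
\]
For the lower bound, use $\Phi_1=\log M$ (by A1 and uniform initialisation) together with
\[
\Phi_{T+1}\ge \log e^{-\gamma L_T(h^\star)}=-\gamma\min_h L_T(h).
\]

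\textbf{Step 3 (rearranging).} Combining the two bounds and dividing by $\gamma>0$ gives
\[
\sum_t\langle p_t,\ell_t\rangle-\min_h L_T(h)\le \frac{\log M}{\gamma}+\frac{\gamma T}{8}.
\]
Substituting the conditional-expectation identity above gives the first claim. Plugging in $\gamma=\sqrt{8\log M/T}$, each term equals $\sqrt{T\log M/8}$, so their sum is $2\sqrt{T\log M/8}=\sqrt{T\log M/2}$.

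\textbf{Main obstacle.} The delicate step is the measurability bookkeeping, not Hoeffding. We must justify that $p_t$ depends only on $\mathcal{F}_t$, and that the comparator losses $\ell_t(h,w_t^{(h)})$ for every $h$ are defined on the same evaluation instance regardless of which $h_t$ is drawn. I would state explicitly that, under A7, the inner rollouts for all $h$ are run before, and independently of, the draw of $h_t$. With that in place, the conditional expectation is exactly the inner product $\langle p_t,\ell_t\rangle$, and the argument is entirely deterministic given the loss sequence.
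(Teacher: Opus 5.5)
Your proof is correct and follows the paper's argument essentially line for line. Your potential $\Phi_t$ is exactly the paper's $\log Q_t$ for the unnormalised weights $q_t(h)=\exp(-\gamma L_{t-1}(h))$. Both proofs then bound $\Phi_{t+1}-\Phi_t$ with Hoeffding's lemma, telescope from $\Phi_1=\log M$, and lower-bound $\Phi_{T+1}$ by the term of the single best architecture. You state explicitly that the round-$t$ loss vector is held fixed when integrating over $h_t\sim p_t$; the paper uses the same convention implicitly when it identifies $\sum_h p_t(h)e^{-\gamma\ell_t(h)}$ with $\mathbb{E}_{h\sim p_t}[e^{-\gamma\ell_t(h)}\mid\mathcal{F}_t]$, and states it outright in the proof of the main theorem. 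Making it explicit is a gain in rigour, not a different route.
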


Lemma~\ref{lem:outer} shows that we can bound the outer regret conditioning on the current observation by $\sqrt{(T\log M)/2}$, where $M$ is the size of the set of architectures.
This implies that the outer regret can also be controlled by the size of the set of architectures in a square-root sense. 



Finally, using Lemmas~\ref{lem:inner} and \ref{lem:outer}, we obtain:

\begin{restatable}{theorem}{Mainthm}
Under Assumptions~1--7, with $\eta = D/(U\sqrt{T})$ in Lemma~\ref{lem:inner} and $\gamma=\sqrt{8\log M / T}$ in Lemma~\ref{lem:outer},
the expected regret is bounded as
\begin{align*}
 \mathbb{E}[R_T]\;\le\; CDU\sqrt{T} \;+\; \sqrt{\frac{T\log M}{2}}
\;=\; O\!\left(\sqrt{T}\right).
\end{align*}
\end{restatable}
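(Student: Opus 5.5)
The plan is to decompose the regret against the best pair $(h^\star,w^\star)$ into an outer (architecture-selection) term and an inner (weight-adaptation) term. Lemma~\ref{lem:outer} controls the first and Lemma~\ref{lem:inner} combined with A5 controls the second.

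Fix any $h\in\mathcal{H}$ and $w\in\mathcal{W}_h$. For every realisation we write
\begin{align*}
\sum_{t=1}^T \ell_t\big(h_t,w_t^{(h_t)}\big)-\sum_{t=1}^T \ell_t(h,w)
&= \Big[\sum_{t=1}^T \ell_t\big(h_t,w_t^{(h_t)}\big)-\sum_{t=1}^T \ell_t\big(h,w_t^{(h)}\big)\Big]\\
&\quad+\Big[\sum_{t=1}^T \ell_t\big(h,w_t^{(h)}\big)-\sum_{t=1}^T \ell_t(h,w)\Big].
\end{align*}
Here $w_t^{(h)}$ denotes the weights the inner rule would hold for architecture $h$ at round $t$. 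Under A7 these are well defined for every $h$, since the full loss vector is revealed; the inner iterates of each $h$ thus run as in \eqref{eq:generic_local_update} regardless of which $h_t$ is drawn.

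For the second bracket, apply A5 termwise, almost surely:
\[
\ell_t\big(h,w_t^{(h)}\big)-\ell_t(h,w)\le C\big(L_t^{(h)}(w_t^{(h)})-L_t^{(h)}(w)\big).
\]
Summing and invoking Lemma~\ref{lem:inner} with $\eta=D/(U\sqrt T)$ bounds it by $C\,DU\sqrt T$, uniformly over $w\in\mathcal{W}_h$ and $h\in\mathcal{H}$. We need $U$ to be a common bound on $\|u_t^{(h)}\|_2$. For Hebb it is $Z$, using $r_t\in[0,1]$. For Oja it is $Z+\bar\alpha\sup_{w\in\mathcal{W}_h}\|w\|_2$, and for BCM it is $2Z$, using A3.

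For the first bracket, we add and subtract $\mathbb{E}_{h_t\sim p_t}[\ell_t(h_t,w_t^{(h_t)})\mid\mathcal{F}_t]$. The resulting difference $\ell_t(h_t,\cdot)-\mathbb{E}[\,\cdot\mid\mathcal{F}_t]$ is a martingale difference sequence with respect to $(\mathcal{F}_t)$, so its sum has zero expectation by the tower property. What remains is at most $\sqrt{T\log M/2}$ by Lemma~\ref{lem:outer} with $\gamma=\sqrt{8\log M/T}$, after bounding $-\sum_t\ell_t(h,w_t^{(h)})\le-\min_{h'}\sum_t \ell_t(h',w_t^{(h')})$.

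The minimum over $(h,w)$ in $R_T$ sits inside the expectation, so we must make both bounds hold pathwise for the random minimiser $(h^\star,w^\star)$. The inner bound does, since it is uniform over $(h,w)$. For the outer bound, we keep the conditional-expectation form: we bound $R_T$ pathwise by the martingale sum plus $\sqrt{T\log M/2}+CDU\sqrt T$, then take expectations so the martingale sum vanishes. This gives $\mathbb{E}[R_T]\le CDU\sqrt T+\sqrt{T\log M/2}=O(\sqrt T)$.

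The main obstacle is the measurability bookkeeping. We need $p_t$ and the inner trajectories $w_t^{(h)}$ to be $\mathcal{F}_t$-measurable, or at least to yield $\mathbb{E}[\ell_t(h_t,w_t^{(h_t)})\mid\mathcal{F}_t]=\sum_h p_t(h)\ell_t(h,w_t^{(h)})$. If the round-$t$ losses are not $\mathcal{F}_t$-measurable, I would first try conditioning additionally on the round-$t$ evaluation instance, and drawing $h_t$ independently given that instance.
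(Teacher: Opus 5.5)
Your proposal is correct and follows essentially the same route as the paper. It uses the same split into an outer Hedge term against $h^\star$ with its own iterates $w_t^{(h^\star)}$, handled by Lemma~\ref{lem:outer} and the tower property with $\mathcal{F}_t$ taken to contain the round-$t$ loss vector exactly as you suggest, and an inner term bounded pathwise by A5 and Lemma~\ref{lem:inner}. Your explicit martingale-difference step, the pathwise treatment of the random comparator $(h^\star,w^\star)$, and the per-rule computation of $U$ only make precise bookkeeping that the paper's proof leaves implicit.
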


\begin{proof}[Proof Sketch]
Theorem~4 decomposes the regret into an inner-loop term $CDU\sqrt{T}$ (cost of weight adaptation) and an outer-loop term $\sqrt{T\log M/2}$ (cost of architecture search).
Both are sublinear, so their sum is sublinear. 
\end{proof}

In our analysis, we model Algorithms~\ref{alg:NEOL_general_decoupled} and \ref{alg:NEOL_inner_decoupled} with reward-gated synaptic updates as an inner-loop online procedure of the form \eqref{eq:generic_local_update} (A3--4), where the update direction $u_t^{(h)}$ depends only on local traces and the scalar reward.
For theoretical tractability, we consider the outer selection as an exponential-weights update over a finite architecture pool $\mathcal{H}$ (A1 and A7).
In practice, Algorithm~\ref{alg:NEOL_general_decoupled} might evaluate each genome in the population via repeated online rollouts and then apply selection pressure through reproduction/speciation, which we investigate in Section~\ref{sec:experiments}.
Moreover, 
we consider a full-information outer update in which losses for all $h\in\mathcal{H}$ are available (A7).
Under these assumptions, the hybrid outer/inner-loop method admits a sublinear $O(\sqrt{T})$ regret guarantee, supporting this interpretation as an effective combination of evolutionary architecture search and online learning via plasticity updates.

\section{Experiments}\label{sec:experiments}
If we choose the learning rule $\mathcal{L} \in \{\text{Hebb, Oja, BCM}\}$ in Algorithm~\ref{alg:NEOL_general_decoupled}, then the general NEOL framework becomes a NEAT-based NEOL method (we give the pseudocode in the Appendix). 
In this section, we evaluate this method (namely, NEAT-NEOL) and compare it with standard NEAT and two strong reinforcement learning baselines.
We test whether online weight updates from the plasticity rules help, and how the results compare with well-tuned PPO and SAC in practice.

First, we provide a formal formulation of sequential decision making by using the Markov Decision Process (MDP).
Given an MDP defined by a tuple $\langle \St, \A, \cP, \RE, \gamma, T \rangle$ where $\St$ is the state space, $\A$ is the action space, $\RE: \St \times \A \rightarrow \mathbb{R}$ is the reward function and $\cP: \St \times \A \rightarrow \St$ is the transistion function.
In this paper, we consider an online RL setting where the agent can interact with the environment repeatedly until a certain time horizon $T$ by using a policy $\pi: \St \rightarrow \A$. 
Such an agent's policy is usually represented by a neural net.
Then, the goal of the entire learning process is to find an optimal policy $\pi^*$ such that it can maximise the expected rewards:
\begin{align*}
    \pi^* \in \arg\max_{\pi} \Epi{\sum_{t=0}^T  \RE\left(s_t,a_t  \right) \mid s_0,a_0 }.
\end{align*}
As a special setting for NeuroEvolution, this paper directly refers to the cumulative reward as the fitness of the policy at time horizon $T$, i.e., $f(\pi, T):=\sum_{t=0}^T\RE\left(s_t,a_t  \right)$.
\subsection{Experiment setups}


\noindent \textbf{Benchmark Environments.} We evaluate algorithms on four control tasks from the OpenAI gym benchmark suite~\citep{brockman2016openai}, spanning diverse reward structures and action spaces (detailed description deferred to Appendix).

\noindent \textbf{Baselines.}
We compare NEAT-NEOL with two baseline families.
(1) NEAT~\citep{Stanley2002neat}: we use the same outer-loop evolutionary search as in NEAT-NEOL, but remove online synaptic plasticity, i.e., synaptic weights are fixed during each evaluation rollout. 
(2) Proximal Policy Optimisation (PPO)~\citep{schulman2017ppo} and Soft Actor-Critic (SAC)~\citep{haarnoja2018sac}:
PPO serves as a strong on-policy baseline and is used on \texttt{CartPole}; for continuous-control tasks (\texttt{LunarLander}, \texttt{BipedalWalker}, \texttt{Hopper}), we evaluate both PPO and SAC to reflect standard practice for continuous action spaces.



\noindent\textbf{Experimental Configuration for NEAT and NEAT-NEOL.}
We ran NEAT and NEAT-NEOL with population sizes $P \in \{50,100,200,300\}$. 
Each run lasted for $G=500$ generations. In each generation, we evaluated all $P$ individuals on one rollout, so each run contained $P\times G$ rollouts in total (and $P\times G \times T_{max}$, where $T_{max}$ is the rollout length in environment steps). 
For NEAT-NEOL, we tuned the inner-loop learning rate \texttt{lr} by grid search over $\{2.5\times10^{-4},\,2.5\times10^{-3},\,2.5\times10^{-2},\,2.5\times10^{-1}\}$ and $T_{max}=10^3$. 
For each configuration, we used $30$ independent random seeds (See Figures~\ref{fig:NEOL_comparison_combined} and Figures 6-9 in Appendix).

\noindent\textbf{Experimental Configuration for PPO and SAC.}
For PPO and SAC, we trained each run for up to $10^7$ environment steps per seed. 
To make a fair comparison under the same interaction budget as the evolutionary runs, we also report results at $P\times G \times T_{max}=10^7$ environment steps for NEAT and NEOL
(i.e., $P\in \{50, 100\}, G\in \{100, 200\}$ and $T_{max}=10^3$).
We used $30$ random seeds and report boxplots of the best fitness (episode return) across seeds (See Figure~\ref{fig:NEOL_comparison_combined_RL}).




\subsection{NEAT-NEOL vs NEAT vs RL Baselines}
To systematically evaluate the role of online neural plasticity in evolutionary processes, we conducted a comprehensive comparison against NEAT.
This comparison pitted the standard NEAT method against our NEOL framework, which integrates online plasticity rules (Hebb, Oja, and BCM). 
The results clearly demonstrate that the NEOL framework exhibits substantial advantages across multiple test environments.

\begin{figure}[!ht]
    \centering
    \includegraphics[width=0.49\linewidth]{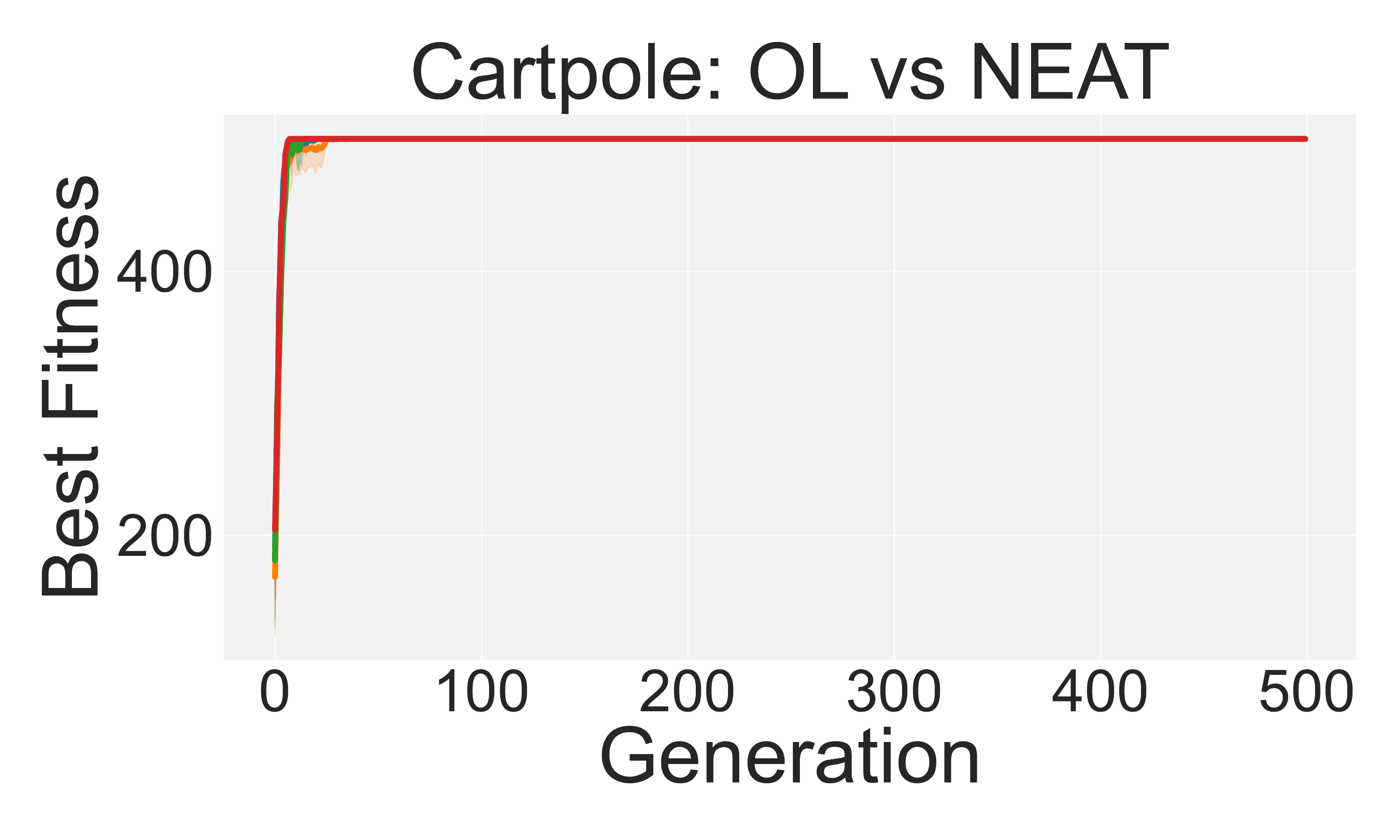}
    \includegraphics[width=0.49\linewidth]{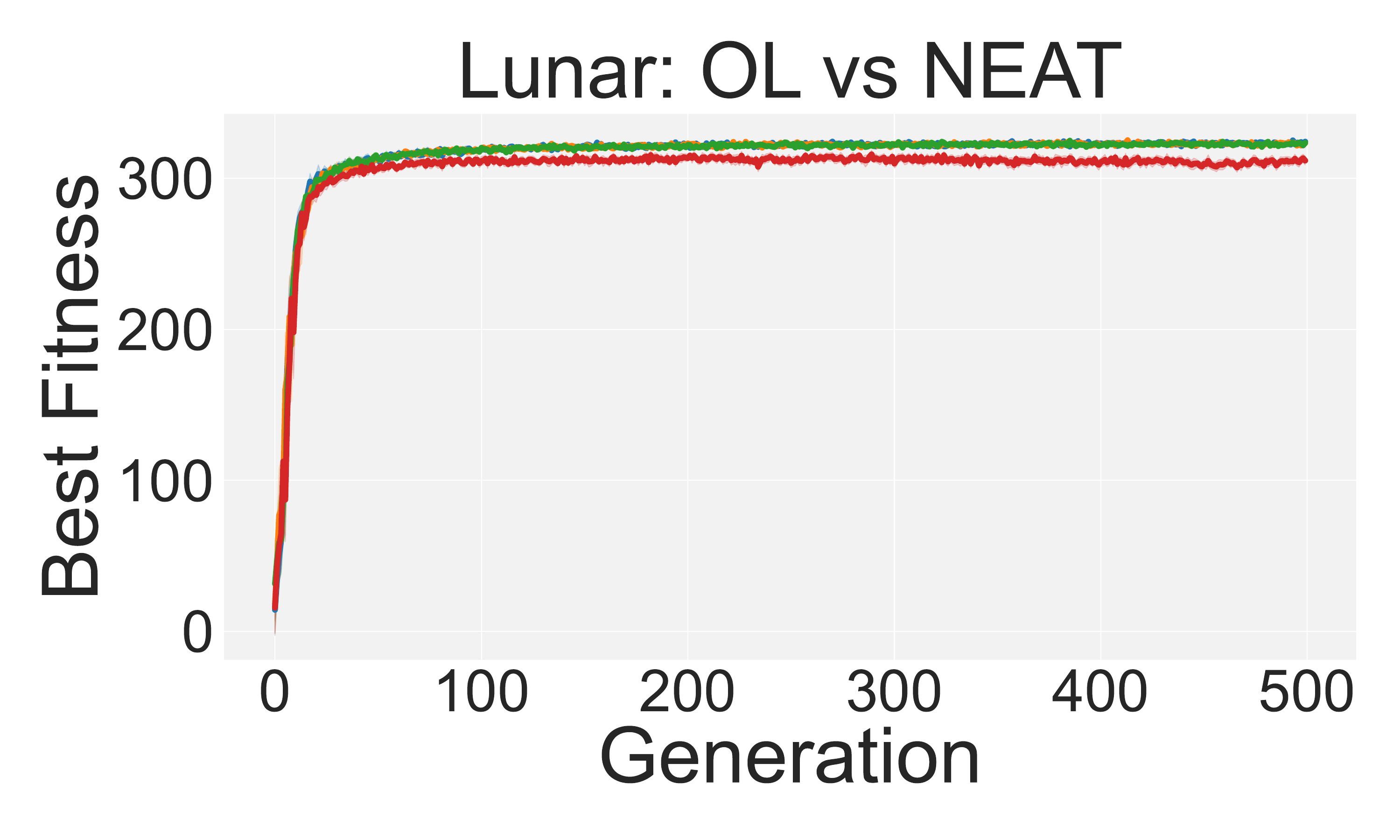}
    \includegraphics[width=0.49\linewidth]{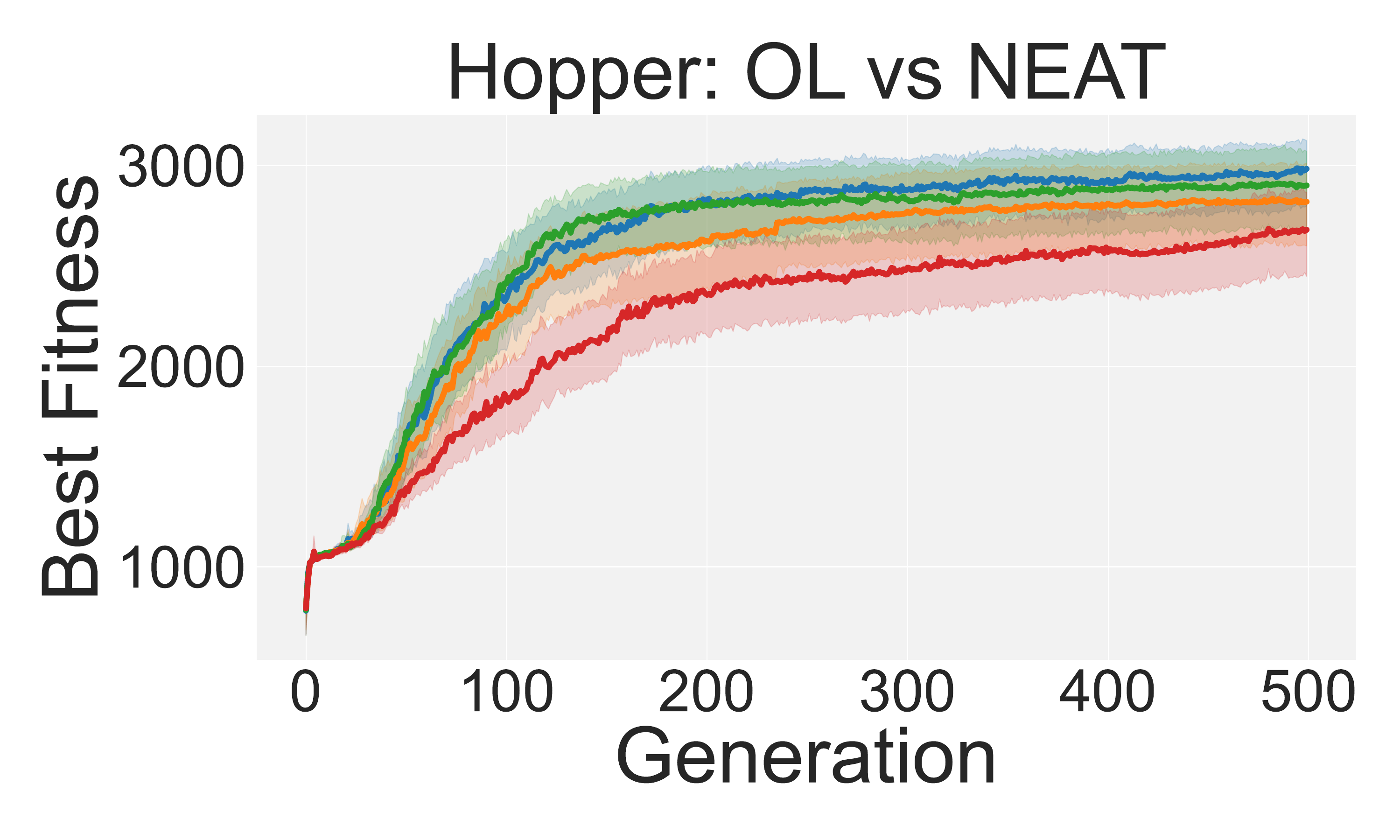}
    \includegraphics[width=0.49\linewidth]{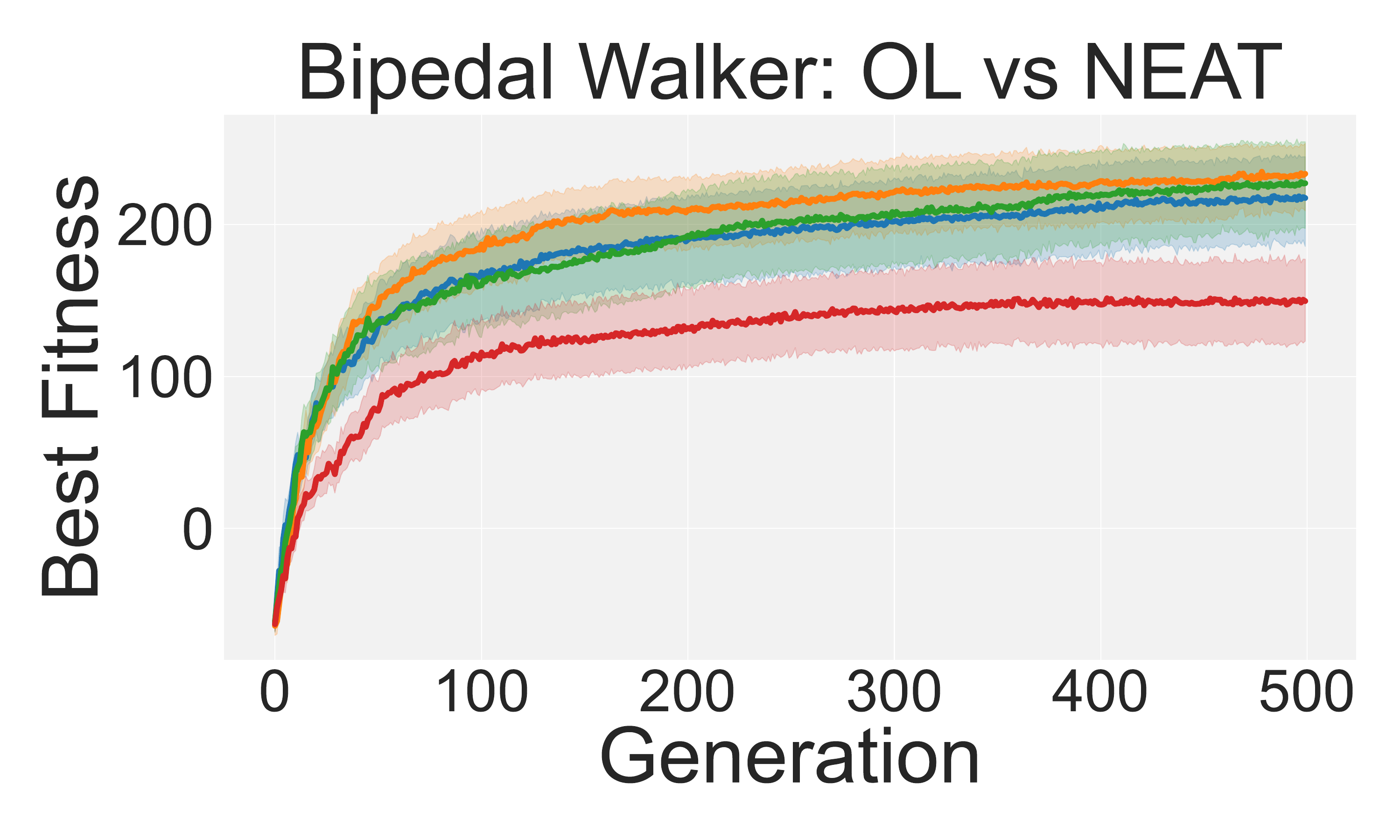}
    
    \includegraphics[width=0.49\linewidth]{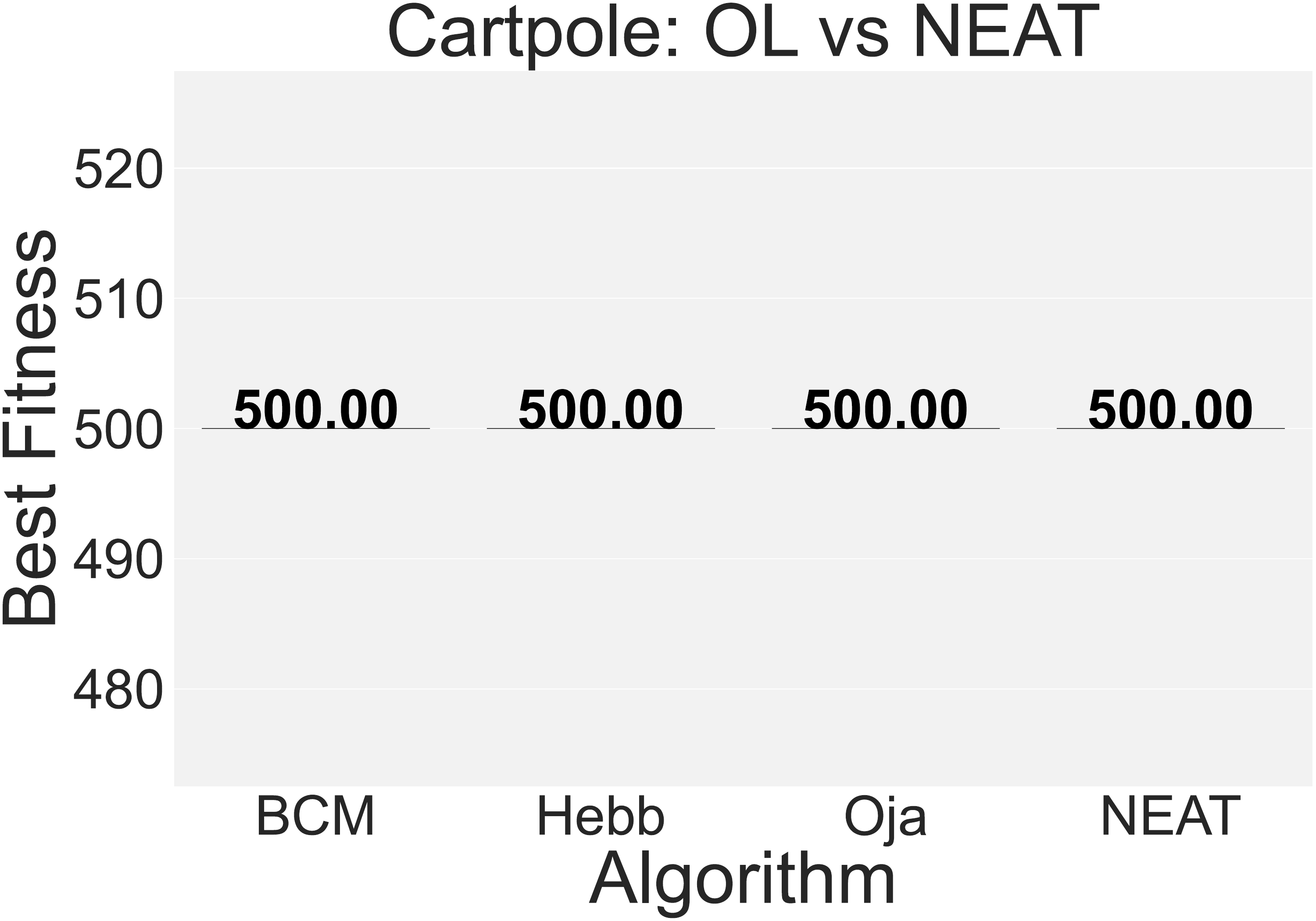}
    \includegraphics[width=0.49\linewidth]{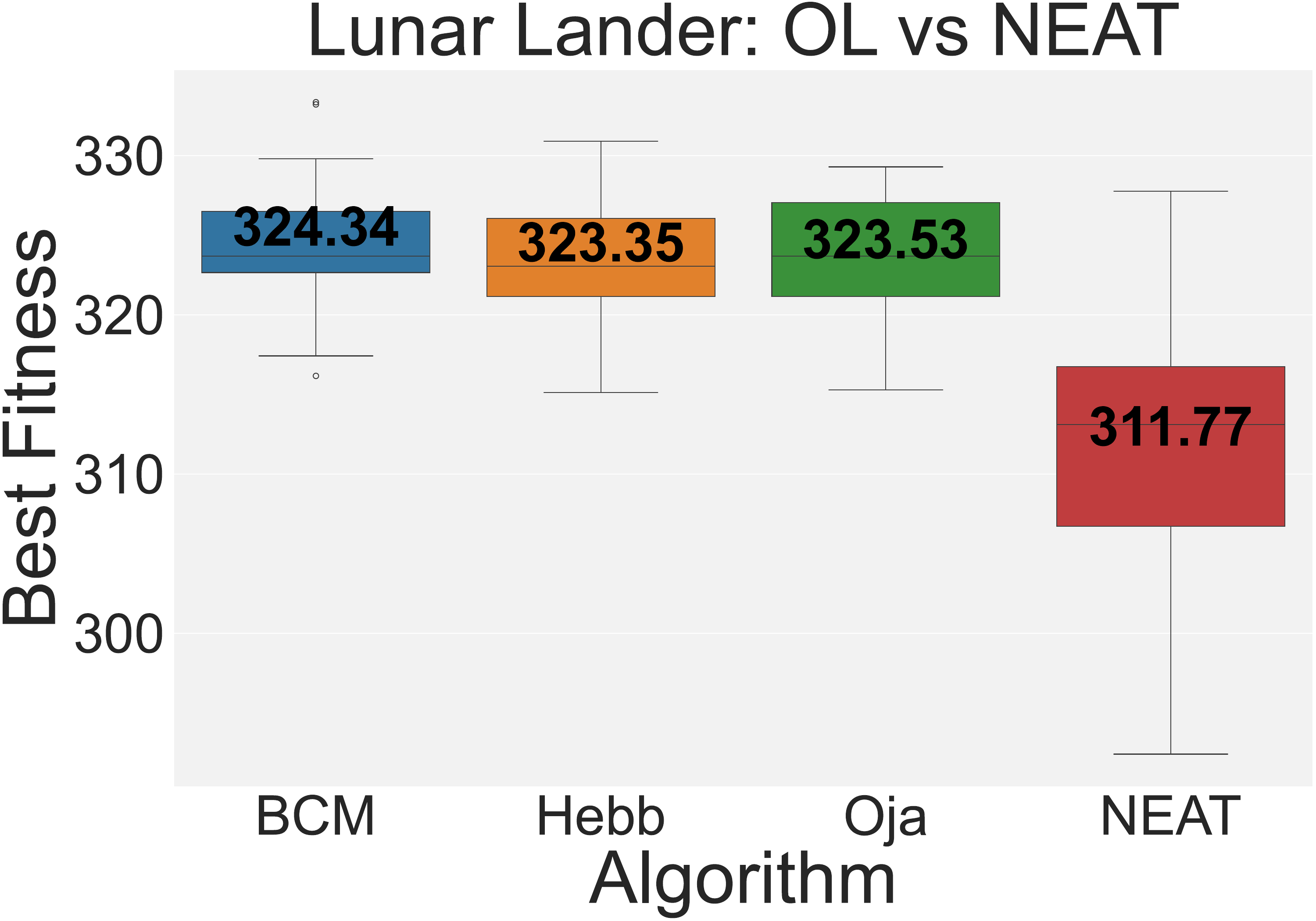}
    \includegraphics[width=0.49\linewidth]{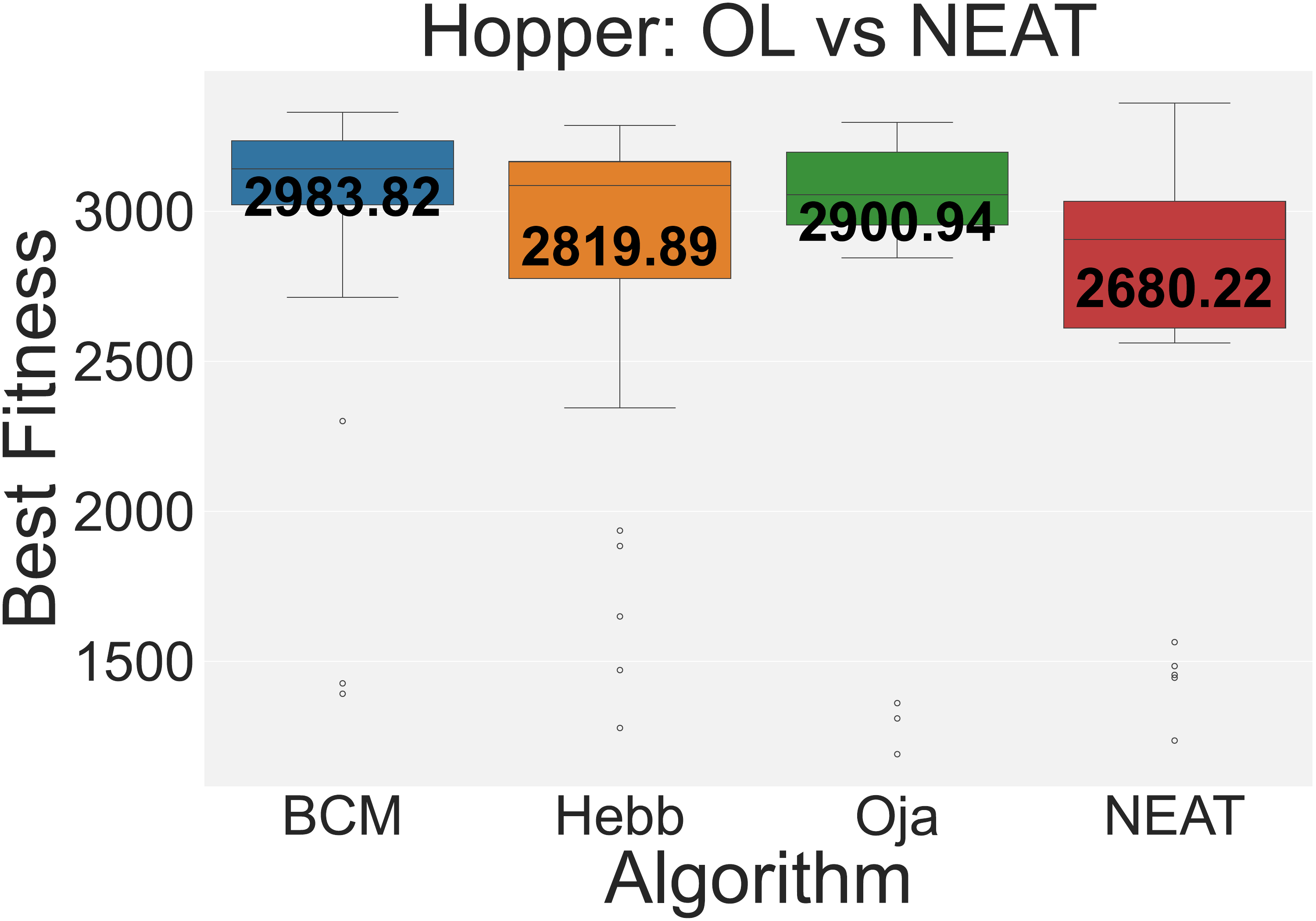}
    \includegraphics[width=0.49\linewidth]{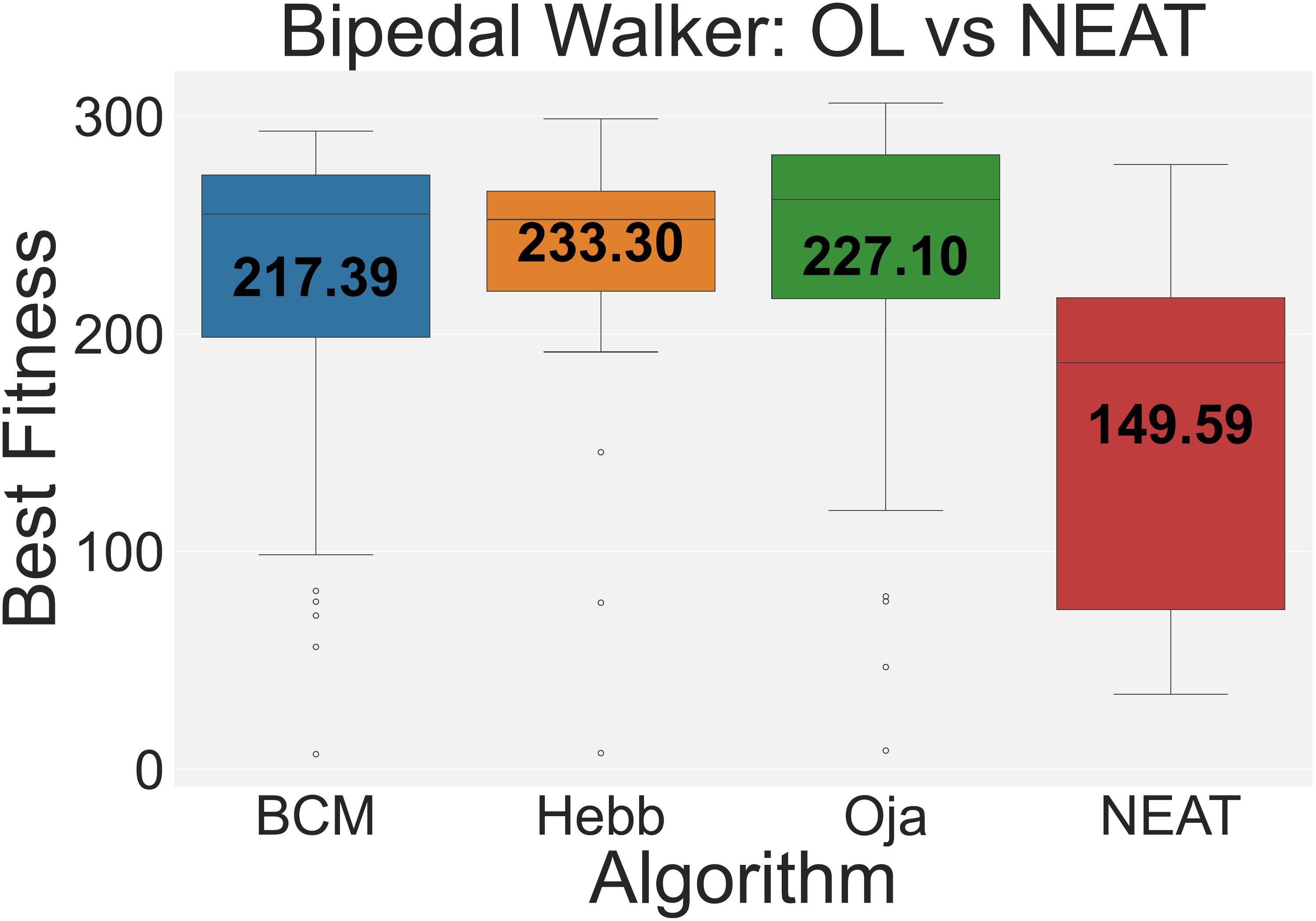}

    \caption{Fitness comparison of NEAT-NEOL with NEAT across four control tasks in 30 independent runs, each task is under a fixed interaction budget $P \times G \times T_{\max}$ environment steps. 
    BCM, Hebb, and Oja learning rules are shown in blue, orange, and green, respectively, while standard NEAT is shown in red.}
    \label{fig:NEOL_comparison_combined}
\end{figure}

\begin{figure}[!ht]
    \centering
    
    \includegraphics[width=0.49\linewidth]{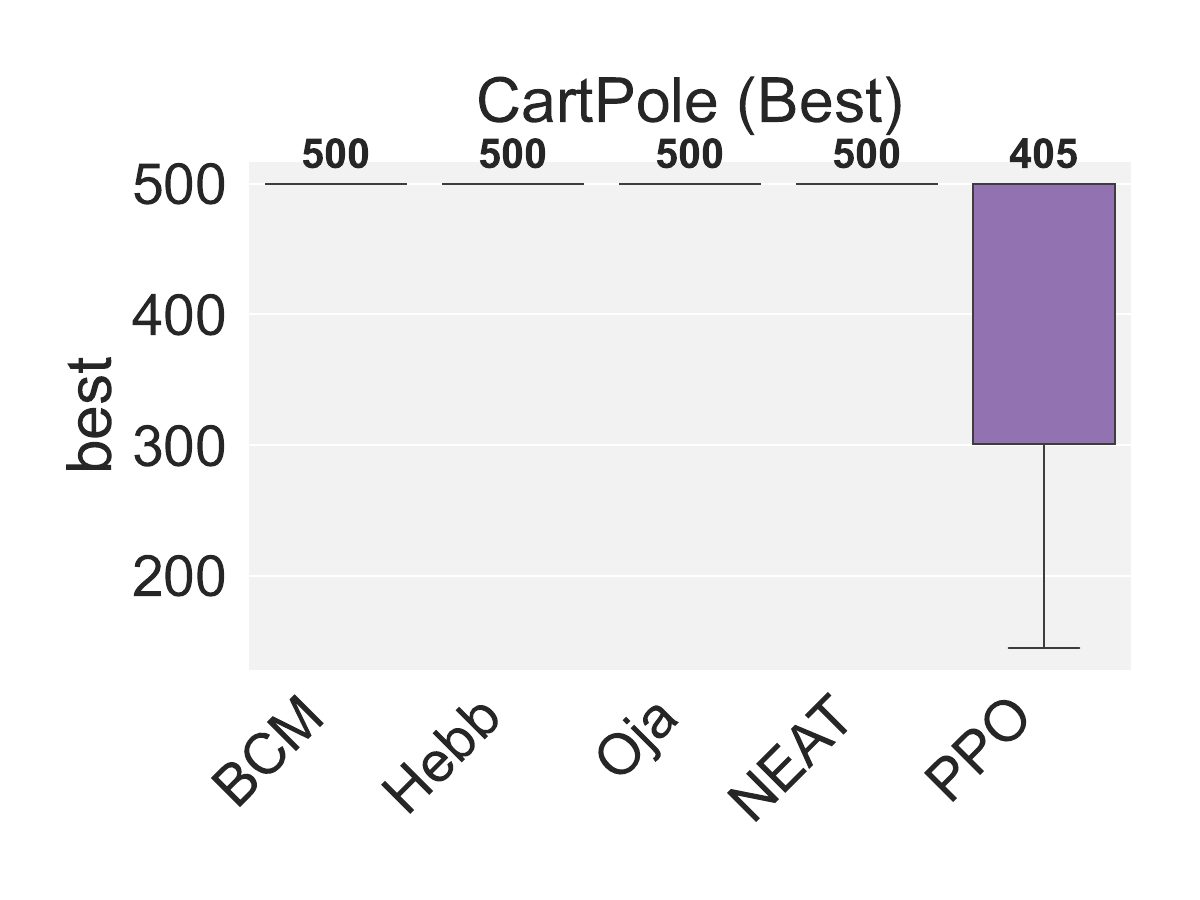}
    \includegraphics[width=0.49\linewidth]{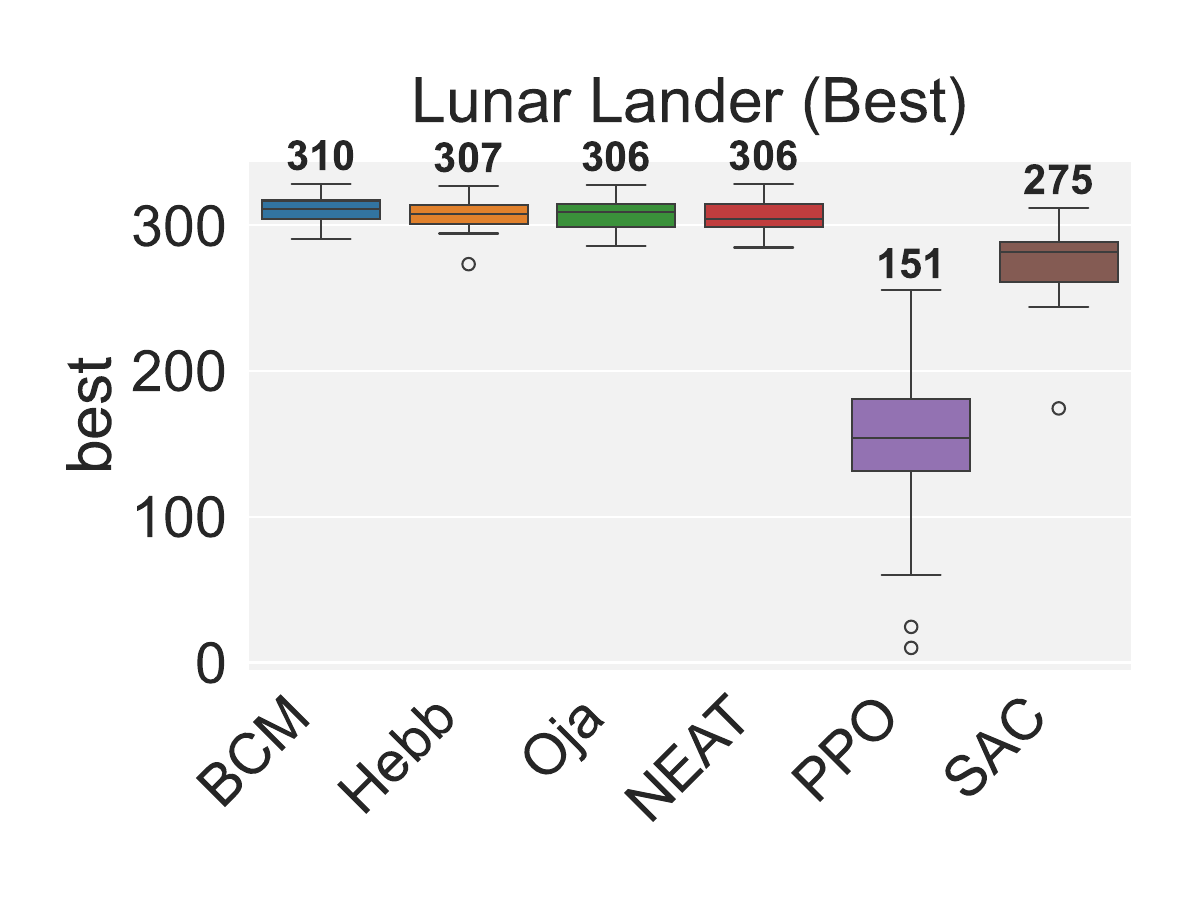}
    \includegraphics[width=0.49\linewidth]{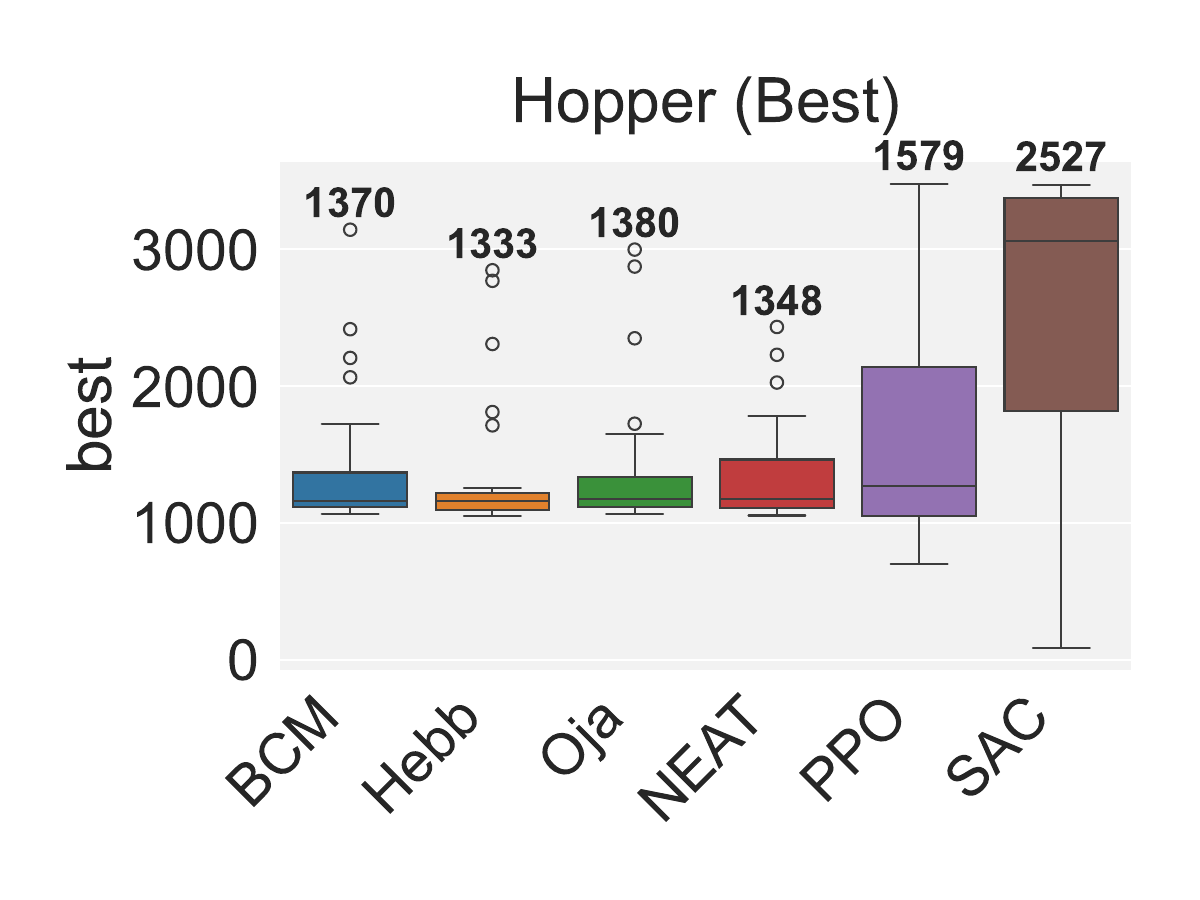}
    \includegraphics[width=0.49\linewidth]{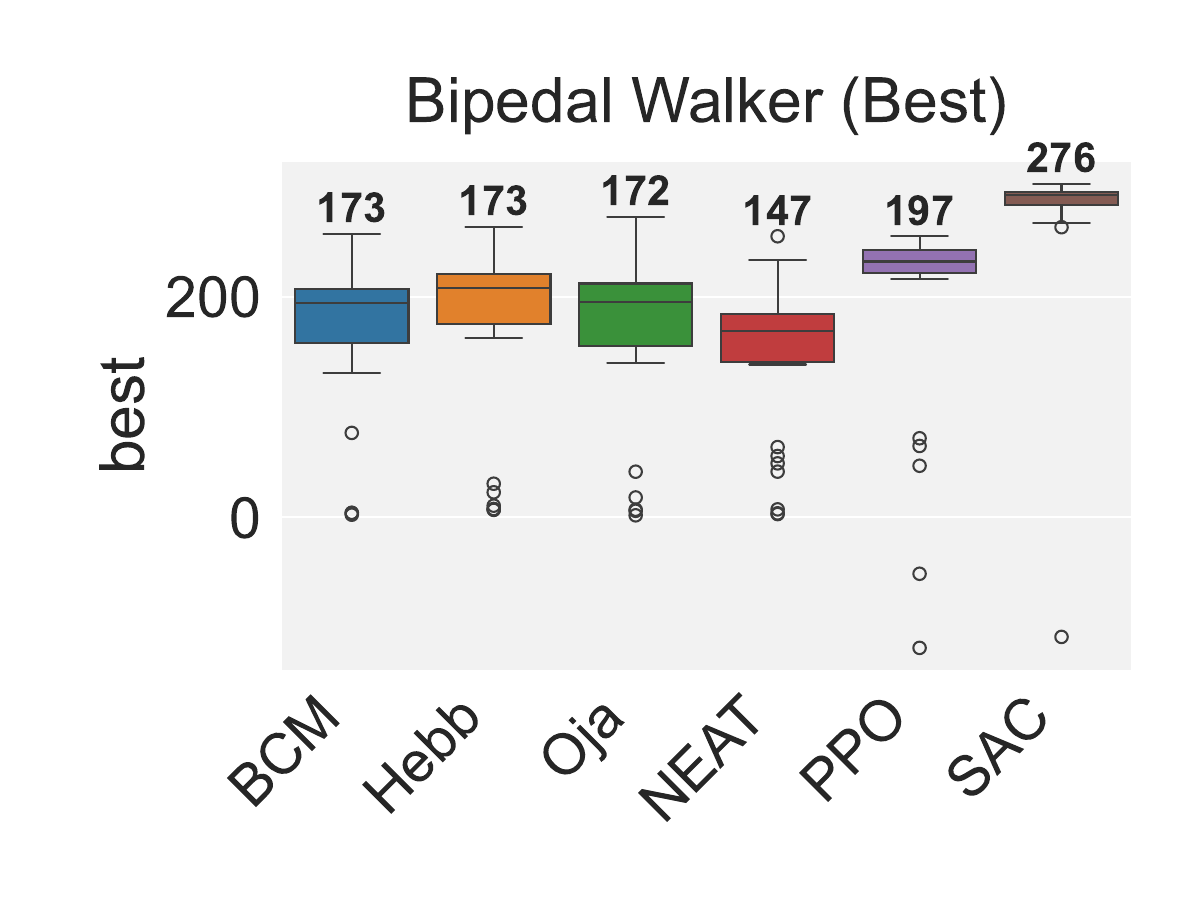}
    
    \caption{Fitness comparison of NEAT-NEOL with NEAT and RLs across four control tasks in 30 independent runs under a fixed interaction budget of 10 million environment steps per seed. 
    For neuroevolution methods, the total budget for each configuration equals $P \times G \times T_{\max}=10^7$; for PPO/SAC, it equals the total training timesteps. 
    Boxplots show episode return aggregated over seeds.}

    \label{fig:NEOL_comparison_combined_RL}
\end{figure}

In the simpler CartPole task, all methods rapidly converge to the maximum fitness, serving as a successful sanity check. 
However, in the more complex environments, significant performance disparities emerge. 
As shown in the convergence plots (Figure~\ref{fig:NEOL_comparison_combined}, top two rows), the NEOL variants consistently achieve higher final fitness scores than standard NEAT. 
The boxplots (Figure~\ref{fig:NEOL_comparison_combined}, bottom two rows) and standard deviations (Table 4) further reveal that while standard NEAT struggles, often resulting in a high variance and numerous low-performing outliers, the NEOL methods achieve a superior median performance. 
Notably, in three continuous control tasks, NEOL not only outperforms NEAT but also exhibits a tighter fitness distribution, indicating higher learning reliability in terms of smaller variance.

The observed performance gains are not coincidental. 
We confirmed their statistical significance using a one-sided Wilcoxon rank-sum test.
The null hypothesis is that the two configurations yield samples from the same distribution; the alternative hypothesis is that the OL method tends to achieve the larger best-fitness than NEAT.
As detailed in Table 5, we reject the null hypothesis at a significance level of $p < 0.05$ for all NEOL variants across all three tasks apart from Cartpole. 
This provides strong evidence that the integration of online learning improves the performance of algorithms.

Next, we compare NEOL with PPO and SAC.
As shown in Figure~\ref{fig:NEOL_comparison_combined_RL}, under the same interaction budget of 10 million (10M) timesteps, NEOL and NEAT outperform PPO and SAC on CartPole and LunarLander, achieving higher fitness with smaller variance, and hence better sample efficiency.
On Hopper and BipedalWalker, PPO and SAC remain strong and outperform both NEAT and NEOL. 
However, NEOL is still more competitive against PPO than standard NEAT.
Overall, these results highlight the potential of NEOL, especially on tasks where fast within-episode adaptation is beneficial.

\subsection{Ablation Studies}

\begin{figure}[!ht]
    \centering
    
    \includegraphics[width=0.48\linewidth]{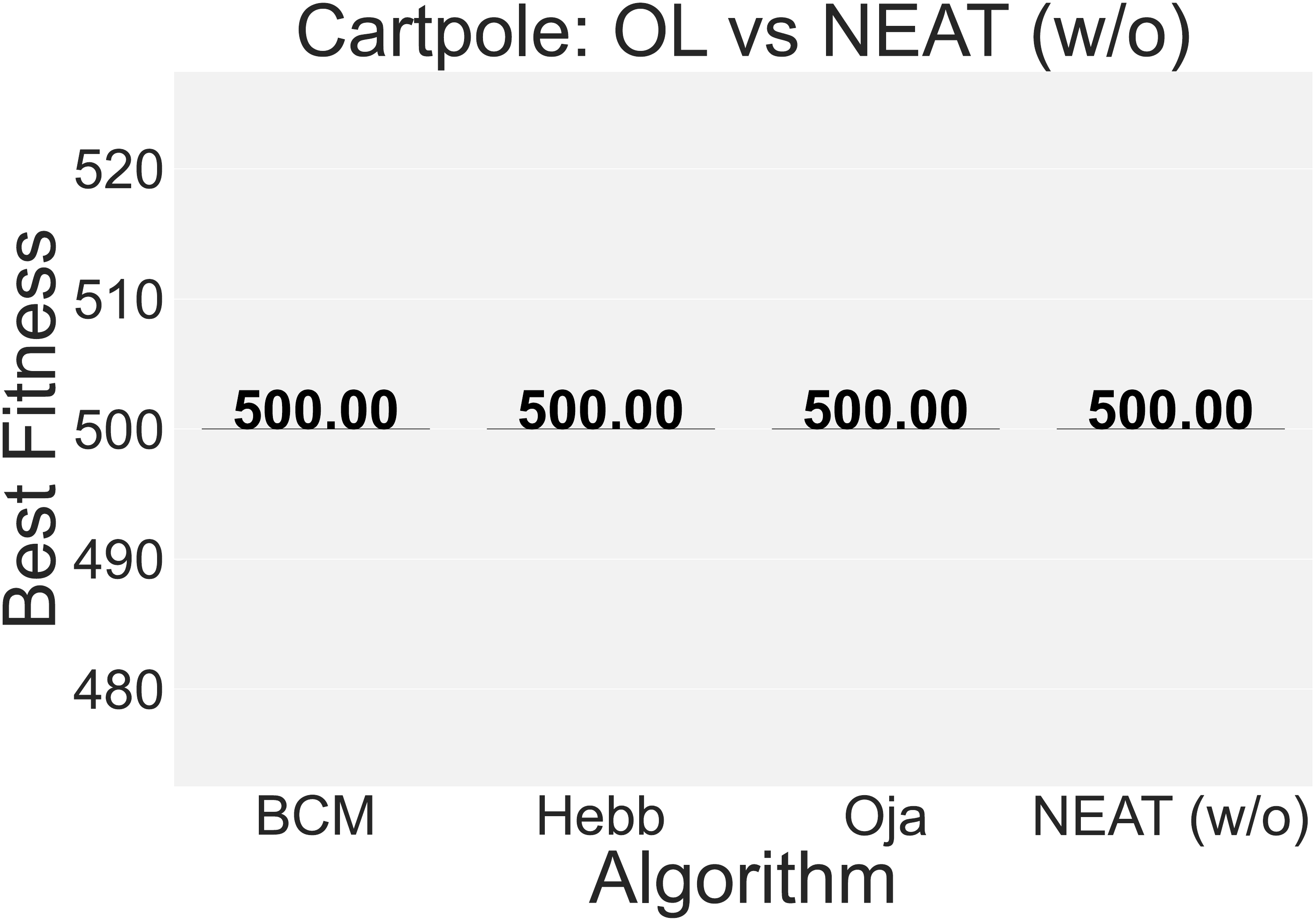}
    \includegraphics[width=0.48\linewidth]{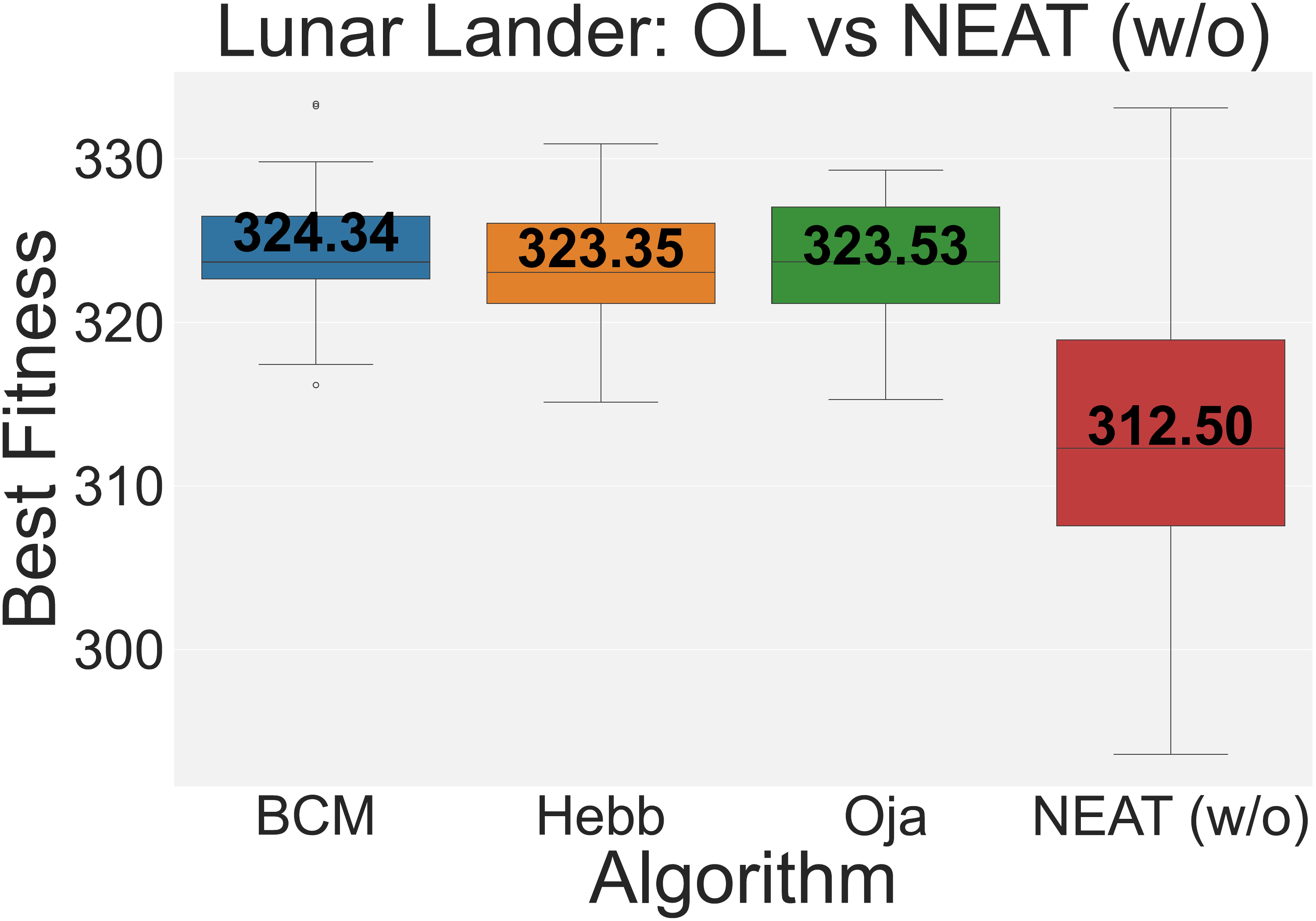}
    \includegraphics[width=0.48\linewidth]{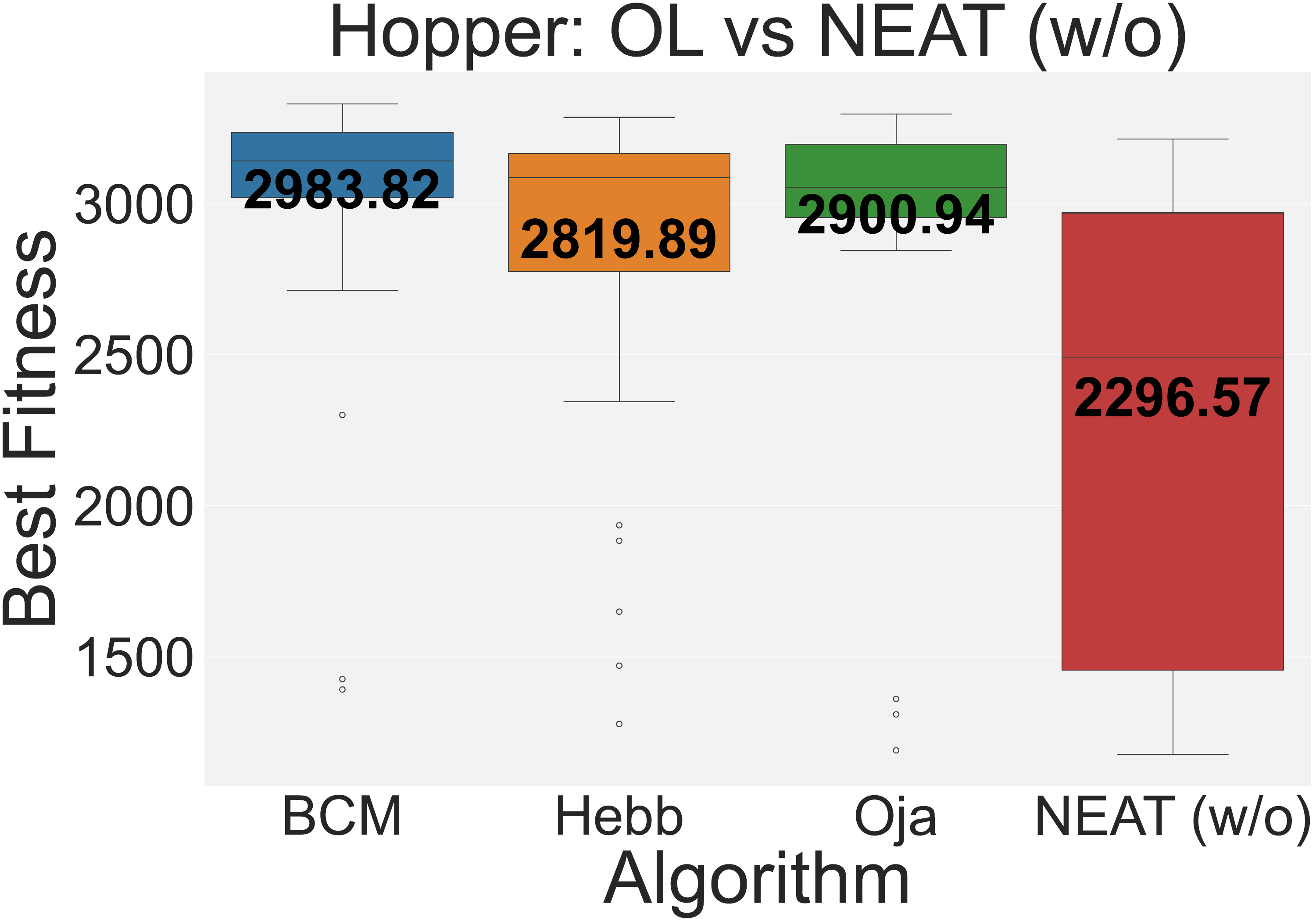}
    \includegraphics[width=0.48\linewidth]{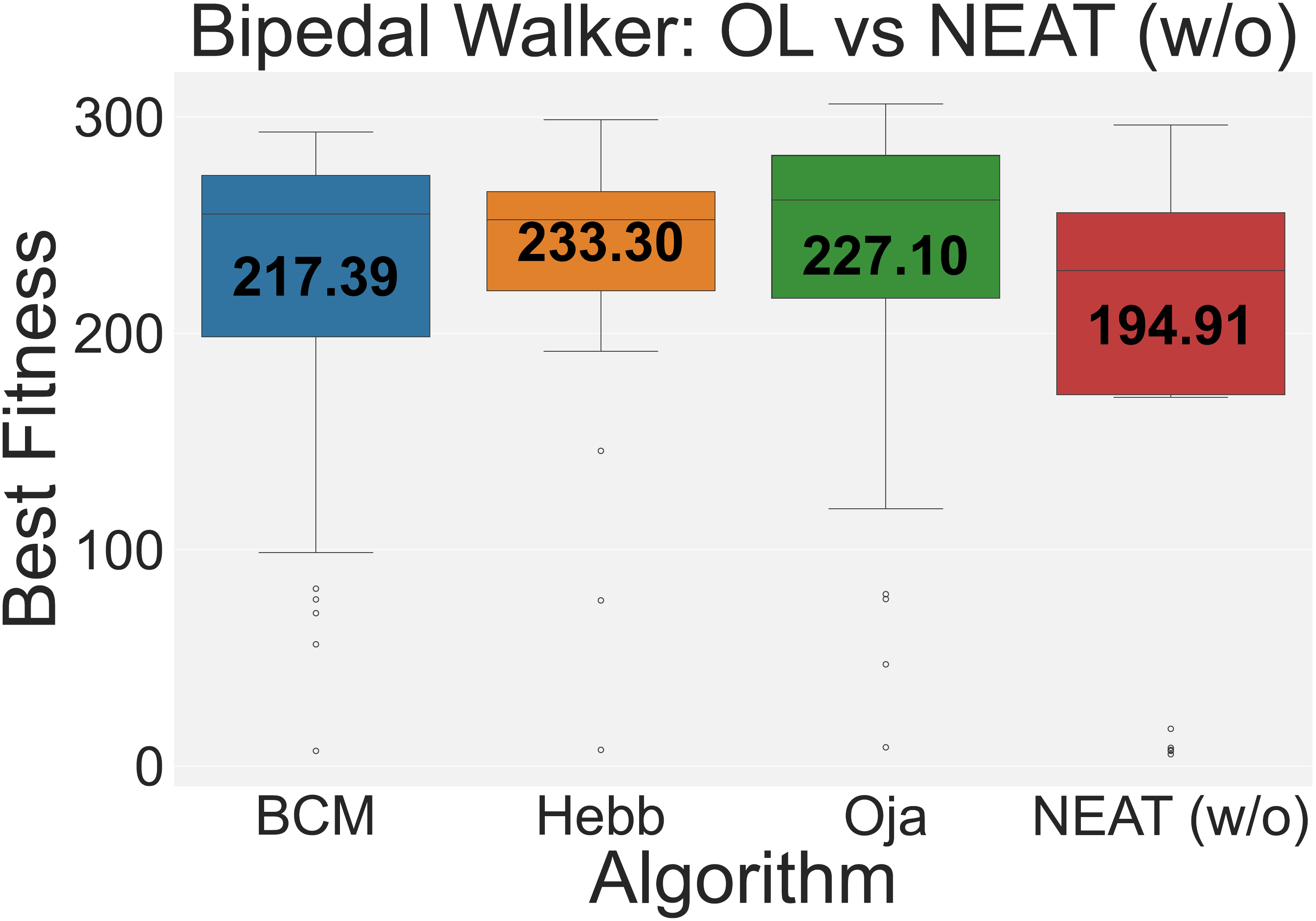}
    
    \caption{Ablation study comparing NEAT-NEOL with NEAT (w/o) across four control tasks in 30 independent runs.
    }
    \label{fig:NEOL_comparison_ablation_combined}
\end{figure}


We ablate the online learning component by disabling weight modulation in NEAT (``NEAT (w/o)''), and all other settings remain unchanged. 
Figure~\ref{fig:NEOL_comparison_ablation_combined} compares the final best-fitness distributions over 30 independent runs on four control tasks.
On CartPole, all methods quickly reach the performance ceiling $500$ as expected, so disabling online learning has no clear effect. 
On Lunar Lander, all NEOL variants (BCM, Hebb, Oja) achieve clearly higher final fitness than NEAT (w/o), with tighter distributions. 
On Hopper, removing online learning results in a significant drop in performance. 
NEAT (w/o) has a much lower median and a wider spread, while the NEOL variants remain stronger overall. 
On Bipedal Walker, NEOL again improves the final fitness compared with NEAT (w/o), and the results are consistent across seeds.
In three continuous control tasks, the boxplots further indicate higher empirical means, higher medians, and tighter interquartile ranges, which support the improvement of online learning via synaptic plasticity.
Overall, these results isolate the contribution of online synaptic plasticity: removing it impairs both the gained fitness and the robustness of learning.



\paragraph{Summary.}

In our two-timescale setup, the outer evolutionary loop searches for good network topologies, while the inner loop uses plasticity rules to adjust weights during online interaction. 
This gives a simple, gradient-free method to assign fitness and fine-tune behaviour within an agent’s lifetime.
The experiment also shows a relatively competitive performance against some strong RL baselines across several tasks.
Moreover, the ablation study supports this across tasks: enabling online plasticity improves final fitness, reduces instability across runs, and leads to better sample efficiency. 
In contrast, removing it (NEAT (w/o)) generally results in weaker final performance and less robust learning. 
Taken together, these findings suggest that online learning via plasticity can improve the performance of NeuroEvolution and make it more robust.

\section{Conclusion and Discussion}

We proposed a general NeuroEvolutionary Online Learning (NEOL) framework, a two-timescale design that separates (i) outer-loop topology/architecture search from (ii) inner-loop, within-episode weight adaptation via reward-modulated local plasticity. 
Using biologically motivated rules (Hebb, Oja, and BCM), NEOL provides a simple, gradient-free mechanism for online weight adaptation. 
Our theoretical analysis shows that the resulting outer-inner loop guarantees an expected sublinear regret under standard boundedness and selection assumptions. 
This serves as a first step towards a more rigorous understanding of these hybrid online learning methods.
Empirically, NEAT-NEOL improves over pure NEAT across four control benchmarks and is relatively competitive with strong RL baselines (PPO/SAC) on several tasks, with notably tighter performance distributions. 
Additional ablation studies and Wilcoxon rank-sum tests further confirm that online plasticity is a key opponent of the improvement. 

Despite these results, NEOL has limitations. 
Theoretically, we only consider the exponential weight update selection for architecture search.
Extending the analysis to broader classes of algorithms presents an exciting and challenging avenue.
Moreover, current theoretical analysis is restricted to the local plasticity rules, including Hebb, Oja and BCM.
We conjecture that sublinear regret is attained for more general local plasticity rules with bounded step sizes. 
Empirically, testing on more diverse environments can strengthen our empirical analysis. 
It is not clear how robust our online weighted adaptation is for more complex environments.
Also, this paper only uses fixed learning rules. 
As the performance algorithm remains sensitive to hyperparameters and task structure, adaptation of learning rules might be the next possible direction.
Together, these future theoretical and empirical extensions would clarify when and why reward-modulated plasticity most effectively enhances neuroevolution, and would move NEOL closer to robust adaptive agents.


\section*{Ethics Statement}
We have no ethical concerns to declare.




\section*{Acknowledgments}
We thank the anonymous reviewers for their helpful reviews.
Shishen is supported by UKRI via grant EP/Y014650/1, as part of the ERC Synergy project OCEAN.
The computations were performed using the University of Warwick high-performance computing (Avon HPC) service.
\section*{Contribution Statement}

Shishen Lin and Yixin Chen contributed equally to this work.
Shishen Lin contributed to the theoretical analysis, experiment design, empirical evaluation and manuscript writing.
Yixin Chen contributed to the algorithm design, implementation, empirical evaluation, and manuscript writing.





\bibliographystyle{named}
\bibliography{ijcai26}

\clearpage
\onecolumn
\section{Appendix}
\appendixtableofcontents

\clearpage
\appendix

\section{Techical Lemmas}

Before proving Lemma~\ref{lem:outer}, we need Hoeffding's lemma. 

\begin{lemma}[Conditional Hoeffding's Lemma~\protect\cite{hoeffding1963probability}]
\label{lem:hoeffding}
Let $\F$ be an $\sigma$-algebra and $Y$ be any random variable such that $Y$ takes values in a bounded interval $[a,b]$. 
Then, for all $\lambda \in \mathbb{R}$, 
\begin{align*}
    \E{e^{\lambda Y} \mid \F} \leq \exp \left(\lambda \E{Y \mid \F}+ \frac{\lambda^2(b-a)^2}{8} \right).
\end{align*}
\end{lemma}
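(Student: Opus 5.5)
We prove the conditional Hoeffding lemma by reducing it to the unconditional convexity argument, applied pointwise with conditional expectations.

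\textbf{Centering.} First reduce to the centered case. Set $\mu := \E{Y \mid \F}$, which is $\F$-measurable and lies in $[a,b]$ almost surely. Since $e^{\lambda\mu}$ is $\F$-measurable, it can be pulled out of the conditional expectation:
\[
\E{e^{\lambda Y}\mid\F} = e^{\lambda\mu}\,\E{e^{\lambda(Y-\mu)}\mid\F}.
\]
It therefore suffices to show $\E{e^{\lambda(Y-\mu)}\mid\F}\le e^{\lambda^2(b-a)^2/8}$ almost surely.

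\textbf{Convexity.} For every $y\in[a,b]$, convexity of $x\mapsto e^{\lambda x}$ gives
\[
e^{\lambda y}\le \frac{b-y}{b-a}e^{\lambda a}+\frac{y-a}{b-a}e^{\lambda b}.
\]
If $a=b$ the claim is trivial, so assume $a<b$. Applying this with $y=Y$, taking conditional expectations, and using linearity and monotonicity of $\E{\cdot\mid\F}$ yields, almost surely,
\[
\E{e^{\lambda Y}\mid\F}\le \frac{b-\mu}{b-a}e^{\lambda a}+\frac{\mu-a}{b-a}e^{\lambda b}.
\]
Multiply by $e^{-\lambda\mu}$ and write $p:=(\mu-a)/(b-a)\in[0,1]$ and $s:=\lambda(b-a)$. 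The right-hand side becomes $e^{\phi(s)}$ with
\[
\phi(s):=-ps+\log\bigl(1-p+pe^{s}\bigr).
\]

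\textbf{The deterministic inequality.} The main step is to show $\phi(s)\le s^2/8$ for all real $s$ and all $p\in[0,1]$. We check $\phi(0)=0$ and $\phi'(0)=-p+p=0$. The second derivative is $\phi''(s)=q(1-q)$ with $q:=pe^{s}/(1-p+pe^{s})\in[0,1]$, so $\phi''(s)\le 1/4$. Taylor's theorem with Lagrange remainder then gives $\phi(s)\le s^2/8$. Because $p$ is a random $\F$-measurable quantity, this bound must hold uniformly in $p$, and it does, so it applies pointwise on $\Omega$.

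\textbf{Conclusion.} Substituting $s=\lambda(b-a)$ gives $\E{e^{\lambda(Y-\mu)}\mid\F}\le e^{\lambda^2(b-a)^2/8}$ almost surely, and combining with the centering step proves the stated inequality. The only subtle point is measure-theoretic: all statements are almost-sure, and $\mu\in[a,b]$ holds a.s. by monotonicity of conditional expectation, so the convexity and Taylor steps are applied on a full-measure event.
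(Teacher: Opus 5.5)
Your proof is correct. It is the classical Hoeffding argument carried out almost surely with conditional expectations: the chord bound for $x\mapsto e^{\lambda x}$ on $[a,b]$, the reduction to $\phi(s)=-ps+\log(1-p+pe^{s})$ with $\phi(0)=\phi'(0)=0$ and $\phi''=q(1-q)\le 1/4$, and the point that this deterministic bound holds uniformly over the $\mathcal{F}$-measurable $p\in[0,1]$. The paper gives no proof of its own and only cites Hoeffding (1963), whose argument is the one you reproduce, so your approach matches the cited source.
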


\begin{lemma}[Non-expansiveness property of Euclidean projection~\protect\cite{Zarantonello1971Proj,Bauschke2017}]
\label{lem:nonexpansive}
For any closed and convex set $\mathcal{V}$, we have for any $a, b \in \mathcal{V}$, 
\begin{align*}
    ||\Pi_\mathcal{V}(a)-\Pi_\mathcal{V}(b)||_2 \leq ||a-b||_2.
\end{align*}
\end{lemma}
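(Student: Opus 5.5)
The plan is to prove the stronger statement in which $a,b\in\mathbb{R}^d$ are arbitrary; the stated case $a,b\in\mathcal{V}$ then follows, and is in fact trivial since $\Pi_{\mathcal{V}}$ fixes points of $\mathcal{V}$. The argument has three steps.

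\textbf{Step 1: the projection is well defined.} Assume $\mathcal{V}$ is nonempty. For fixed $x$, the objective $w\mapsto\frac{1}{2}\|w-x\|_2^2$ is continuous and coercive, so a minimiser exists on the closed set $\mathcal{V}$. It is strictly convex, so on the convex set $\mathcal{V}$ the minimiser is unique. Hence $\Pi_{\mathcal{V}}(x)$ is a single point.

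\textbf{Step 2: variational characterisation.} Writing $p=\Pi_{\mathcal{V}}(x)$, I will show that
\begin{align*}
\langle x-p,\; v-p\rangle \le 0 \quad \text{for all } v\in\mathcal{V}.
\end{align*}
Fix $v\in\mathcal{V}$. By convexity, $p+s(v-p)\in\mathcal{V}$ for every $s\in[0,1]$. Set $\phi(s):=\frac{1}{2}\|p+s(v-p)-x\|_2^2$. Then $\phi$ is minimised over $[0,1]$ at $s=0$, so $\phi'(0)\ge 0$. Since
\begin{align*}
\phi'(0)=\langle p-x,\; v-p\rangle,
\end{align*}
this is exactly the claimed inequality. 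This step is the one that really uses convexity, and I expect it to be the only place needing care. It amounts to a one-sided derivative computation: expand $\phi(s)-\phi(0)=s\langle p-x,v-p\rangle+\frac{s^2}{2}\|v-p\|_2^2\ge 0$, divide by $s>0$, and let $s\downarrow 0$.

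\textbf{Step 3: combine the two inequalities.} Let $p=\Pi_{\mathcal{V}}(a)$ and $q=\Pi_{\mathcal{V}}(b)$. Apply Step 2 with $x=a$ and $v=q$, and again with $x=b$ and $v=p$:
\begin{align*}
\langle a-p,\; q-p\rangle \le 0, \qquad \langle b-q,\; p-q\rangle \le 0.
\end{align*}
Adding them gives $\langle (a-b)-(p-q),\; q-p\rangle\le 0$. Rearranging and using Cauchy--Schwarz,
\begin{align*}
\|p-q\|_2^2 \le \langle a-b,\; p-q\rangle \le \|a-b\|_2\,\|p-q\|_2 .
\end{align*}
If $p=q$ the claim is immediate. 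Otherwise divide by $\|p-q\|_2>0$ to get $\|\Pi_{\mathcal{V}}(a)-\Pi_{\mathcal{V}}(b)\|_2\le\|a-b\|_2$.

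The intermediate inequality $\|p-q\|_2^2\le\langle a-b,p-q\rangle$ (firm non-expansiveness) is worth recording. It is the form one would actually use, together with $\Pi_{\mathcal{W}_h}(w)=w$ for $w\in\mathcal{W}_h$, in the standard projected online gradient argument behind Lemma~\ref{lem:inner}.
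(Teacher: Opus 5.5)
Your proof is correct. The paper never proves this lemma; it only cites it. Your route is the standard textbook argument: well-definedness, then the variational inequality $\langle x-\Pi_{\mathcal{V}}(x),\,v-\Pi_{\mathcal{V}}(x)\rangle\le 0$ for $v\in\mathcal{V}$, then adding two instances to get firm non-expansiveness, and finishing with Cauchy--Schwarz.

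The point worth emphasising is your choice to prove the claim for arbitrary $a,b\in\mathbb{R}^d$. As literally stated, with $a,b\in\mathcal{V}$, the lemma is trivial: both sides coincide because $\Pi_{\mathcal{V}}$ is the identity on $\mathcal{V}$. Yet in the proof of Lemma~\ref{lem:inner} it is applied with $a=w_t+\eta u_t$, which generally lies outside $\mathcal{W}_h$. Your general version is therefore what that step actually requires, and the paper's restricted statement does not justify it.

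For that application only the case $b\in\mathcal{V}$ is needed, and it already follows from your Step~2 without Step~3. With $p=\Pi_{\mathcal{V}}(a)$, $\|a-b\|_2^2=\|a-p\|_2^2+\|p-b\|_2^2-2\langle a-p,\,b-p\rangle\ge\|p-b\|_2^2$. So, contrary to your closing remark, the projected-gradient argument uses only plain non-expansiveness against the comparator $w^\star\in\mathcal{W}_h$. Firm non-expansiveness is a stronger bonus, not the form needed there.
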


\section{Theoretical Analysis: Two-Level Regret for the Abstraction of NEOL}
\label{app:toy_theory_neol}

\subsection{Proofs of two standard regret lemmas}
\label{app:toy_lemmas}

\Lemmaone*

Our proof follows a standard regret bound for projected (sub)gradient methods in online convex optimisation \cite{Zinkevich2003OCP}, but instead, we consider a two-timescale setting.
First, we fix $h$ and bound the corresponding inner regret. 

\begin{proof}[Proof of Lemma~\ref{lem:inner}]
Fix any $h\in\mathcal{H}$ and abbreviate
\[
\mathcal{W}:=\mathcal{W}_h,\quad
w_t:=w_t^{(h)},\quad
u_t:=u_t^{(h)},\quad
L_t(\cdot):=L_t^{(h)}(\cdot).
\]
By definition,
\[
L_t(w) = -\langle w, u_t\rangle
\]
is linear (hence convex) in $w$, and a (sub)gradient at any $w$ is
\[
g_t \in \partial L_t(w) = \{-u_t\}.
\]
The projected update \eqref{eq:generic_local_update} can therefore be written as the standard projected subgradient step:
\[
w_{t+1}
= \Pi_{\mathcal{W}}\!\big(w_t + \eta u_t\big)
= \Pi_{\mathcal{W}}\!\big(w_t - \eta g_t\big).
\]

Let
\[
w^\star \in \arg\min_{w\in\mathcal{W}} \sum_{t=1}^T L_t(w)
\]
be an optimal comparator.

Recall Lemma~\ref{lem:nonexpansive} holds on $\mathcal{W}$, which is convex and compact (hence, closed and bounded).

For the first equality, note that from defintion of Euclidean projection, we have for any $w \in \mathcal{W}, \Pi_{\mathcal{W}}(w)=w$ (as the Euclidean projection operator is the identity map in $\mathcal{W}$).
\begin{align}
\|w_{t+1}-w^\star\|_2^2
&= \big\|\Pi_{\mathcal{W}}(w_t+\eta u_t) - \Pi_{\mathcal{W}}(w^\star)\big\|_2^2 \nonumber\\ \intertext{Using Lemma~\ref{lem:nonexpansive} onto a convex and compact set $\mathcal{W}$, for any $t$,}
&\le \| (w_t+\eta u_t) - w^\star\|_2^2 \nonumber\\
&= \|w_t-w^\star\|_2^2 + 2\eta\langle u_t, w_t-w^\star\rangle + \eta^2\|u_t\|_2^2.
\label{eq:proj_expand}
\end{align}
Rearranging \eqref{eq:proj_expand} gives
\begin{equation}
- \langle u_t, w_t-w^\star\rangle
\le
\frac{\|w_t-w^\star\|_2^2-\|w_{t+1}-w^\star\|_2^2}{2\eta}
+
\frac{\eta}{2}\|u_t\|_2^2.
\label{eq:one_step_bound}
\end{equation}
Now observe that
\begin{align*}
L_t(w_t)-L_t(w^\star)
&= -\langle w_t,u_t\rangle + \langle w^\star,u_t\rangle \\ \intertext{Using linearity of inner product gives}
&= \langle w^*-w_t, u_t \rangle  \\ \intertext{Using the fact that the inner product is commutative over $\mathbb{R}$ gives}
&= \langle u_t, w^*-w_t \rangle \\
&= -\langle u_t, w_t-w^\star\rangle.
\end{align*}
Combining this identity with \eqref{eq:one_step_bound} yields, for every $t$,
\[
L_t(w_t)-L_t(w^\star)
\le
\frac{\|w_t-w^\star\|_2^2-\|w_{t+1}-w^\star\|_2^2}{2\eta}
+
\frac{\eta}{2}\|u_t\|_2^2.
\]
Summing over $t=1,\dots,T$ on the left-hand side:
\begin{align*}
\sum_{t=1}^T \big(L_t(w_t)-L_t(w^\star)\big)
&\le
\frac{\|w_1-w^\star\|_2^2-\|w_{T+1}-w^\star\|_2^2}{2\eta}
+
\frac{\eta}{2}\sum_{t=1}^T\|u_t\|_2^2\\
&\le
\frac{\|w_1-w^\star\|_2^2}{2\eta}
+
\frac{\eta}{2}\sum_{t=1}^T\|u_t\|_2^2,
\end{align*}
since $\|w_{T+1}-w^\star\|_2^2\ge 0$. Finally, because $w_1,w^\star\in\mathcal{W}$ and $\mathrm{diam}(\mathcal{W})\le D$ (Assumption~3),
\[
\|w_1-w^\star\|_2 \le D
\quad\Rightarrow\quad
\|w_1-w^\star\|_2^2 \le D^2,
\]
which proves:
\[
\sum_{t=1}^T L_t(w_t) - \min_{w\in\mathcal{W}}\sum_{t=1}^T L_t(w)
\le
\frac{D^2}{2\eta} + \frac{\eta}{2}\sum_{t=1}^T \|u_t\|_2^2.
\]

If additionally $\|u_t\|_2\le U$ for all $t$ (Assumptions 2-4 imply the local plasticity update is bounded by some constant $U$ w.r.t $T$), then $\sum_{t=1}^T\|u_t\|_2^2 \le U^2T$, giving 
\begin{align*}
    \sum_{t=1}^T L_t^{(h)}\!\big(w_t^{(h)}\big)
- \min_{w\in\mathcal{W}_h}\sum_{t=1}^T L_t^{(h)}(w)
\;\le\; \frac{D^2}{2\eta} + \frac{\eta U^2 T}{2}.
\end{align*}
Choosing $\eta = D/(U\sqrt{T})$ yields
\[
\frac{D^2}{2\eta} + \frac{\eta U^2T}{2}
=
\frac{D^2}{2}\cdot\frac{U\sqrt{T}}{D} + \frac{1}{2}\cdot\frac{D}{U\sqrt{T}}\cdot U^2T
=
DU\sqrt{T}.
\]
This completes the proof.
\end{proof}

\Lemmatwo*

The guarantee holds for any realised loss sequence in $[0,1]$.

\begin{proof}[Proof of Lemma~\ref{lem:outer}]
We follow the standard exponential-weights (Hedge) analysis in
\cite{freund1997decision}.
For this lemma we work in the full-information setting (Assumption~7) with bounded losses
$\ell_t(h)\in[0,1]$ (Assumption~2), where we abbreviate
\[
\ell_t(h) := \ell_t\big(h, w_t^{(h)}\big).
\]

Let us define unnormalised weights $q_t(h)\ge 0$ and normaliser $Q_t := \sum_{h\in\mathcal{H}} q_t(h)$.
Define
\[
p_t(h) := \frac{q_t(h)}{Q_t},
\qquad
q_{t+1}(h) := q_t(h)\exp\!\big(-\gamma\,\ell_t(h)\big),
\]
which is equivalent to \eqref{eq:ew_update}.
Assume uniform initialisation $q_1(h)=1$ for all $h$, so $Q_1=\sum_{h\in \mathcal{H}}q_1(h)= M$ (Assumption 1).

By the update,
\begin{align*}
Q_{t+1}
&= \sum_{h} q_{t+1}(h)
= \sum_{h} q_t(h)\exp\!\big(-\gamma\,\ell_t(h)\big)
= Q_t \sum_{h} p_t(h)\exp\!\big(-\gamma\,\ell_t(h)\big),
\end{align*}
hence
\begin{align*}
\log\frac{Q_{t+1}}{Q_t}
= \log\left(\sum_{h} p_t(h)\exp\!\big(-\gamma\,\ell_t(h)\big)\right) = \log \left(\mathbb{E}_{h\sim p_t}[e^{-\gamma\,\ell_t(h)} \mid \F_t] \right).
\end{align*}
Now, we want to bound the log-expectation part.
We use Lemma~\ref{lem:hoeffding} (Hoeffding's lemma) with $Y=-X$ (note $Y=-\ell_t(h) \in [-1,0]$) and take $\log$ on both sides.
\begin{equation}
\label{eq:hoeffding_lemma}
\log\left(\mathbb{E}\left[e^{-\gamma X} \mid \F_t\right]\right)
\le -\gamma\,\mathbb{E}[X\mid \F_t] + \frac{\gamma^2}{8},
\qquad \text{for any random variable } X\in[0,1].
\end{equation}
Applying \eqref{eq:hoeffding_lemma} to the random variable $X=\ell_t(h)$ under $h\sim p_t$
gives
\begin{equation}
\label{eq:upper_log_ratio}
\log\frac{Q_{t+1}}{Q_t}
\le
-\gamma\,\mathbb{E}_{h\sim p_t}[\ell_t(h)\mid \F_t] + \frac{\gamma^2}{8}.
\end{equation}
Summing \eqref{eq:upper_log_ratio} over $t=1,\dots,T$ yields
\begin{equation}
\label{eq:upper_sum_logQ}
\log\frac{Q_{T+1}}{Q_1}
\le
-\gamma\sum_{t=1}^T \mathbb{E}_{h\sim p_t}[\ell_t(h)\mid \F_t]
+ \frac{\gamma^2T}{8}.
\end{equation}

For any fixed $h^\star\in\mathcal{H}$, since $q_{T+1}(h) \geq 0$ for all $h\in \mathcal{H}$, we have
\[
Q_{T+1} = \sum_h q_{T+1}(h) \ge q_{T+1}(h^\star).
\]
Using the multiplicative updates,
\[
q_{T+1}(h^\star)
=
q_1(h^\star)\exp\!\left(-\gamma\sum_{t=1}^T \ell_t(h^\star)\right).
\]
With $q_1(h^\star)=1$ and taking log on both sides, we get
\begin{equation}
\label{eq:lower_logQ}
\log Q_{T+1}
\ge
-\gamma\sum_{t=1}^T \ell_t(h^\star).
\end{equation}

Since $Q_1=M$, we have $\log\frac{Q_{T+1}}{Q_1} = \log Q_{T+1} - \log M$.
Combining \eqref{eq:upper_sum_logQ} and \eqref{eq:lower_logQ} yields
\begin{align*}
-\gamma\sum_{t=1}^T \ell_t(h^\star) - \log M 
\overset{(7)}{\leq} \log Q_{T+1} - \log M = \log \frac{Q_{T+1}}{Q_1} \overset{(8)}{\leq} -\gamma\sum_{t=1}^T \mathbb{E}_{h\sim p_t}[\ell_t(h)\mid \F_t] + \frac{\gamma^2T}{8}.
\end{align*}
In short, we have
\[
-\gamma\sum_{t=1}^T \ell_t(h^\star) - \log M
\le
-\gamma\sum_{t=1}^T \mathbb{E}_{h\sim p_t}[\ell_t(h)\mid \F_t]
+ \frac{\gamma^2T}{8}.
\]
Rearranging gives, for any $h^\star\in\mathcal{H}$,
\[
\sum_{t=1}^T \mathbb{E}_{h\sim p_t}[\ell_t(h)\mid \F_t]
-\sum_{t=1}^T \ell_t(h^\star)
\le
\frac{\log M}{\gamma} + \frac{\gamma T}{8}.
\]
Taking $h^\star \in \arg\min_{h\in\mathcal{H}}\sum_{t=1}^T \ell_t(h)$ proves 

\begin{align*}
\sum_{t=1}^T \mathbb{E}_{h_t\sim p_t}\!\left[\ell_t\big(h_t,w_t^{(h_t)}\big)\mid \F_t\right]- \min_{h\in\mathcal{H}} \sum_{t=1}^T \ell_t\big(h,w_t^{(h)}\big)  \le \frac{\log M}{\gamma} + \frac{\gamma T}{8}.
\end{align*}

Finally, choosing $\gamma=\sqrt{8\log M / T}$ yields
\[
\frac{\log M}{\gamma} + \frac{\gamma T}{8}
=
\sqrt{\frac{T\log M}{8}} + \sqrt{\frac{T\log M}{8}}
=
\sqrt{\frac{T\log M}{2}},
\]
which proves:
\begin{align*}
    \sum_{t=1}^T \mathbb{E}_{h_t\sim p_t}\!\left[\ell_t\big(h_t,w_t^{(h_t)}\big)\mid \F_t\right]
- \min_{h\in\mathcal{H}} \sum_{t=1}^T \ell_t\big(h,w_t^{(h)}\big)
\;\le\; \sqrt{\frac{T\log M}{2}}.
\end{align*}
\end{proof}

\subsection{Proof of Main Theorem: Two-Level Regret Bound}
\label{thm:toy_theorem}

\Mainthm*


\begin{proof}
Let $(h^\star,w^\star)\in\arg\min_{h\in\mathcal{H}}\min_{w\in\mathcal{W}_h}\sum_{t=1}^T \ell_t(h,w)$ be a hindsight-optimal pair.
Recall the regret definition
\[
\mathbb{E}[R_T]
=
\mathbb{E}\Bigg[
\sum_{t=1}^T \ell_t\big(h_t,w_t^{(h_t)}\big)
-\min_{h\in\mathcal{H}}\min_{w\in\mathcal{W}_h}\sum_{t=1}^T \ell_t(h,w)
\Bigg].
\]
Since $\min_{h,w}\sum_{t=1}^T \ell_t(h,w)=\sum_{t=1}^T \ell_t(h^\star,w^\star)$, we can write
\begin{align*}
\mathbb{E}[R_T]
&=
\mathbb{E}\Bigg[
\sum_{t=1}^T \ell_t\big(h_t,w_t^{(h_t)}\big)
-\sum_{t=1}^T \ell_t(h^\star,w^\star)
\Bigg] \\
&=
\mathbb{E}\Bigg[
\underbrace{\sum_{t=1}^T \ell_t\big(h_t,w_t^{(h_t)}\big)
-\sum_{t=1}^T \ell_t\big(h^\star,w_t^{(h^\star)}\big)}_{\text{outer regret}}
+
\underbrace{\sum_{t=1}^T \ell_t\big(h^\star,w_t^{(h^\star)}\big)
-\sum_{t=1}^T \ell_t(h^\star,w^\star)}_{\text{inner regret for $h^*$}}
\Bigg].
\end{align*}

First, we consider the outer regret.
For each round $t$, abbreviate $\ell_t(h):=\ell_t\big(h,w_t^{(h)}\big)$.
Note that
\[
\mathbb{E}\!\left[\ell_t\big(h_t,w_t^{(h_t)}\big)\right]
=
\mathbb{E}\!\left[\ell_t(h_t)\right]
=
\sum_{h\in\mathcal{H}} p_t(h)\,\ell_t(h)
=
\mathbb{E}_{h_t\sim p_t}[\ell_t(h_t)].
\]

Let $\mathcal{F}_t$ denote the $\sigma$-algebra generated by all randomness up to the start of round $t$ (in particular, it contains the realised loss vector $\{\ell_t(h)\}_{h\in\mathcal{H}}$)).
Therefore, by the tower property,
\begin{align*}
\mathbb{E}\Bigg[\sum_{t=1}^T \ell_t\big(h_t,w_t^{(h_t)}\big)\Bigg]
=\mathbb{E}\Bigg[\sum_{t=1}^T \mathbb{E}\!\left[\ell_t\big(h_t,w_t^{(h_t)}\big)\mid \F_t \right] \Bigg] 
=\mathbb{E}\Bigg[\sum_{t=1}^T \mathbb{E}_{h_t\sim p_t}[\ell_t(h_t)\mid \F_t ]\Bigg].
\end{align*}

Applying Lemma~\ref{lem:outer} conditionally on the realised losses yields
\[
\sum_{t=1}^T \mathbb{E}_{h_t\sim p_t}[\ell_t(h_t) \mid \F_t]
-\min_{h\in\mathcal{H}}\sum_{t=1}^T \ell_t(h)
\le \sqrt{\frac{T\log M}{2}},
\]
and taking expectation over $\F_T$ preserves the inequality. 
Hence,
\[
\mathbb{E}\Bigg[\sum_{t=1}^T \ell_t\big(h_t,w_t^{(h_t)}\big)\Bigg]
-\mathbb{E}\Bigg[\min_{h\in\mathcal{H}}\sum_{t=1}^T \ell_t\big(h,w_t^{(h)}\big)\Bigg]
\le
\sqrt{\frac{T\log M}{2}}.
\]
In particular, since $\min_{h}\sum_{t=1}^T \ell_t(h)\le \sum_{t=1}^T \ell_t(h^\star)$ for any fixed $h^*$, we obtain
\[
\mathbb{E}\Bigg[\sum_{t=1}^T \ell_t\big(h_t,w_t^{(h_t)}\big)
-\sum_{t=1}^T \ell_t\big(h^\star,w_t^{(h^\star)}\big)\Bigg]
\le \mathbb{E}\Bigg[\sum_{t=1}^T \ell_t\big(h_t,w_t^{(h_t)}\big)\Bigg]
-\mathbb{E}\Bigg[\min_{h\in\mathcal{H}}\sum_{t=1}^T \ell_t\big(h,w_t^{(h)}\big)\Bigg] \le
\sqrt{\frac{T\log M}{2}}.
\]

Next, we consider the inner regret for $h^*$.
Fix $h^\star$ and define the surrogate loss
$L_t^{(h^\star)}(w):=-\langle w,u_t^{(h^\star)}\rangle$.
By Assumption~A5 (surrogate domination almost surely), for all $w\in\mathcal{W}_{h^\star}$,
\[
\ell_t\big(h^\star,w_t^{(h^\star)}\big)-\ell_t(h^\star,w)
\le
C\Big(L_t^{(h^\star)}\big(w_t^{(h^\star)}\big)-L_t^{(h^\star)}(w)\Big).
\]
Summing over $t=1,\dots,T$ and minimising the right-hand side over $w\in\mathcal{W}_{h^\star}$ yields
\[
\sum_{t=1}^T \ell_t\big(h^\star,w_t^{(h^\star)}\big)
-\min_{w\in\mathcal{W}_{h^\star}}\sum_{t=1}^T \ell_t(h^\star,w)
\le
C\left(
\sum_{t=1}^T L_t^{(h^\star)}\big(w_t^{(h^\star)}\big)
-\min_{w\in\mathcal{W}_{h^\star}}\sum_{t=1}^T L_t^{(h^\star)}(w)
\right).
\]
Applying Lemma~\ref{lem:inner} with $\eta=D/(U\sqrt{T})$ gives
\[
\sum_{t=1}^T \ell_t\big(h^\star,w_t^{(h^\star)}\big)
-\min_{w\in\mathcal{W}_{h^\star}}\sum_{t=1}^T \ell_t(h^\star,w)
\le
C\,DU\sqrt{T} \quad \text{holds almost surely.}
\]
Taking expectation preserves the bound.

Combining the two bounds gives
\[
\mathbb{E}[R_T]
\le
\sqrt{\frac{T\log M}{2}} + C\,DU\sqrt{T},
\]
which is $O(\sqrt{T})$.
\end{proof}

\clearpage
\section{Pseudo-Code and More Experiments}
\label{sec:appendix}
\subsection{The Usage of LLM}
We disclose our concrete use of large language models (LLMs) in line with the IJCAI policy. 
LLMs are not authors; the human authors take full responsibility for all content.
We mainly use LLM for writing assistance.
We used LLM to polish and reorganise author-written text for academic style in academic English. 
Our prompt instructed the model to:
\begin{snugshade}
\begin{verbatim}
Correct spelling, grammar, clarity, concision, and overall 
readability;
First return a polished paragraph, 
then a markdown table enumerating each edit with justification;
Preserve citation strings exactly as written; 
Avoid unnecessary \emph{};
Present the paragraph’s intended logic before rewriting to ensure 
coherence.
\end{verbatim}
\end{snugshade}
These outputs were treated as suggestions; final language and structure were decided by the authors after review. 
No claims, proofs, or empirical results originated from the LLM.
Also, we use LLM for helping with debugging and visualisation assistance.
We used LLM to (i) explain error messages, (ii) suggest small code fixes, and (iii) draft the scripts for result visualisation (e.g., plotting scripts). 
All suggested code was reviewed, adapted or re-implemented where non-trivial, and covered by tests. 
Algorithmic design, hyperparameters, and reported results were chosen by the authors.

\subsection{Algorithm Pseudo-Code}

The online update follows one of the following local rules, where $x$ and $y$ denote pre- and post activities, $w$ is a synaptic weight, and $\theta$ is a slow activity-dependent threshold in BCM:

\begin{algorithm}[!t]
\caption{\text{Decoupling NEAT-based NEOL}}
\label{alg:NEOL_main_loop}
\begin{algorithmic}[1]
\Input Generations: $G \in \mathbb{N}_{>0}$; Population size: $P \in \mathbb{N}_{>0}$; NeuroEvolution config: $\Theta_0$ (Described in Algorithm~\ref{alg:speciate_neat}). Parameters for fitness evaluation: Episodes $N \in \mathbb{N}_{>0}$; Max steps $T_{\max} \in \mathbb{N}_{>0}$; Learning rule $\mathcal{L} \in \{\text{Hebb, Oja, BCM}\}$; Plasticity rate $\eta \in \mathbb{R}_{>0}$; Reward scaling $\beta \in \mathbb{R}_{>0}$; Environment \text{env}.
\Output Best evolved genome $g^*$.

\State $\mathcal{P} = \textsc{InitialisePopulation}(\Theta_0, P)$
\For{gen $\in \{1, \dots, G\}$}
    \ForAll{genome $g_i \in \mathcal{P}$}
        \State $g_i.\text{fitness} = \textsc{Rollout}(g_i, \mathcal{L}, \eta, \beta, N, T_{\max}, \text{env})$ \Comment{See Alg~\ref{alg:NEOL_online}}
    \EndFor
    \State $\mathcal{P} = \textsc{Reproduce}(\mathcal{P})$ \Comment{For NEAT: See Alg~\ref{alg:speciate_neat} and ~\ref{alg:reproduce_neat}}

\EndFor
\State \Return best genome $g^*$ from final population $\mathcal{P}$
\end{algorithmic}
\end{algorithm}

\begin{equation*}
    \textsc{weight\_update}(x, y, w, r, \beta, \eta_w) := 
    \begin{cases} 
        w + \eta_w \cdot \beta r \cdot y \cdot (x - y \cdot w) & \text{if Oja} \\ 
        w + \eta_w \cdot x \cdot y \cdot \beta r & \text{if Hebb} \\ 
        w + \eta_w y(y-\theta) x \beta r & \text{if BCM} \\ 
        w & \text{otherwise} 
    \end{cases}
\end{equation*}
\begin{algorithm}[!t]
\caption{Online Rollout}
\label{alg:NEOL_online}
\begin{algorithmic}[1]
\Input Genome: $g$; Learning rule: $\mathcal{L}$; Plasticity rate: $\eta$; Reward scaling factor: $\beta$; Episodes: $N$; Max steps: $T_{\max}$; Environment: \textit{env}.
(Defined in Algorithm~\ref{alg:NEOL_main_loop})
\Output Average episode reward (fitness).

\State episode\_rewards $= []$
\For{$episode \in \{1, \dots, N\}$}
    \State net $= \textsc{CreateNetworkFromGenome}(g)$
    \State $s = \text{env.reset}()$\Comment{Reset state}
    \State $R_{ep} = 0$
    \For{$t \in \{1, \dots, T_{\max}\}$}
        \State $\hat{a} = \text{net.forward}(s)$ \Comment{Generate the action from policy network}
        \State $a = \operatorname{clip}(\hat{a}, -1, 1)$
        \State $(s', r, \text{done}) = \text{env.step}(a)$\Comment{Get the state and reward after apply the action}
        \State $r_{scaled} = r \cdot \beta$
        \State net.\textsc{weight\_update}($\mathcal{L}, \eta, r_{scaled}$)\Comment{Update weight with specific learning rule}
        \State $R_{ep} = R_{ep} + r$
        \State $s = s'$
        \If{done} \textbf{break} \EndIf
    \EndFor
    \State episode\_rewards.append($R_{ep}$)
\EndFor
\State \Return Average cumulative episode rewards
\end{algorithmic}
\end{algorithm}

\begin{algorithm}[!t]
\caption{Speciate (NEAT)~\protect\cite{Stanley2002neat}}
\label{alg:speciate_neat}
\begin{algorithmic}[1]
\Input Population with raw fitness: $\mathcal{P}$;
NEAT config $\Theta_{\text{NEAT}}$ (compatibility coeffs $c_1,c_2,c_3$, threshold $\delta_t$, survival threshold $\rho$, elitism $E$, stagnation limit $G_{\mathrm{stag}}$, min\_species\_size, etc.).
\Output Species set $\mathcal{S}$; population $\mathcal{P}$ with \emph{adjusted} fitness.

\State $\mathcal{S} = \emptyset$ \Comment{if previous species exist, reuse their representatives}
\For{genome $g \in \mathcal{P}$}
  \State Compute compatibility distance to each representative $r_s$:
  \[
    \delta(g,r_s) = c_1\frac{E + D}{N} + c_3 \cdot \overline{|w_g - w_{r_s}|}
  \]
  \State Assign $g$ to $\arg\min_s \delta(g,r_s)$ if $\min_s \delta \le \delta_t$; else create a new species with $g$.
\EndFor

\For{species $s \in \mathcal{S}$}
  \State Sort members by raw fitness (desc), record champion.
  \State Update $s$’s best-so-far and stagnant counter; mark for removal if $>$ $G_{\mathrm{stag}}$ (optionally keep global best).
\EndFor
\State Remove stagnant species; ensure each remaining species has $\ge 1$ member.

\State \textbf{(Explicit sharing)} For each species $s$ and $g\in s$:
\[
  f^{\mathrm{adj}}(g) = \frac{f(g)}{|s|}, \qquad
  F^{\mathrm{adj}}(s) = \frac{1}{|s|}\sum_{g\in s} f^{\mathrm{adj}}(g).
\]
\State \Return $(\mathcal{S},\mathcal{P})$.
\end{algorithmic}
\end{algorithm}

\begin{algorithm}[!t]
\caption{Reproduce (NEAT)~\protect\cite{Stanley2002neat}}
\label{alg:reproduce_neat}
\begin{algorithmic}[1]
\Require Species set $\mathcal{S}$ and adjusted fitness $F^{\mathrm{adj}}(s)$ computed by Alg.~\ref{alg:speciate_neat}.
\Input Population $\mathcal{P}$ with adjusted fitness; NEAT config $\Theta_0$ (elitism $E$, survival threshold $\rho$, crossover prob. $p_c$, add-connection $m_\ell$, add-node $m_n$, weight-mutation mode \texttt{WM\_MODE}); target size $P$.
\Output Next-generation population $\mathcal{P}_{\mathrm{new}}$.

\State $A =\sum_{s \in \mathcal{S}} F^{\mathrm{adj}}(s)$
\For{each $s \in \mathcal{S}$}
  \State $\mathrm{spawn}(s) =\max(\text{min\_species\_size},~ \mathrm{round}(P \cdot F^{\mathrm{adj}}(s) / A))$
\EndFor
\State Renormalise $\mathrm{spawn}(\cdot)$ so that $\sum_{s} \mathrm{spawn}(s) = P$

\State $\mathcal{P}_{\mathrm{new}} =\emptyset$
\For{each species $s \in \mathcal{S}$}
  \State Sort members of $s$ by raw fitness (descending)
  \State Copy top $E$ elites of $s$ to $\mathcal{P}_{\mathrm{new}}$
  \State $\mathrm{spawn}(s) =\mathrm{spawn}(s) - E$
  \If{$\mathrm{spawn}(s) > 0$}
    \State $K =\lceil \rho \cdot |s| \rceil$
    \State $U =$ top-$K$ members of $s$ \Comment{parent pool}
    \While{$\mathrm{spawn}(s) > 0$}
      \State $\mathrm{spawn}(s) =\mathrm{spawn}(s) - 1$
      \State Sample $p_1 \sim U$
      \State With probability $p_c$, sample $p_2 \sim U$; otherwise set $p_2 =p_1$
      \State $\text{offspring} =\textsc{CrossoverAligned}(p_1, p_2)$ \Comment{align by innovation numbers}
      \If{$\mathrm{rand} < m_\ell$}
        \State \textsc{AddConnection}$(\text{offspring} )$
      \EndIf
      \If{$\mathrm{rand} < m_n$}
        \State \textsc{AddNode}$(\text{offspring} )$
      \EndIf
      \If{\texttt{WM\_MODE} = off}
        \State \textsc{NoWeightMutation}$(\text{offspring} )$
      \ElsIf{\texttt{WM\_MODE} = config}
        \State \textsc{MutateWeightsByConfig}$(\text{offspring} , \Theta_0)$
      \Else
        \State \textsc{MutateWeightsWithProb}$(\text{offspring} , p)$
      \EndIf
      \State Append $\text{offspring}$ to $\mathcal{P}_{\mathrm{new}}$
    \EndWhile
  \EndIf
\EndFor
\State \Return $\mathcal{P}_{\mathrm{new}}$
\end{algorithmic}
\end{algorithm}

Algorithm~\ref{alg:NEOL_main_loop} implements a two-timescale procedure. 
A population of size $P$ is initialised from $\Theta_0$. 
In each generation, every genome $g_i$ is evaluated by Algorithm~\ref{alg:NEOL_online}; its mean episodic return becomes its fitness. After evaluation, NEAT reproduction performs selection and topological variation to produce the next population. 
Across generations, the best genome among the population is tracked; after $G$ generations, the algorithm returns $g^*$.

Algorithm~\ref{alg:NEOL_online} evaluates one genome with online weight adaptation. 
For each episode, a network phenotype is created from $g$, the environment is reset, and the agent interacts for at most $T_{\max}$ steps. 
At step $t$, the network proposes $\hat{a}$, which is clipped to $a$ and applied to obtain $(s', r, \texttt{done})$. 
A reward-scaled signal $r_{\text{scaled}}$ drives plastic updates according to $\mathcal{L}$ at rate $\eta$ using only local pre and post activities together with $r_{\text{scaled}}$. 
The fitness is the mean return over $N$ episodes.

Algorithm~\ref{alg:speciate_neat} performs speciation and fitness adjustment for NEAT. Genomes are assigned to species by compatibility distance with coefficients $c_1,c_2,c_3$ and threshold $\delta_t$. Using excess and disjoint gene counts $E$ and $D$, the number of matched genes $N$, and the mean absolute weight difference, the distance to a species representative $r_s$ is
\[
\delta(g,r_s)=c_1\frac{E+D}{N}+c_3\,\overline{\lvert w_g-w_{r_s}\rvert}.
\]
Within each species, members are ranked by raw fitness, champions are tracked, and species that stagnate beyond $G_{\text{stag}}$ may be removed except for a possible global best safeguard. Explicit fitness sharing is applied:
\[
f^{\mathrm{adj}}(g)=\frac{f(g)}{|s|},\qquad
F^{\mathrm{adj}}(s)=\frac{1}{|s|}\sum_{g\in s} f^{\mathrm{adj}}(g).
\]

Algorithm~\ref{alg:reproduce_neat} generates the next population under speciated reproduction. 
Let $A=\sum_{s} F^{\mathrm{adj}}(s)$ be the sum of adjusted fitness across species. 
Each species receives an offspring budget
\[
\mathrm{spawn}(s)
  = \max\!\left(
      \texttt{min\_species},\,
      \operatorname{round}\!\left(
        P\,\frac{F^{\mathrm{adj}}(s)}{A}
      \right)
    \right),
\]
renormalised so that the counts sum to $P$. 
Elites $E$ are copied unchanged. 
The remaining offspring are bred from the top $\rho$ fraction within each species. 
Parents are selected, crossover is applied with probability $p_c$ using historical innovation numbers for alignment, and structural mutations are applied with probabilities $m_\ell$ (add connection) and $m_n$ (add node). 
Weight mutation is controlled by \texttt{WM\_MODE}, which can disable weight mutation, use configuration defaults, or apply a specified probability $p$. 
The resulting offspring across all species form the next population consumed by Algorithm~\ref{alg:NEOL_main_loop}.


\clearpage

\subsection{Four control tasks from OpenAI gym benchmark suite}

We evaluate on four standard environments from the OpenAI gym benchmark suite~\cite{brockman2016openai}, spanning diverse reward structures and action spaces.

\begin{enumerate}
   \item[(1)] \texttt{CartPole-v1} within Gymnasium~\cite{brockman2016openai}. 
Here, the state $s$ is a 4-dimensional vector
\[
s = [x, \dot{x}, \theta, \dot{\theta}],
\]
where $x$ and $\dot{x}$ are respectively the horizontal position and velocity of the cart, and $\theta$ and $\dot{\theta}$ are respectively the pole angle and angular velocity. 
The action space is discrete, $\mathcal{A}=\{0,1\}$, corresponding to pushing the cart left or right. The agent receives a dense reward of $+1$ per timestep until termination.

\item[(2)] \texttt{LunarLanderContinuous-v3} within Gymnasium. 
Here, the state $s$ is an 8-dimensional vector
\[
s = [x, y, v_x, v_y, \phi, v_{\phi}, c_L, c_R],
\]
where $x,y$ are the lander’s horizontal and vertical position, $v_x,v_y$ its linear velocities, $\phi$ its orientation angle, $v_{\phi}$ its angular velocity, and $c_L,c_R\in\{0,1\}$ indicate whether the left/right leg is in contact with the ground. 
The action space is bounded and 2D, $\mathcal{A}=[-1,1]^2$, where the two components control the throttle of the main engine and the lateral boosters, respectively. Rewards are densely shaped with additional terminal bonuses/penalties for landing safely versus crashing.

\item[(3)] \texttt{BipedalWalker-v3} within Gymnasium. 
Here, the state $s\in\mathbb{R}^{24}$ is a 24-dimensional vector summarising the walker’s torso (hull) orientation and velocities, joint angles and joint angular velocities, binary ground-contact indicators for the legs, and 10 lidar rangefinder readings (no absolute coordinates are included). 
The action space is continuous and four-dimensional,
\[
\mathcal{A}=[-1,1]^4,
\]
where each component specifies a motor command for one of the four actuated joints (hips and knees). The reward is dense, encouraging forward progress and penalising falls and excessive torque.

\item[(4)] \texttt{Hopper-v4} within Gymnasium. 
Here, the state $s\in\mathbb{R}^{11}$ is an 11-dimensional observation consisting of joint configuration variables (excluding the absolute $x$ position by default) together with their corresponding velocities (including forward velocity). Concretely, it includes the torso height and angle, the three joint angles, and the associated translational/rotational velocities. 
The action space is continuous and three-dimensional,
\[
\mathcal{A}=[-1,1]^3,
\]
where each component is a torque applied at one of the three hinge joints (thigh, leg, foot). The reward is dense and combines forward progress, a healthy-alive term, and control cost.

\end{enumerate}
Environment implementations follow the Gymnasium reference; physics backends are Box2D for \texttt{LunarLanderContinuous} and \texttt{BipedalWalker}, and MuJoCo for \texttt{Hopper}.

\subsection{Configuration for experiments}\label{app:configs}


\begin{table*}[!ht]
\centering
\caption{Best configurations used for the fitness results in Figure~\ref{fig:NEOL_comparison_combined}, Table~\ref{tab:fitness_table} and Table~\ref{tab:wilcoxon_vs_neat}. For plastic methods (BCM, Hebb, Oja), we report population size (pop) and learning rate (lr); NEAT has no learning rate.}
\label{tab:configs_best}
\small
\setlength{\tabcolsep}{6pt}
\renewcommand{\arraystretch}{1.15}
\begin{tabular}{lcccc}
\toprule
\textbf{Task} & \textbf{BCM} & \textbf{Hebb} & \textbf{Oja} & \textbf{NEAT} \\
\midrule
\emph{Cartpole}        & pop=50, lr=0.00025  & pop=50, lr=0.00025  & pop=50, lr=0.00025  & pop=50 \\
\emph{Lunar Lander}    & pop=300, lr=0.25    & pop=300, lr=0.00025 & pop=300, lr=0.0025  & pop=300 \\
\emph{Hopper}          & pop=300, lr=0.025   & pop=300, lr=0.00025 & pop=300, lr=0.0025  & pop=300 \\
\emph{Bipedal Walker}  & pop=300, lr=0.00025 & pop=200, lr=0.00025 & pop=300, lr=0.00025 & pop=100 \\
\bottomrule
\end{tabular}
\end{table*}

\begin{table*}[!ht]
\centering
\caption{Best configurations used for the fitness results in Figure~\ref{fig:NEOL_comparison_combined_RL}. 
For plastic methods (BCM, Hebb, Oja), we report population size (pop), generation (G) and learning rate (lr); NEAT has no learning rate.}
\label{tab:configs_best_general}
\small
\setlength{\tabcolsep}{6pt}
\renewcommand{\arraystretch}{1.15}
\begin{tabular}{lcccc}
\toprule
\textbf{Task} & \textbf{BCM} & \textbf{Hebb} & \textbf{Oja} & \textbf{NEAT} \\
\midrule
\emph{Cartpole}        & pop=50, G=200, lr=0.00025  & pop=50, G=200, lr=0.00025  & pop=50, G=200, lr=0.00025  & pop=50, G=200 \\
\emph{Lunar Lander}    &  pop=100,G=100, lr=0.00025    &  pop=100,G=100, lr=0.00025 &  pop=100,G=100, lr=0.00025  & pop=100,G=100 \\
\emph{Hopper}          & pop=100,G=100, lr=0.00025   & pop=100,G=100, lr=0.00025 & pop=100,G=100, lr=0.00025  & pop=100,G=100 \\
\emph{Bipedal Walker}  & pop=50, G=200, lr=0.00025 & pop=100,G=100, lr=0.00025 & pop=50, G=200, lr=0.00025 & pop=50, G=200 \\
\bottomrule
\end{tabular}
\end{table*}

\begin{table*}[!ht]
\centering
\caption{Configurations used for the ablation fitness results in Table~\ref{tab:fitness_ablation}. 
NEAT (w/o) disables weight modulation by setting the learning rate to zero.}
\label{tab:configs_ablation}
\small
\setlength{\tabcolsep}{6pt}
\renewcommand{\arraystretch}{1.15}
\begin{tabular}{lcccc}
\toprule
\textbf{Task} & \textbf{BCM} & \textbf{Hebb} & \textbf{Oja} & \textbf{NEAT (w/o)} \\
\midrule
\emph{Cartpole}        & pop=50, lr=0.00025  & pop=50, lr=0.00025  & pop=50, lr=0.00025  & pop=50 \\
\emph{Lunar Lander}    & pop=300, lr=0.25    & pop=300, lr=0.00025 & pop=300, lr=0.0025  & pop=300 \\
\emph{Hopper}          & pop=300, lr=0.025   & pop=300, lr=0.00025 & pop=300, lr=0.0025  & pop=300 \\
\emph{Bipedal Walker}  & pop=300, lr=0.00025 & pop=300, lr=0.00025 & pop=300, lr=0.00025 & pop=300 \\
\bottomrule
\end{tabular}
\end{table*}

\clearpage
\subsection{Statistical Tests for Comparison on Best Fitness}

\begin{table*}[!ht]
\centering
\caption{Best final-generation fitness (mean $\pm$ standard deviation (SD)),
\textbf{Bold} marks the highest fitness per task; \underline{underline} marks the runner-up.}
\label{tab:fitness_table}
\small
\setlength{\tabcolsep}{6pt}
\renewcommand{\arraystretch}{1.15}
\begin{tabular}{lcccc}
\toprule
\text{Task} & \text{BCM} & \text{Hebb} & \text{Oja} & \text{NEAT} \\
\midrule
\emph{CartPole}        & \best{500.00 \pm 0.00} & \best{500.00 \pm 0.00} & \best{500.00 \pm 0.00} & \best{500.00 \pm 0.00} \\
\emph{Lunar Lander}    & \best{324.34 \pm 3.86} & $323.35 \pm 3.67$ & \runner{323.53 \pm 4.19} & $311.77 \pm 8.18$ \\
\emph{Hopper}          & \best{2983.82 \pm 479.79} & $2819.89 \pm 577.33$ & \runner{2900.94 \pm 562.16} & $2680.22 \pm 603.53$ \\
\emph{Bipedal Walker}  & $217.39 \pm 82.82$ & \best{233.30 \pm 62.42} & \runner{227.10 \pm 81.96} & $149.59 \pm 79.36$ \\
\bottomrule
\end{tabular}
\end{table*}

\begin{table*}[!ht]
\centering
\caption{Wilcoxon rank-sum test: $p$-values for Bipedal Walker, Hopper, and Lunar Lander, comparing each OL method (BCM, Hebb, Oja) against NEAT. 
For each task and method, we use the best configuration selected by means of final-generation best across $30$ seeds, and test the final best values. 
The null hypothesis is that the two configurations yield samples from the same distribution; the alternative hypothesis is that the OL method tends to achieve the larger best-fitness than NEAT. 
\textbf{Bold} entries indicate $p<0.05$ (rejecting the null at the 5\% level). 
Cartpole is omitted because all methods achieve $500.0$ exactly in the end.
Values shown with three significant figures ($a\,\mathrm{e}b \equiv a\times10^{b}$).
}
\label{tab:wilcoxon_vs_neat}
\small
\setlength{\tabcolsep}{6pt}
\renewcommand{\arraystretch}{1.15}
\begin{tabular}{lccc}
\toprule
\text{Task} & \text{BCM vs NEAT} & \text{Hebb vs NEAT} & \text{Oja vs NEAT} \\
\midrule
\emph{Lunar Lander} & \best{5.76\,e{-9}} & \best{1.61\,e{-8}} & \best{2.25\,e{-8}} \\
\emph{Hopper}       & \best{8.19\,e{-4}} & \best{2.82\,e{-2}} & \best{7.36\,e{-3}} \\
\emph{Bipedal Walker} & \best{1.38\,e{-4}} & \best{9.04\,e{-6}} & \best{3.49\,e{-5}} \\
\bottomrule
\end{tabular}
\end{table*}


\subsection{Additional Tables for Ablation Studies}
In this ablation study, we keep NEOL unchanged and ablate only the pure NEAT baseline by disabling genetic weight mutation. 
The ablated counterpart is denoted NEAT (w/o). 
This setting is equivalent to setting the online rate of online learning to zero (fixing the weight mutation of NEAT means there is only evolutionary topology search remaining), and isolates whether online plasticity inside NEOL can substitute for, or complement, evolutionary weight mutation.

From Table~\ref{tab:fitness_ablation}, we compare standard NEOL (BCM, Hebb, Oja) with the ablated pure NEAT baseline (NEAT (w/o)). 
On Cartpole, all methods reach the optimum and are indistinguishable. 
On Lunar Lander, every NEOL rule yields a higher mean than NEAT (w/o), with BCM at {324.34}, Oja at 323.53, and Hebb at 323.35, versus 312.50 for NEAT (w/o); the standard deviation for NEOL is small, indicating reliable convergence. 
On Hopper, the gap is significant: BCM, Oja, and Hebb achieve 2983.82, 2900.94, and 2819.89, respectively, compared with 2296.57 for NEAT (w/o), with difference shown and well-separated means. 
On Bipedal Walker, NEOL still leads in the mean (Hebb 233.30, Oja 227.10, BCM 217.39) over NEAT (w/o) (194.91), but standard deviations are wide for all configurations, suggesting that although Bipedal Walker is a harder task for all the algorithms, every NEOL is still robust.
With weight mutation disabled in the counterpart, reward-modulated online plasticity is sufficient to recover and surpass final optimisation on Lunar Lander and Hopper tasks.


\begin{table*}[!ht]
\centering
\caption{Ablation study comparing NEOL with NEAT (w/o) on best final-generation fitness (mean $\pm$ SD) for each task and method using each method’s best hyperparameters. 
NEAT (w/o) corresponds to disabling the weight-modulation mechanism by setting the learning rate $\text{lr}=0$.
\textbf{Bold} marks the highest fitness per task; \underline{underline} marks the runner-up.}
\label{tab:fitness_ablation}
\small
\setlength{\tabcolsep}{6pt}
\renewcommand{\arraystretch}{1.15}
\begin{tabular}{lcccc}
\toprule
\text{Task} & \text{BCM} & \text{Hebb} & \text{Oja} & \text{NEAT (w/o)} \\
\midrule
\emph{Cartpole}        & \best{500.00 \pm 0.00} & \best{500.00 \pm 0.00} & \best{500.00 \pm 0.00} & \best{500.00 \pm 0.00} \\
\emph{Lunar Lander}    & \best{324.34 \pm 3.86} & $323.35 \pm 3.67$ & \runner{323.53 \pm 4.19} & $312.50 \pm 8.97$ \\
\emph{Hopper}          & \best{2983.82 \pm 479.79} & $2819.89 \pm 577.33$ & \runner{2900.94 \pm 562.16} & $2296.57 \pm 740.56$ \\
\emph{Bipedal Walker}  & $217.39 \pm 82.82$ & \best{233.30 \pm 62.42} & \runner{227.10 \pm 81.96} & $194.91 \pm 92.83$ \\
\bottomrule
\end{tabular}
\end{table*}

\clearpage
\subsection{More Experiment Results on the Final Best Fitness}

\subsection{Summary of More Experiment Results}
\paragraph{Parameter Setting.}
The heatmaps in \autoref{fig:NEOL_comparison_combined_heatmap_cartpole},\autoref{fig:NEOL_comparison_combined_heatmap_Lunar},\autoref{fig:NEOL_comparison_combined_heatmap_Hopp},\autoref{fig:NEOL_comparison_combined_heatmap_BW}, 
report a parameter sweep over population size $\{50, 100, 200, 300\}$ and learning rates $\{2.5\times10^{-4},\,2.5\times10^{-3},\,2.5\times10^{-2},\,2.5\times10^{-1}\}$ for the three NEAT-NEOL variants (BCM, Hebb, Oja) alongside the standard NEAT baseline.
Each pixel shows the empirical mean of the final best fitness over 30 seeds; the vertical axis is the plasticity learning rate, and the horizontal axis is the population size. 
Because NEAT has no learning rate, it appears as a single row per environment.

\paragraph{Results}
On {CartPole-v1 (\autoref{fig:NEOL_comparison_combined_heatmap_cartpole}).} 
All methods reach the task ceiling and remain flat across most settings in best fitness. 
BCM and Hebb saturate at $500$ for every population size and plasticity rate on our heatmap, indicating that structural search alone or in combination with offline updates is sufficient on this easy, discrete–action benchmark. 
Oja exhibits a mild instability only at the largest plasticity rates and smallest populations (top row, leftmost columns), where the mean final best fitness drops below the ceiling, but recovers as the rate is reduced or the population increases. The NEAT baseline sits at the ceiling for all population sizes.

On {LunarLander-v3 (\autoref{fig:NEOL_comparison_combined_heatmap_Lunar}).} 
The three NEOL variants consistently dominate the NEAT row and exhibit a smooth improvement with population size. 
BCM is the most robust to the plasticity learning rate: means above 320 are obtained for populations 200–300 across a wide range of rates, and the best cell is attained at population 300. 
Hebb is more sensitive to the learning rate: very large learning rates combined with small populations depress performance, while learning rates in $[2.5\times 10^{-4},\,2.5\times 10^{-3}]$ recover and surpass 320 as the population grows. 
Oja shows a similar pattern, with its best region again in the bottom half of the grid and larger populations. 
The NEAT row improves slightly with population, but remains roughly 10–15 points below the strongest NEOL settings.

On {Hopper-4v (\autoref{fig:NEOL_comparison_combined_heatmap_Hopp}).}
For BCM and Oja the surface rises sharply with population and peaks at intermediate plasticity learning rates ($2.5\times 10^{-3}$ or $2.5\times 10^{-2}$), reaching mean final best fitness near or above 2.9k at population 300. 
Very small learning rates underfit and very large rates overfit or destabilise, producing a characteristic ridge across the middle rows. 
Hebb benefits from the same scaling trends but remains below BCM and Oja over most of the grid, particularly at small populations or extreme rates. 
Standard NEAT also scales with population but plateaus several hundred points below the best NEOL cells, indicating that online weight adaptation contributes materially beyond structural search in this domain. 
Continuous control displays a pronounced interaction between population size and plasticity rate. 

On {BipedalWalker-v3 (\autoref{fig:NEOL_comparison_combined_heatmap_BW}).}
NEOL improves upon NEAT across broad regions. 
Hebb attains the highest cell in the grid at population 200 with the smallest plasticity learning rate, and degrades rapidly as the learning rate increases, especially at small populations. 
Oja and BCM display more gradual trends: performance climbs with population size and is best in the lowest–rate row, with Oja’s peak at population 300 and BCM’s at population 300 as well. 
The NEAT row is comparatively flat and non–monotonic in population, with means concentrated around 130–180 and no configuration matching the top NEOL cells. 
These results suggest that modest, reward–gated plasticity combined with sufficient population–level exploration is beneficial, whereas aggressive learning rates are detrimental in this environment.


\subsection{Lamarckian vs. Darwinian in NEAT-NEOL}

In NEAT-NEOL, online synaptic plasticity modifies weights during each evaluation rollout. A natural design choice is whether these within-lifetime weight changes should be inherited by the next generation. We therefore consider two inheritance mechanisms.

In the Darwinian (Baldwinian) variant, plasticity updates affect the fitness obtained during a rollout but are not inherited: each evaluation starts from genotype-specified parameters and the adapted weights are discarded after evaluation. 
In the Lamarckian variant, the post-adaptation parameters are written back to the genotype, so acquired within-lifetime changes are inherited, and selection operates on the updated parameters.

Empirically, we find that the two variants yield highly similar learning curves and final performance across the benchmarks considered (differences are within the run-to-run variance and do not change the qualitative conclusions). This resemblance suggests that, under our training budget and reward-modulated plasticity rules, the primary benefit comes from within-episode adaptation improving immediate fitness, rather than from inter-generational accumulation of plasticity-induced weight changes. Given the comparable performance and for simplicity of implementation and reporting, we adopt the Lamarckian write-back strategy in the remaining experiments.

\begin{figure}[!ht]
    \centering
    \includegraphics[width=0.49\linewidth]{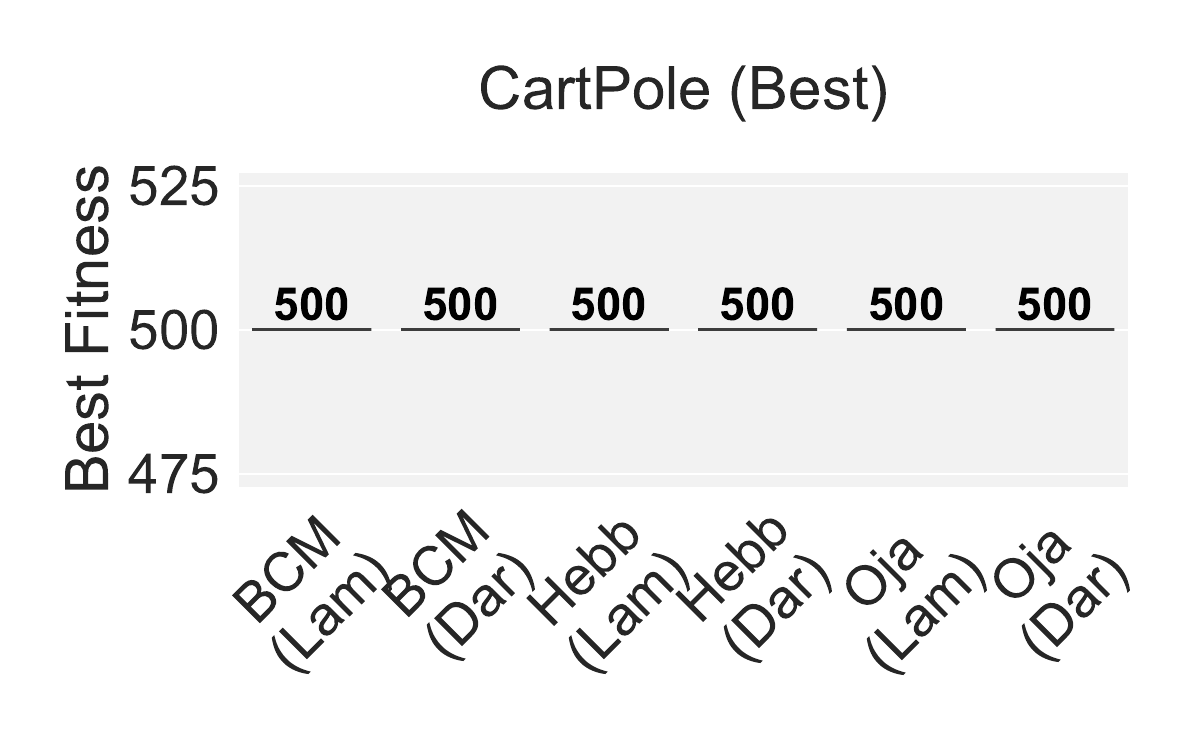}
    \includegraphics[width=0.49\linewidth]{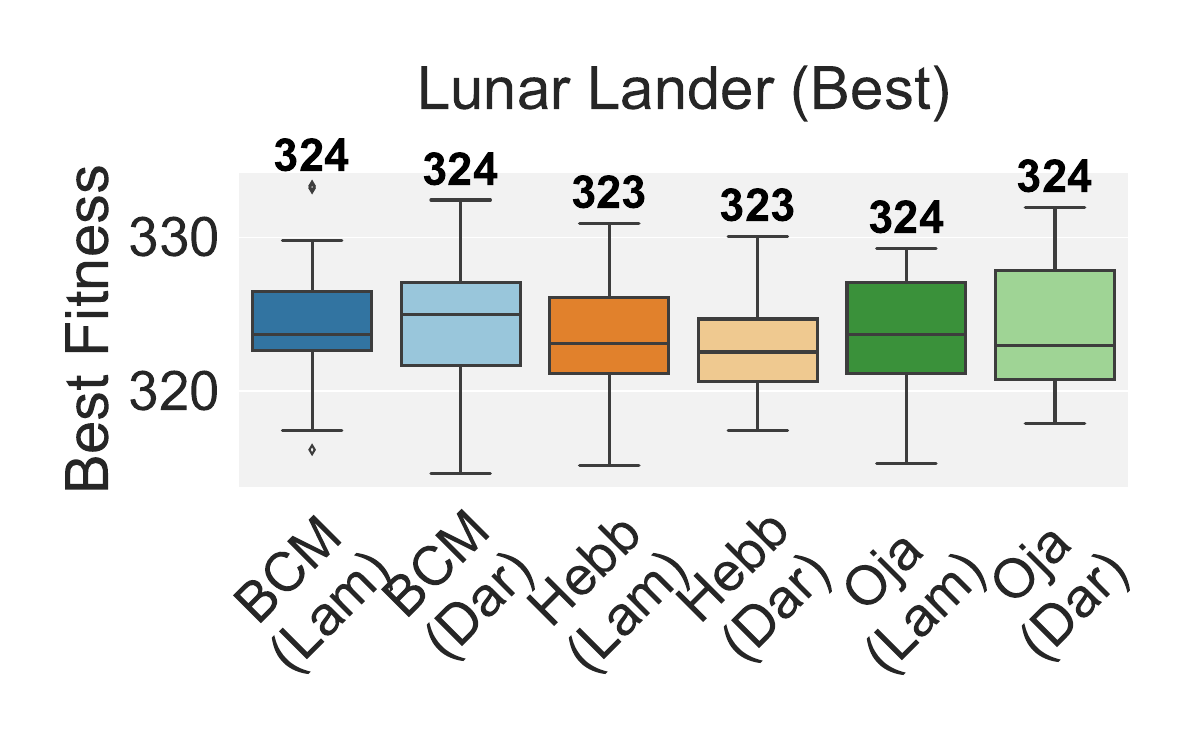}
    \includegraphics[width=0.49\linewidth]{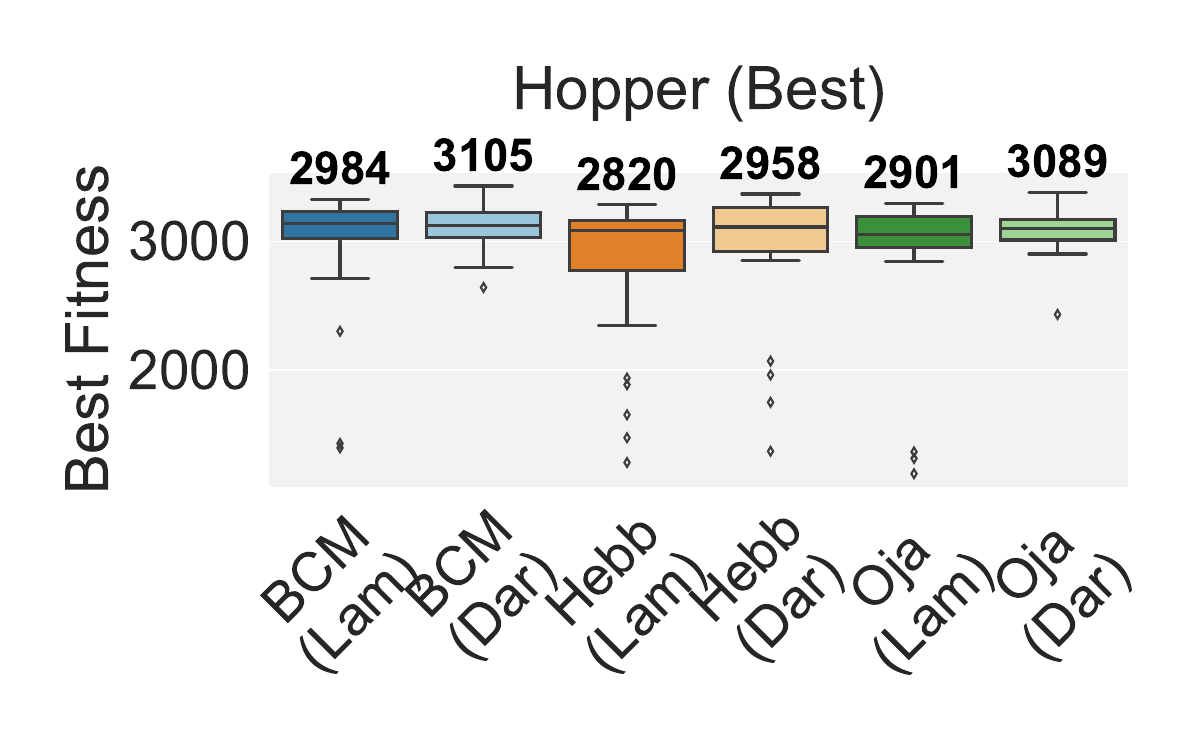}
    \includegraphics[width=0.49\linewidth]{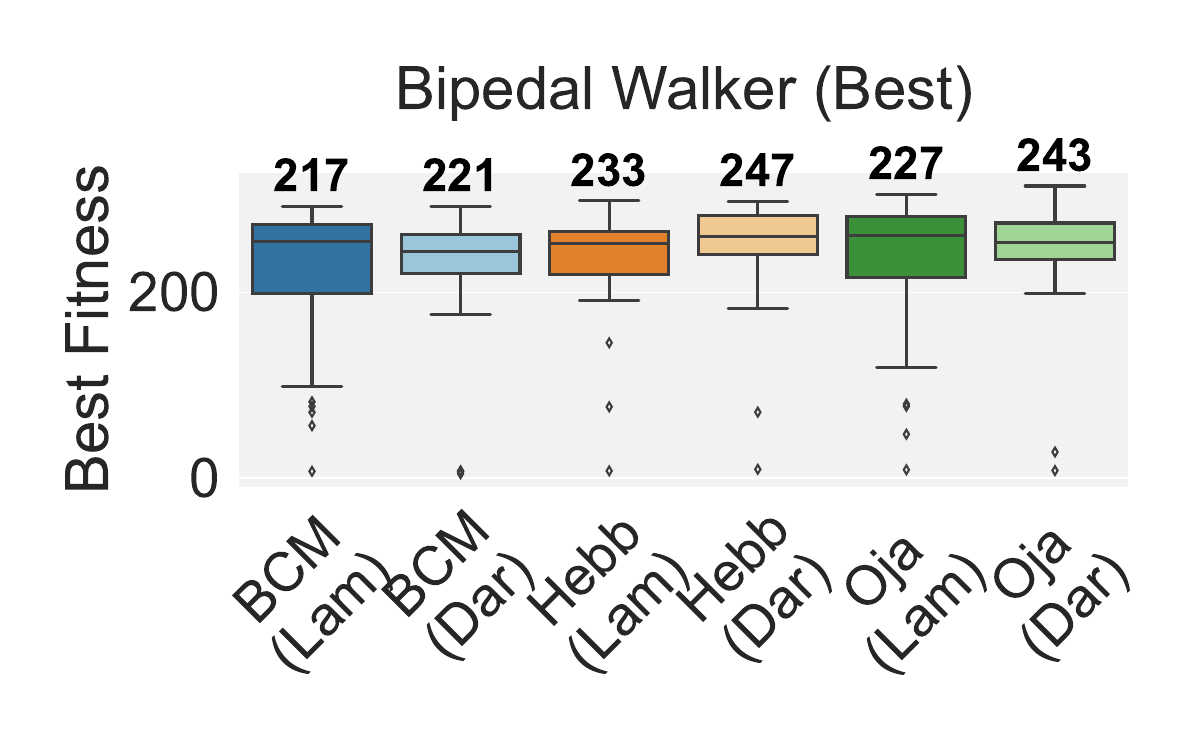}
    
    \caption{Performance comparison of NEAT-NEOL with the standard NEAT across four environments (\texttt{CartPole-v1}, \texttt{LunarLander-v3}, \texttt{Hopper-v4}, and \texttt{BipedalWalker-v3}). 
    {\bf Top row}: convergence plots showing fitness over generations. 
    {\bf Bottom row}: boxplots of final-generation fitness distributions. 
    BCM, Hebb, and Oja learning rules are shown in blue, orange, and green, respectively, while standard NEAT is shown in red.}
    \label{fig:NEOL_comparison_LamarvsDar}
\end{figure}

\begin{figure}[!ht]
    \centering
    \begin{subfigure}[t]{0.38\linewidth}
        \centering
        \subcaption{BCM}
        \includegraphics[width=\linewidth]{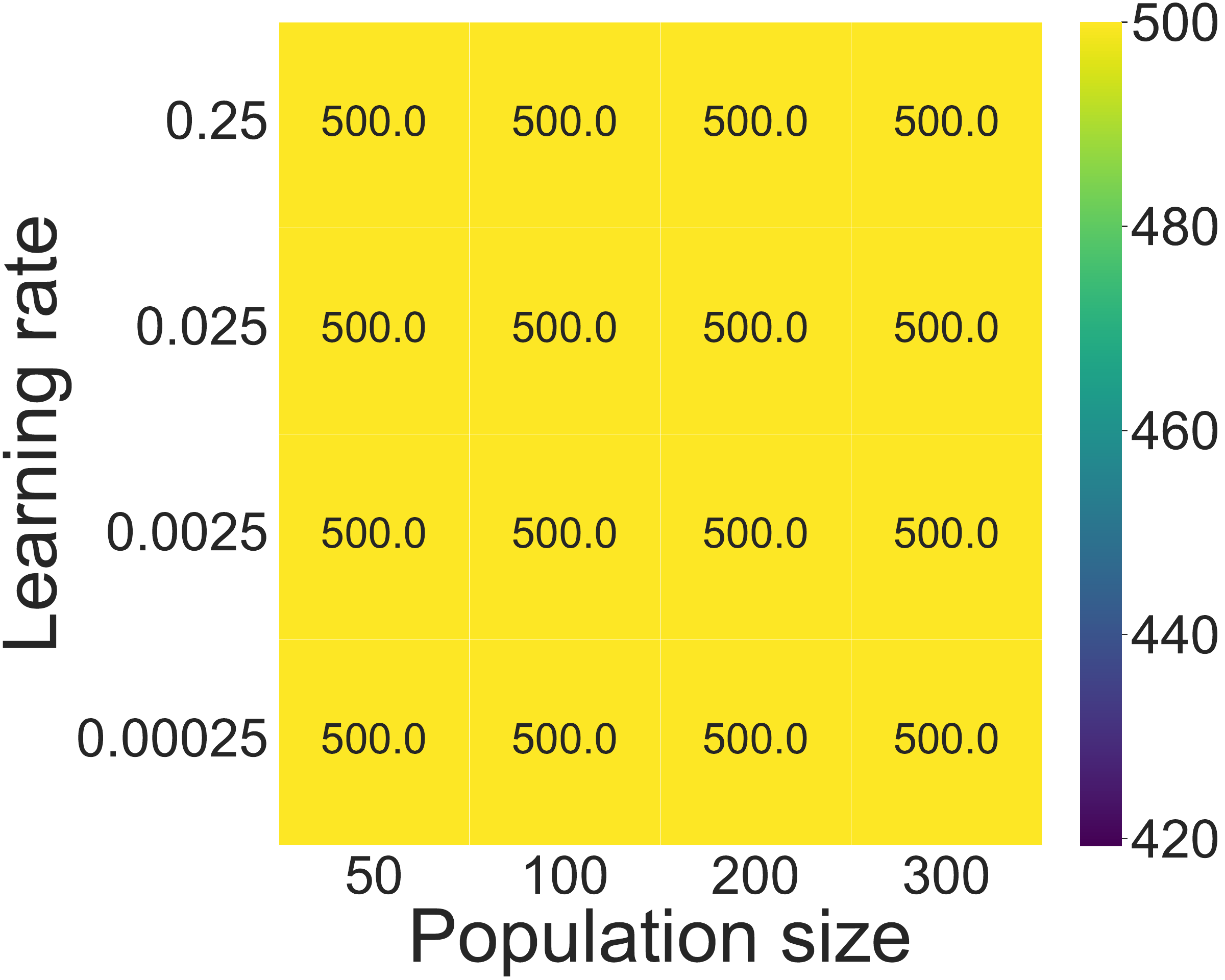}
    \end{subfigure}
    \hfill
    \begin{subfigure}[t]{0.38\linewidth}
        \centering
        \subcaption{Hebb}
        \includegraphics[width=\linewidth]{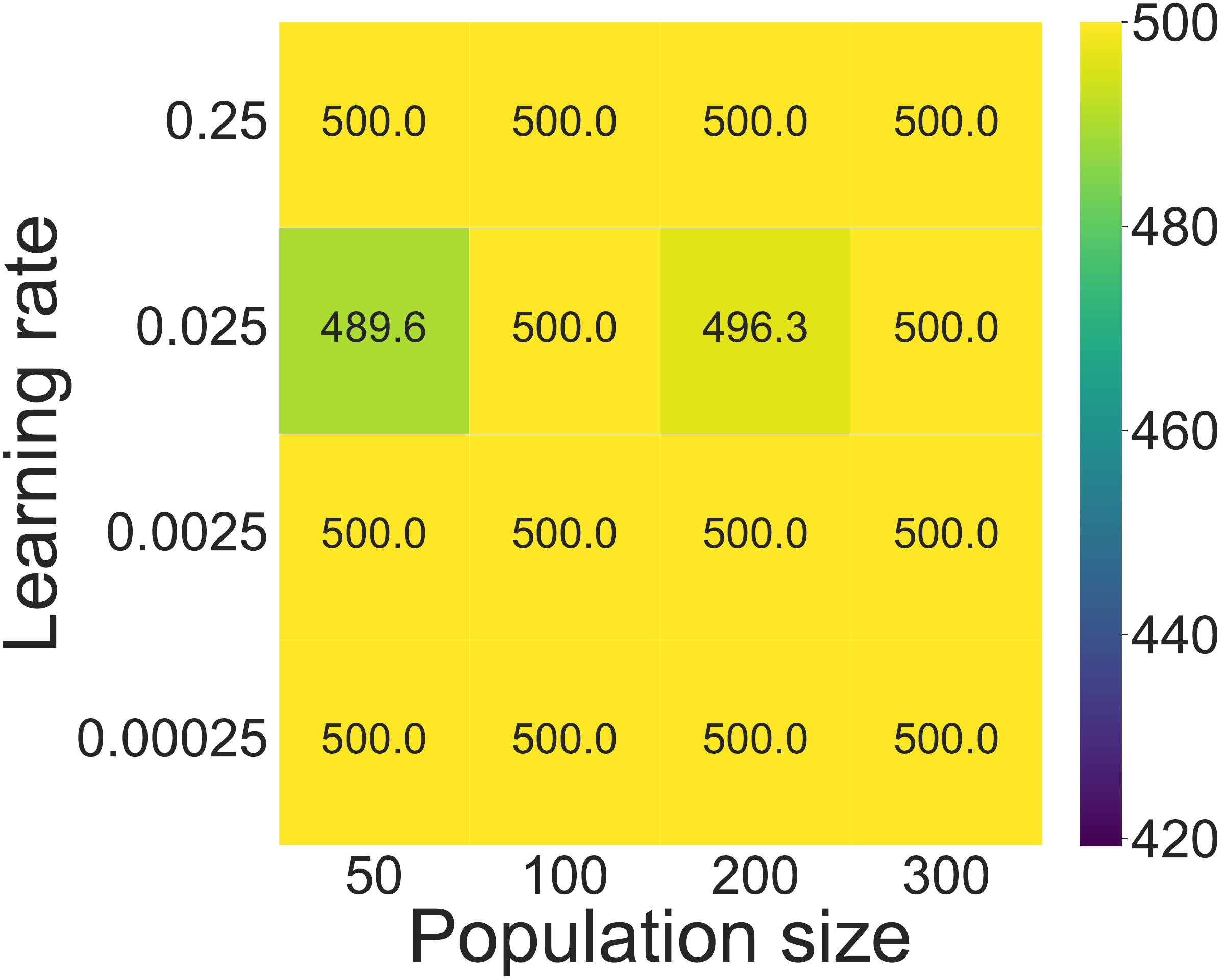}
    \end{subfigure}

    \vspace{0.5em}
    \begin{subfigure}[t]{0.38\linewidth}
        \centering
        \subcaption{Oja}
        \includegraphics[width=\linewidth]{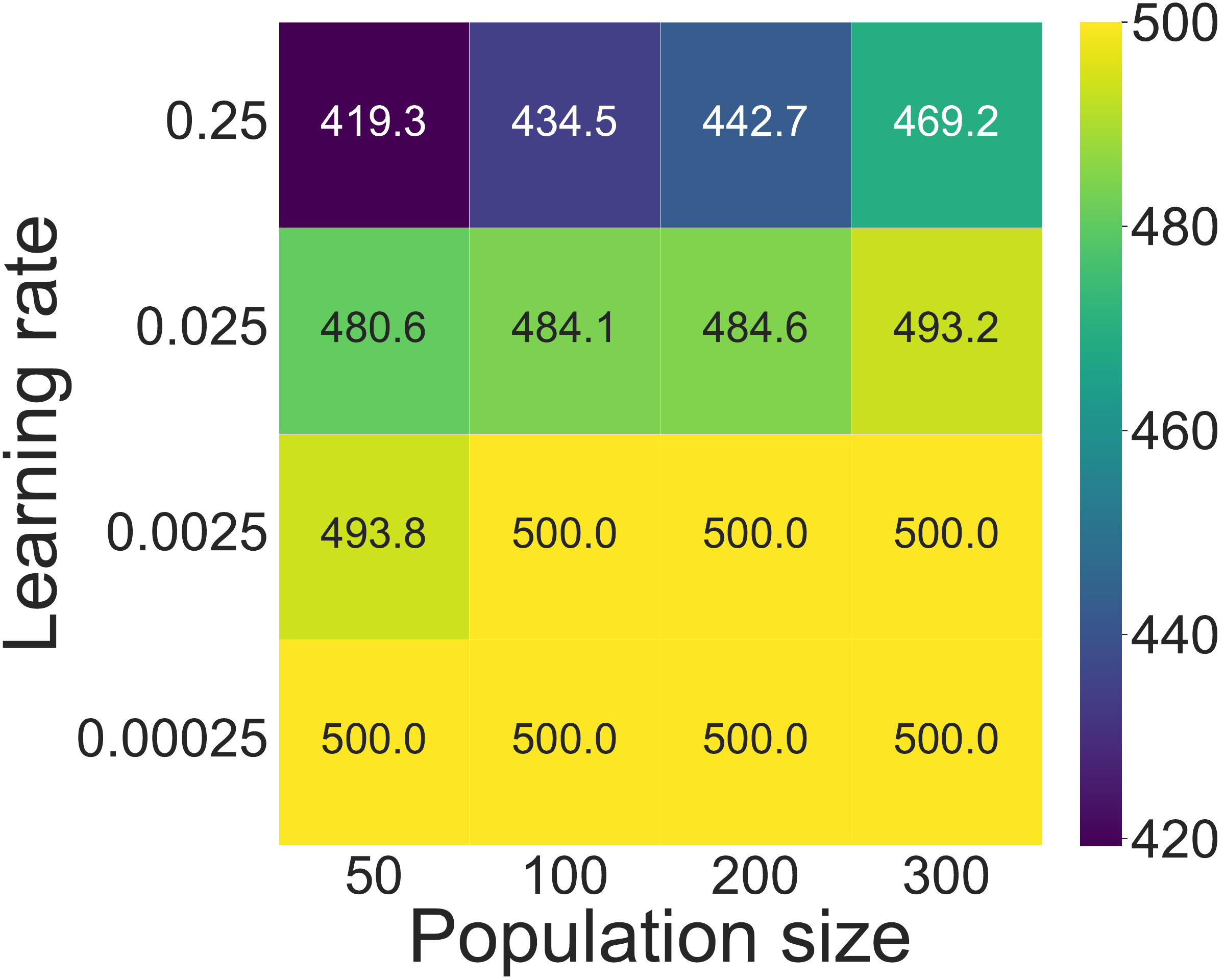}
    \end{subfigure}
        \hfill
    \begin{subfigure}[t]{0.38\linewidth}
        \centering
        \subcaption{Standard NEAT}
        \includegraphics[width=\linewidth]{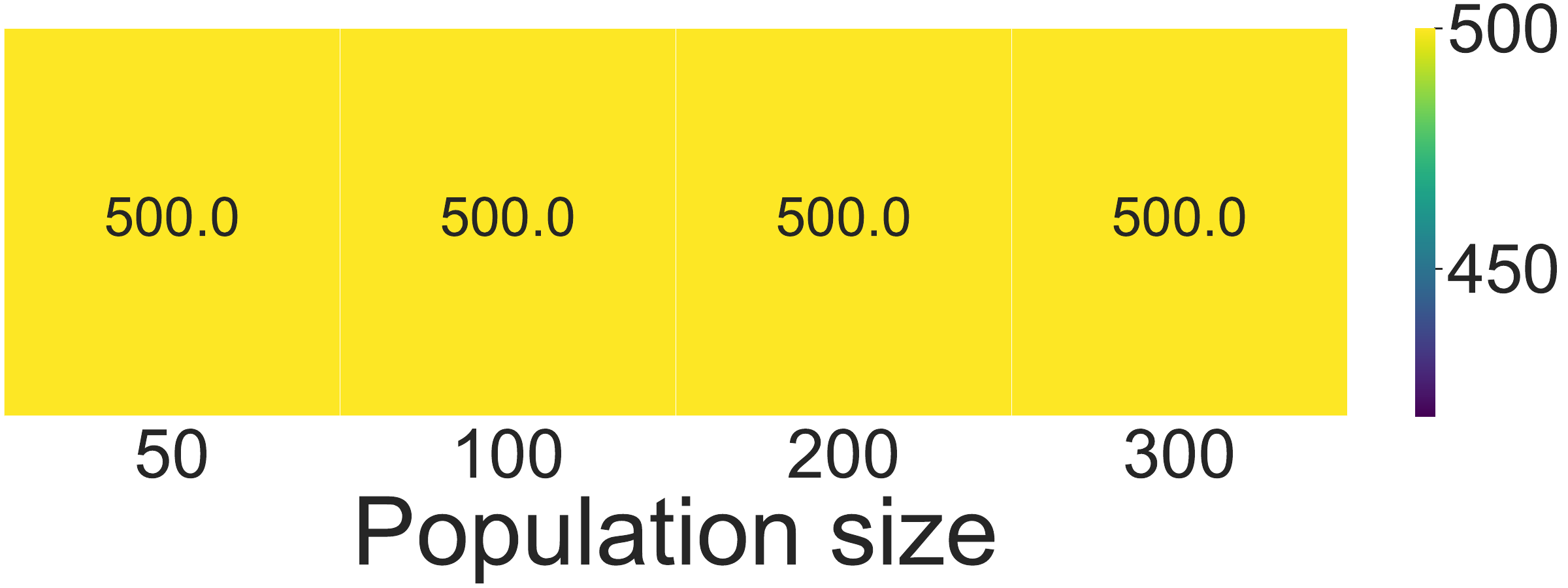}
    \end{subfigure}
    
    \caption{Heatmap comparison of NEOL algorithms (BCM, Hebb, Oja) against standard NEAT in \texttt{CartPole-v1}. 
    Values represent the empirical mean of the final best fitness over $30$ seeds.}
    \label{fig:NEOL_comparison_combined_heatmap_cartpole}
\end{figure}

\begin{figure}[!ht]
    \centering
    \begin{subfigure}[t]{0.38\linewidth}
        \centering
        \subcaption{BCM}
        \includegraphics[width=\linewidth]{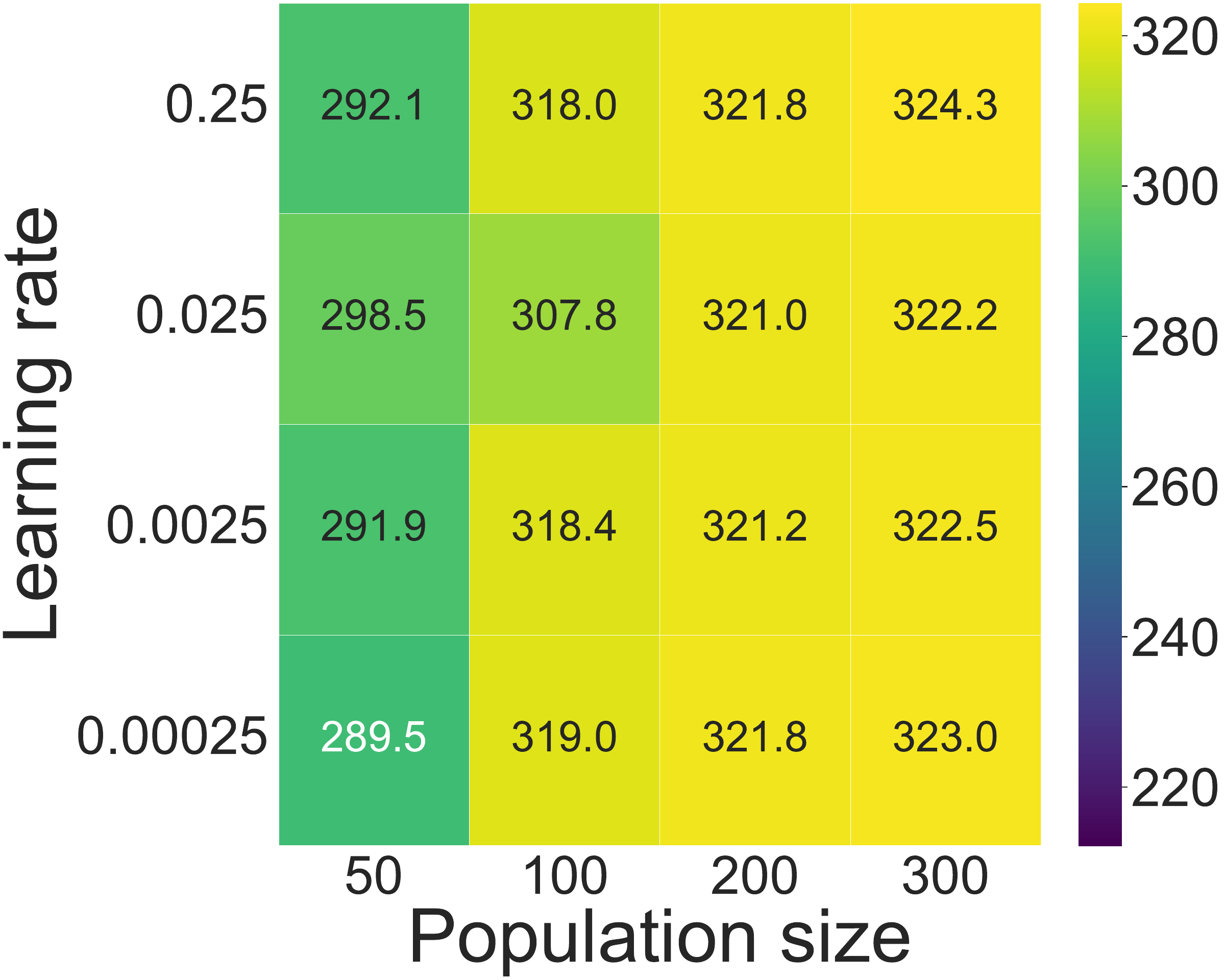}
    \end{subfigure}
    \hfill
    \begin{subfigure}[t]{0.38\linewidth}
        \centering
        \subcaption{Hebb}
        \includegraphics[width=\linewidth]{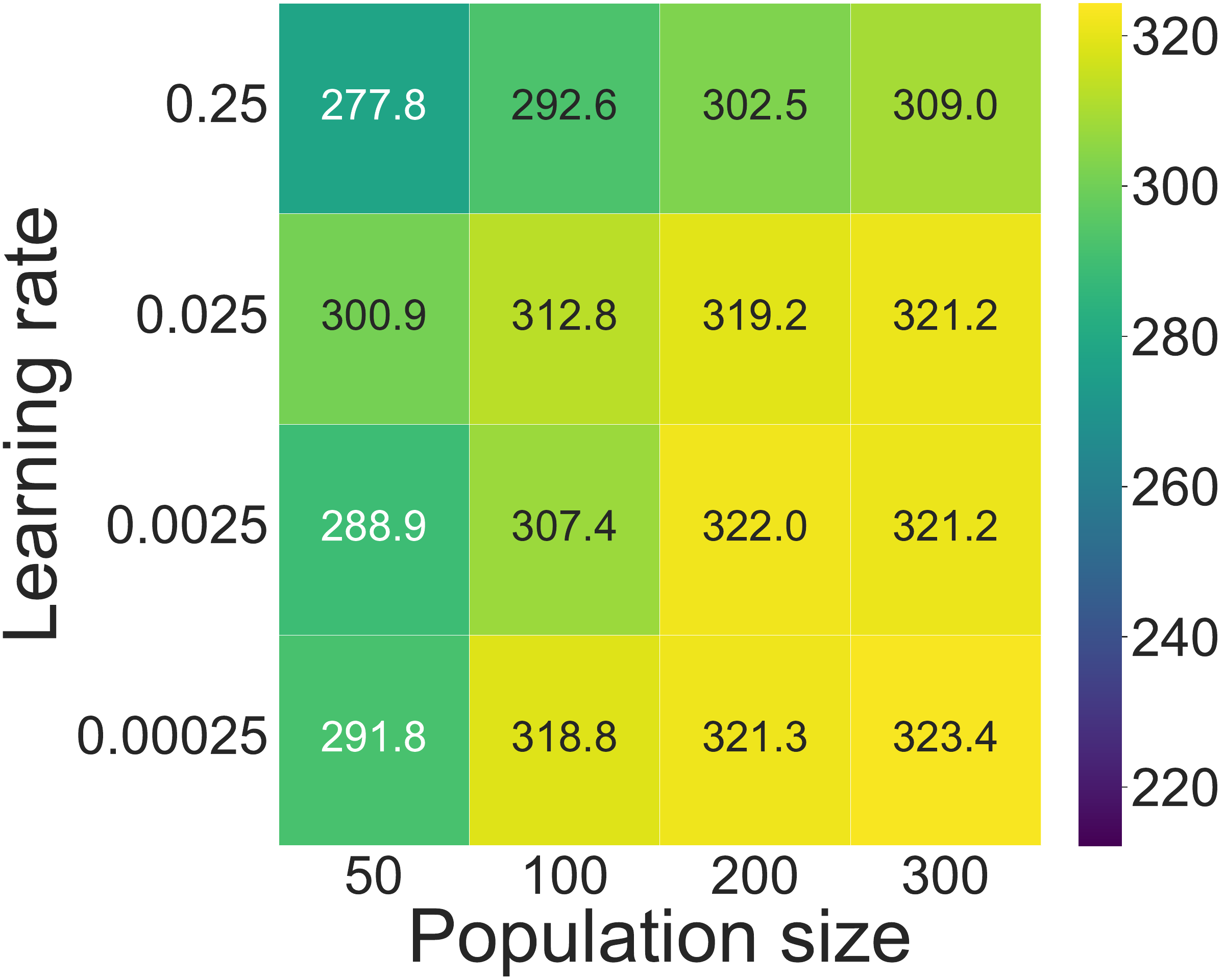}
    \end{subfigure}

    \vspace{0.5em}
    \begin{subfigure}[t]{0.38\linewidth}
        \centering
        \subcaption{Oja}
        \includegraphics[width=\linewidth]{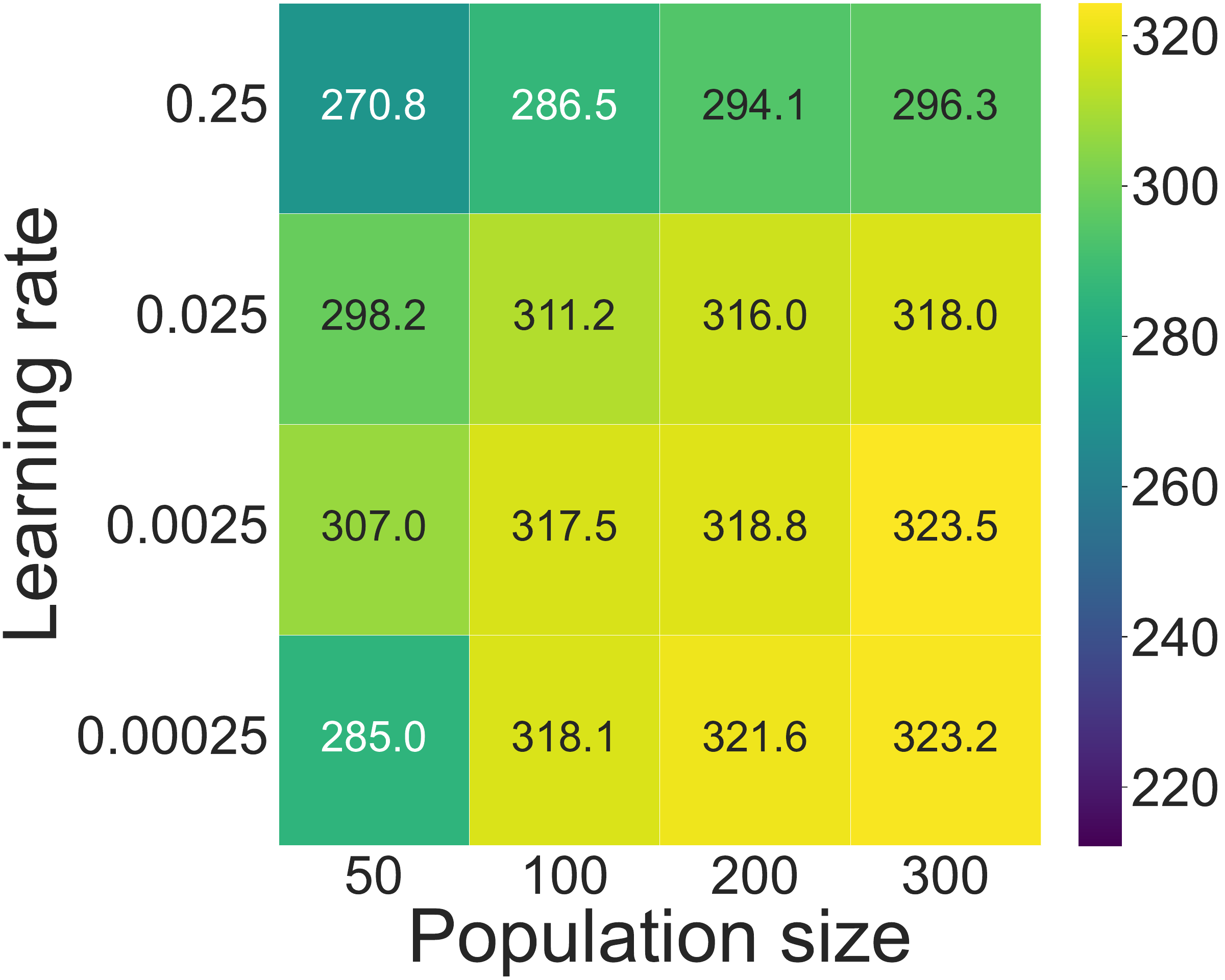}
    \end{subfigure}
    \hfill
    \begin{subfigure}[t]{0.38\linewidth}
        \centering
        \subcaption{Standard NEAT}
        \includegraphics[width=\linewidth]{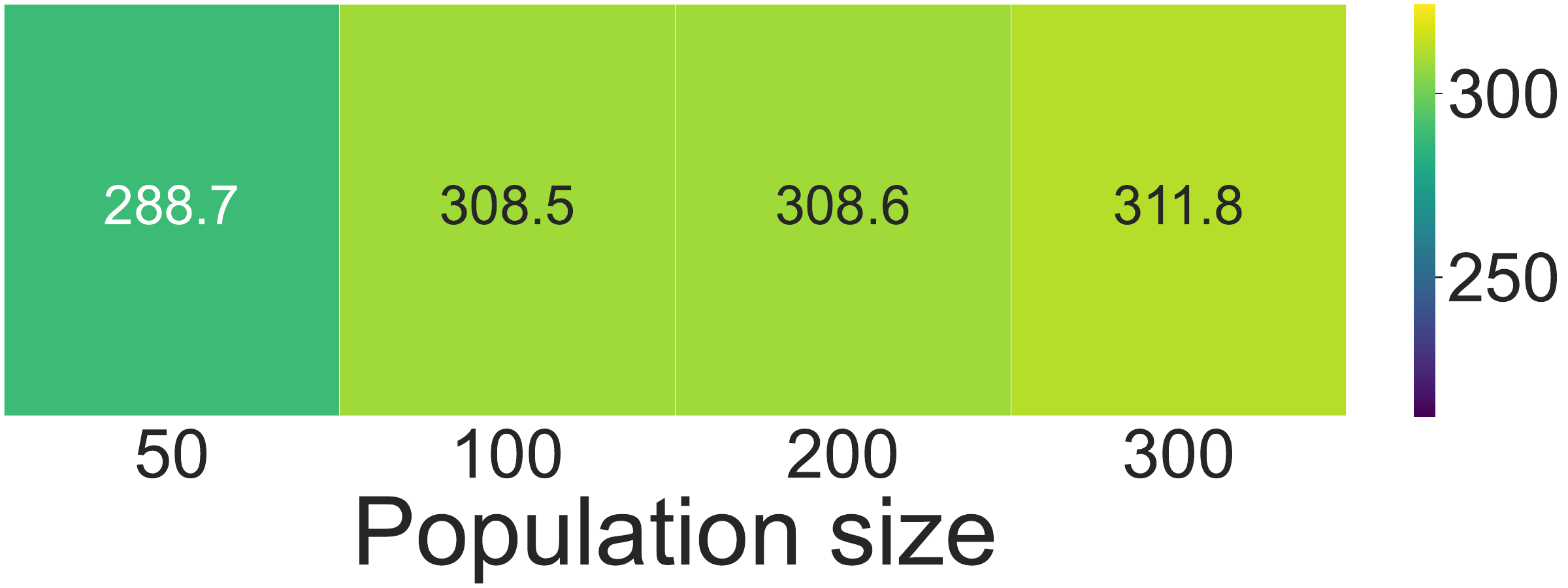}
    \end{subfigure}
    
    \caption{Heatmap comparison of NEOL algorithms (BCM, Hebb, Oja) against standard NEAT in \texttt{LunarLander-v3}. 
    Values represent the empirical mean of the final best fitness over $30$ seeds.}
    \label{fig:NEOL_comparison_combined_heatmap_Lunar}
\end{figure}

\begin{figure}[!ht]
    \centering
    \begin{subfigure}[t]{0.38\linewidth}
        \centering
        \subcaption{BCM}
        \includegraphics[width=\linewidth]{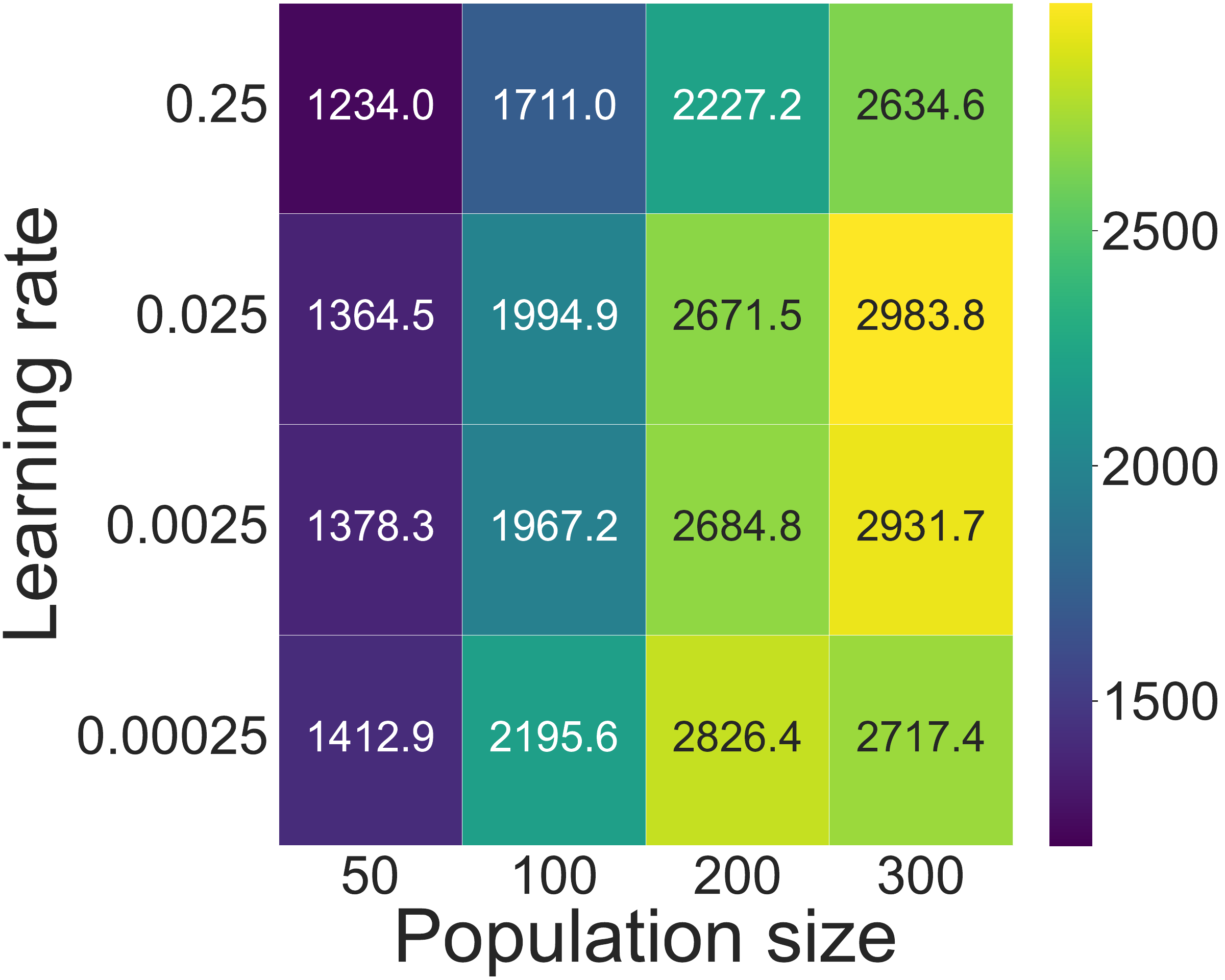}
    \end{subfigure}
    \hfill
    \begin{subfigure}[t]{0.38\linewidth}
        \centering
        \subcaption{Hebb}
        \includegraphics[width=\linewidth]{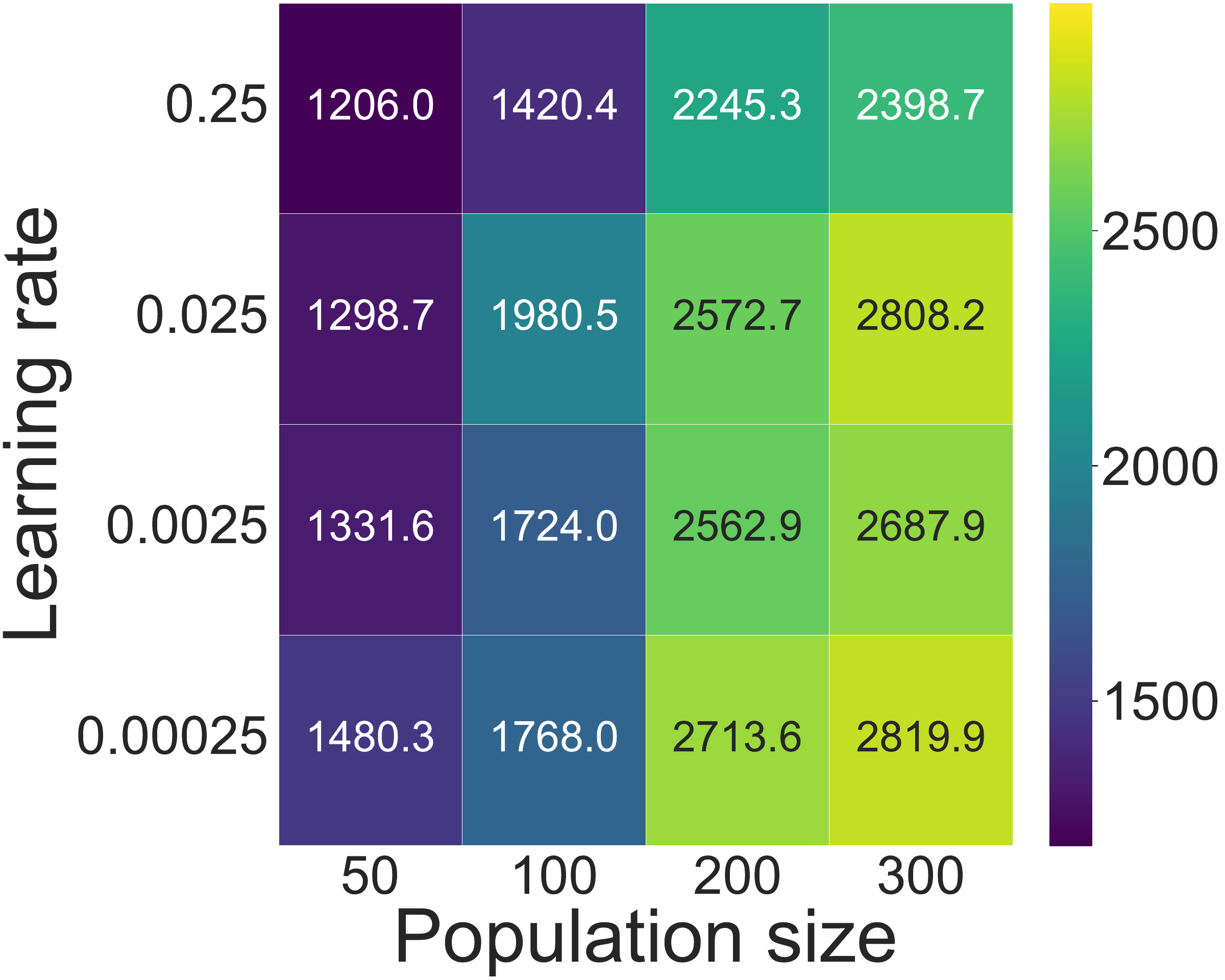}
    \end{subfigure}
    \vspace{0.5em}
    \begin{subfigure}[t]{0.38\linewidth}
        \centering
        \subcaption{Oja}
        \includegraphics[width=\linewidth]{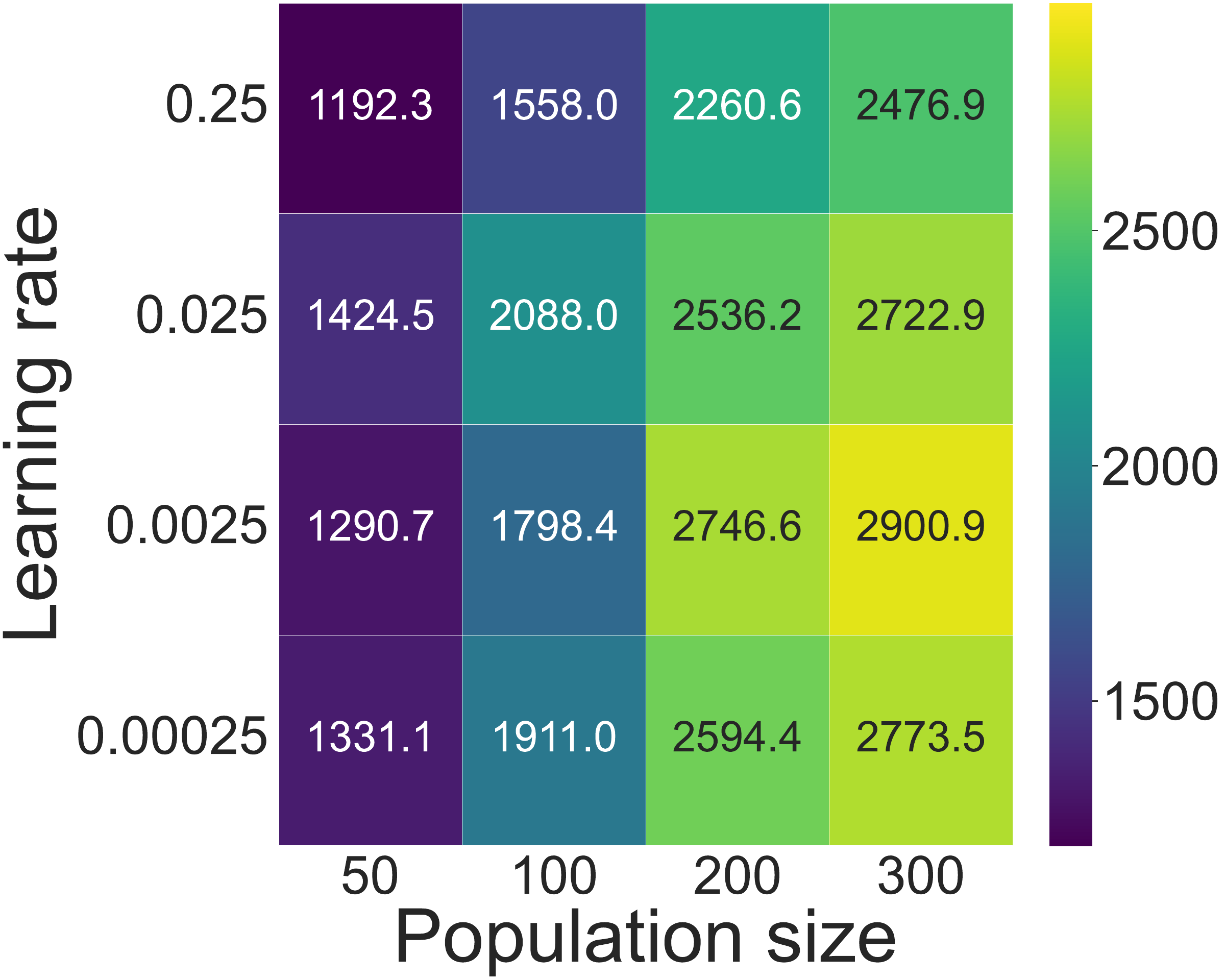}
    \end{subfigure}    
    \hfill
    \begin{subfigure}[t]{0.38\linewidth}
        \centering
        \subcaption{Standard NEAT}
        \includegraphics[width=\linewidth]{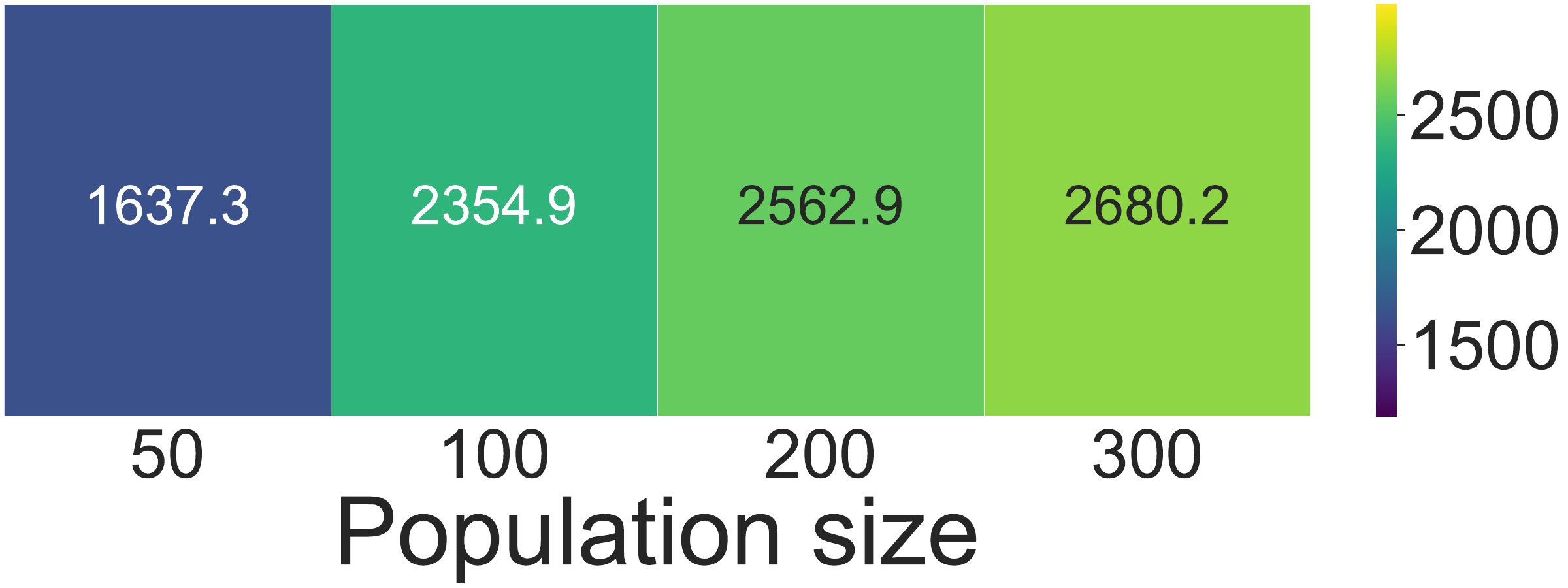}
    \end{subfigure}
    
    \caption{Heatmap comparison of NEOL algorithms (BCM, Hebb, Oja) against standard NEAT in \texttt{Hopper-v4}. 
    Values represent the empirical mean of the final best fitness over $30$ seeds.}
    \label{fig:NEOL_comparison_combined_heatmap_Hopp}
\end{figure}

\begin{figure}[!ht]
    \centering
    \begin{subfigure}[t]{0.38\linewidth}
        \centering
        \subcaption{BCM}
        \includegraphics[width=\linewidth]{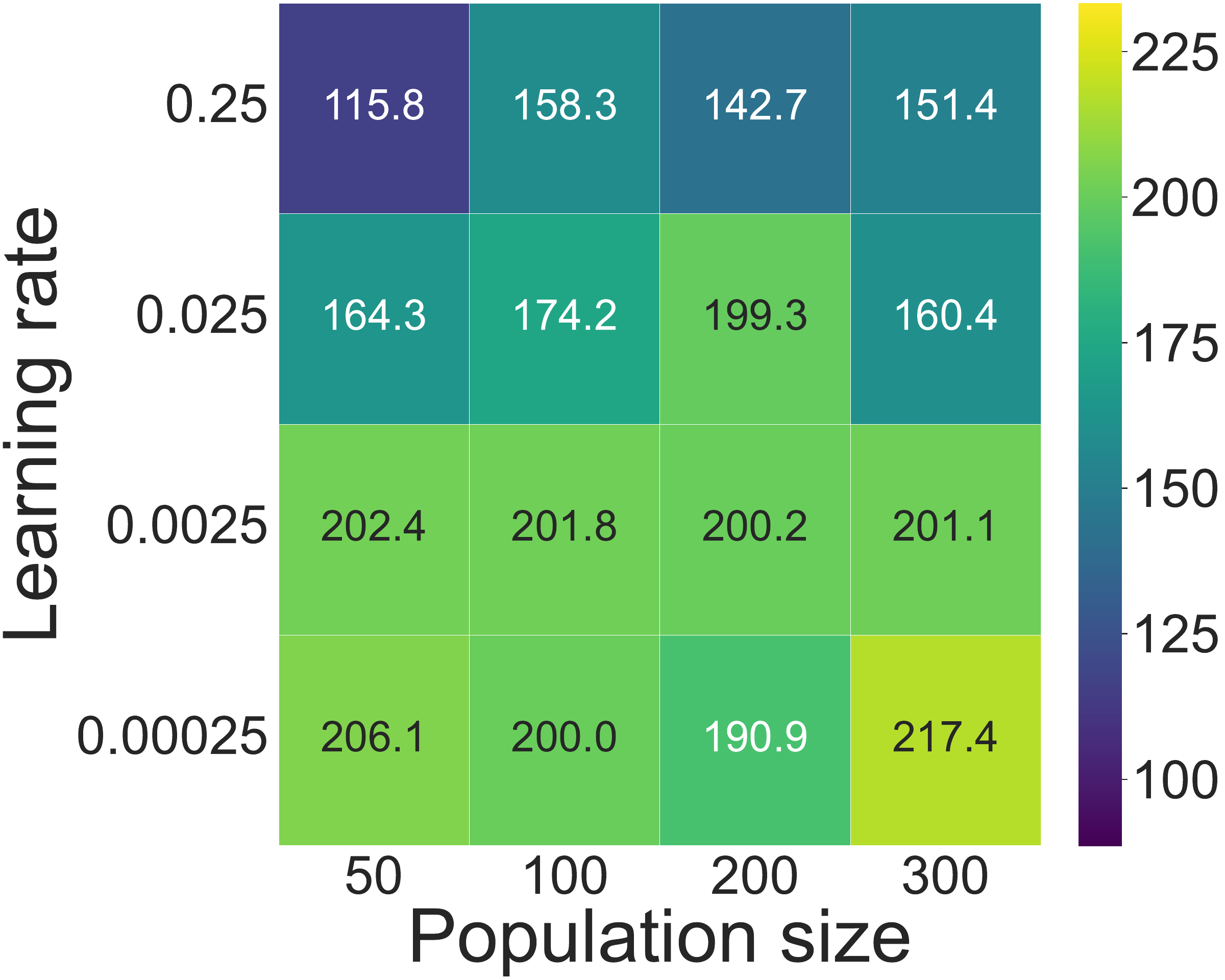}
    \end{subfigure}
    \hfill
    \begin{subfigure}[t]{0.38\linewidth}
        \centering
        \subcaption{Hebb}
        \includegraphics[width=\linewidth]{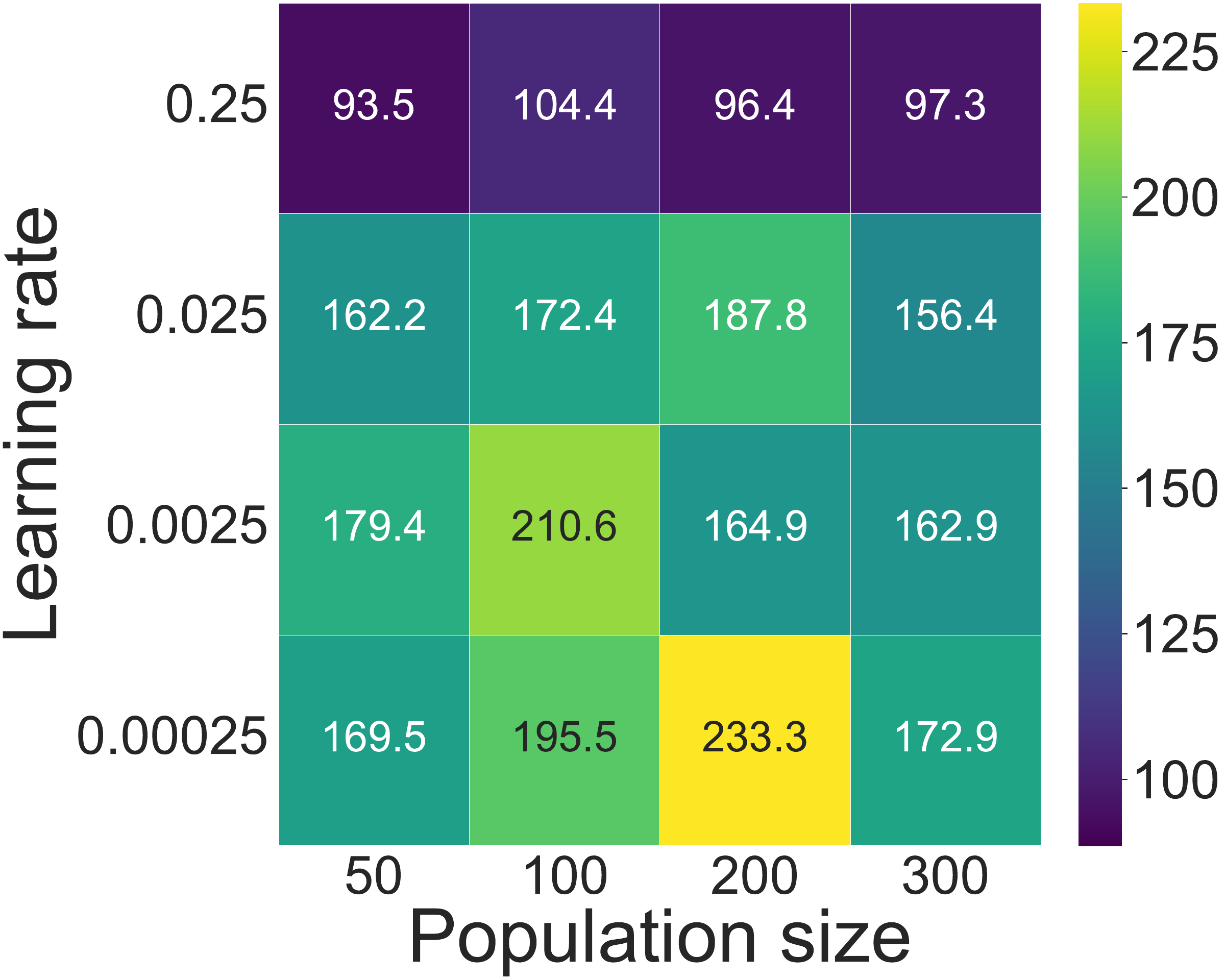}
    \end{subfigure}

    \vspace{0.5em}

    \begin{subfigure}[t]{0.38\linewidth}
        \centering
        \subcaption{Oja}
        \includegraphics[width=\linewidth]{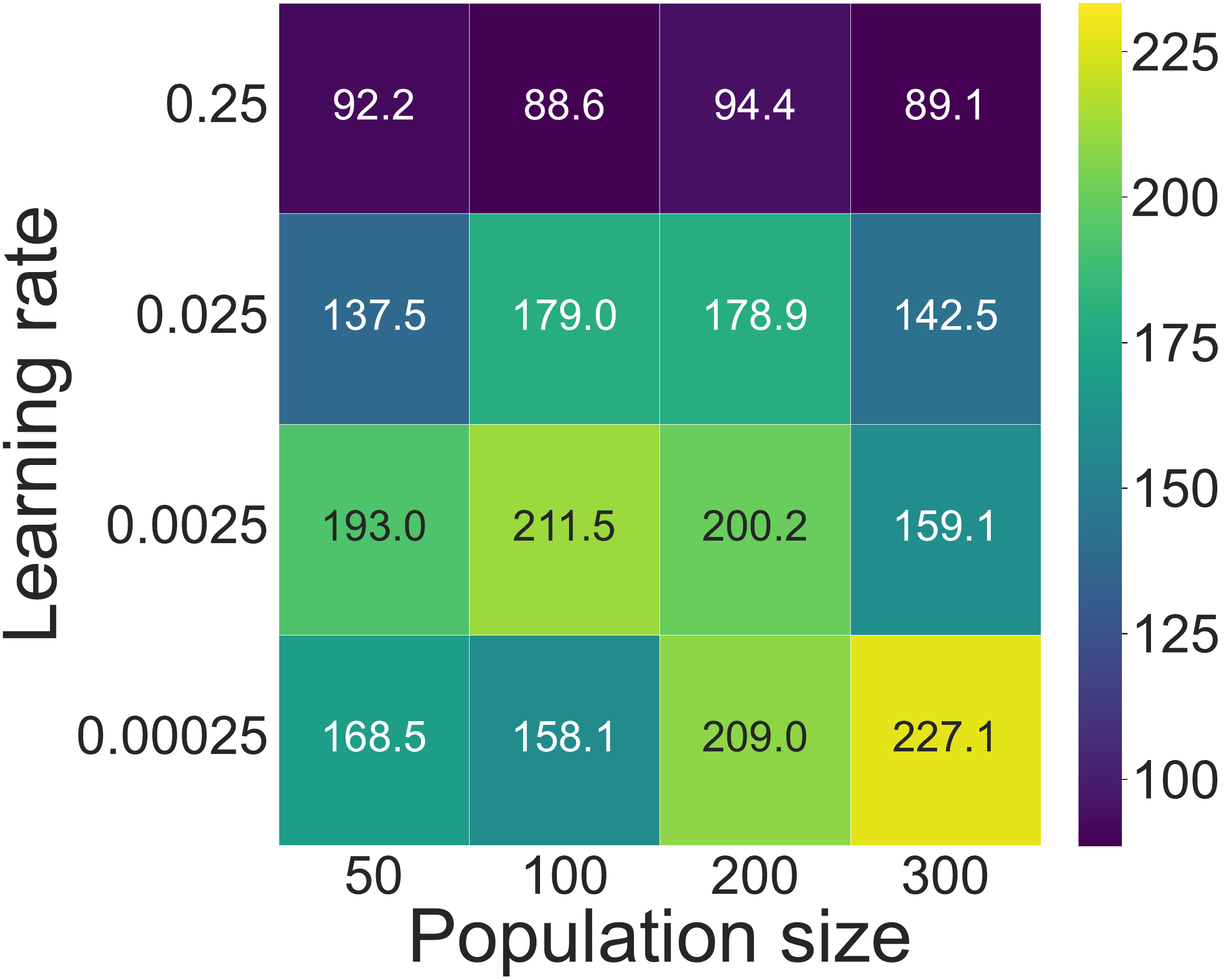}
    \end{subfigure}
    \hfill
    \begin{subfigure}[t]{0.38\linewidth}
        \centering
        \subcaption{Standard NEAT}
        \includegraphics[width=\linewidth]{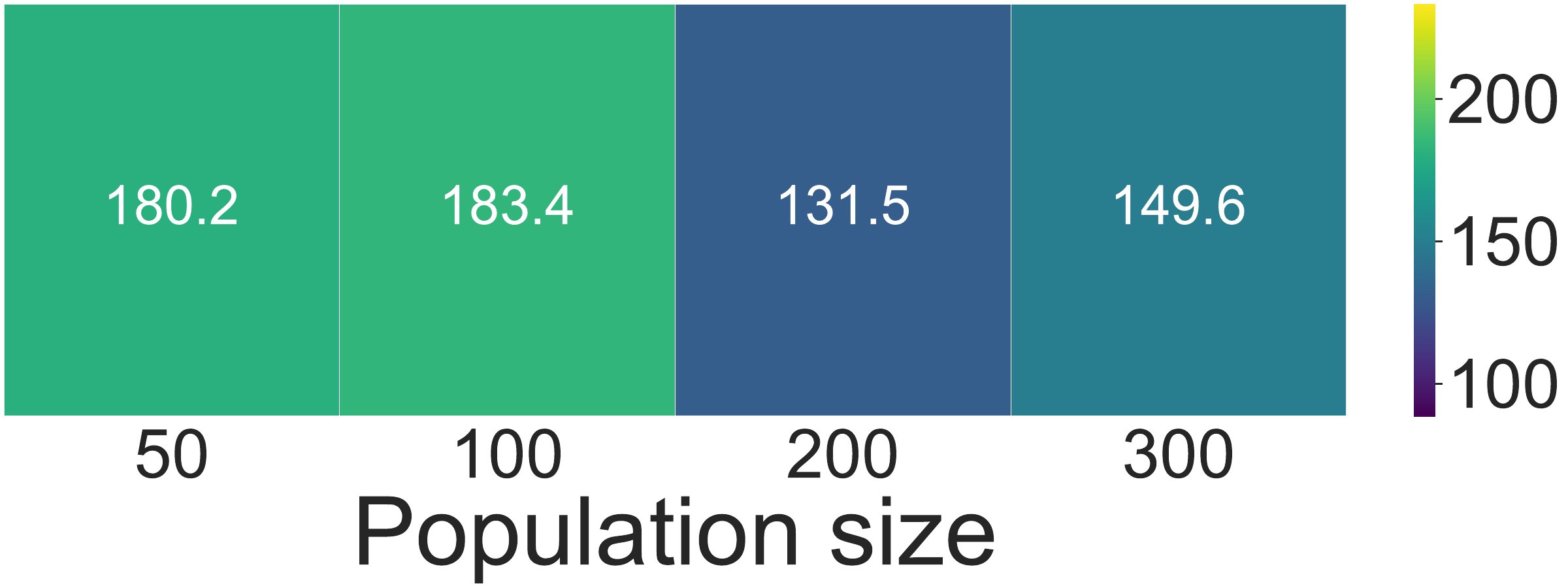}
    \end{subfigure}
    
    \caption{Heatmap comparison of NEOL algorithms (BCM, Hebb, Oja) against standard NEAT in \texttt{BipedalWalker-v3}. 
    Values represent the empirical mean of the final best fitness over $30$ seeds.}
    \label{fig:NEOL_comparison_combined_heatmap_BW}
\end{figure}

\clearpage

\subsection{Reproducibility}
We provide an anonymous GitHub repository at \url{https://anonymous.4open.science/r/NeuroEvolution_Online_Learning_NEOL-41F7/Tasks/BW/ojaNEATRL.py}, containing all source code and environment specifications required to run our experiments from scratch. 
To preserve double blind anonymity, we do not include raw configuration files in the submission. 
Instead, all model/algorithm settings and training protocols (including NEOL components, optimiser choices, schedules, population size, mutation/crossover rates, selection pressure, etc.) are fully enumerated in the paper, see Appendix. 
Readers can launch from scratch with an arbitrary random seed and reproduce our tables/figures within expected stochastic variation. 

\noindent \textbf{Computational Consideration:} We provide sufficient information on the computer resources used for each experiment, specifically describing the use of a general-purpose computing cluster with Ice Lake and Cascade Lake nodes. 
For each job, it specifies the type of compute worker (e.g., every single CPU with 60+cores and 100+G memory), and the execution time (up to 908.73 hours).
\section{More Related Works}

\subsection{NEAT Family Work.}
Existing hybrids explore different aspects of this idea. 
HyperNEAT~\cite{stanley2009hypercube} exploits geometric regularities by indirectly specifying connectivity, shifting part of the optimisation burden from dimensionality to problem structure~\cite{stanley2009hypercube}. 
However, subsequent studies report that HyperNEAT can underperform on some large-scale problems (e.g., Atari) and can struggle when the state--action mapping is highly discontinuous (the ``fracture'' issue)~\cite{hausknecht2014neuroevolution,kohl2009evolving}. 
Other works integrate topology evolution with value-based or policy-gradient RL. 
For example, \cite{whiteson2005automatic} combined Q-learning with evolving topologies but evaluated on a single control domain, leaving scalability unclear. 
\cite{Peng2018neatpg} proposed NEAT with Policy Gradient Search (NEAT-PGS), where RL is typically used to pre-train policy networks, NEAT evolves a feature network, and the policy is further trained given the evolved features.

\subsection{Difference Between Prior Plasticity Frameworks}
It is crucial to distinguish NEOL from two prominent predecessors in plastic neuroevolution to clarify our novelty:
\subsubsection{AdaptiveNEAT} \cite{Stanley2003AdaptiveSynapses} concluded that evolving fixed recurrent networks is more effective than evolving local plasticity rules. However, their study relied on two-factor Hebbian rules (Pre $\times$ Post), which are purely correlation-based and unsupervised. In contrast, NEOL employs three-factor plasticity (Pre $\times$ Post $\times$ Reward). By gating the synaptic update with the environment's reward signal, we transform the inner loop from chaotic self-organisation into directed online reinforcement learning.

\subsubsection{Metalearning-through-Hebbian-in-random-networks:} \cite{Najarro2020HebbianMetaLearning} utilised Evolution Strategies (ES) to meta-learn complex plasticity coefficients for fixed, randomly initialised networks. Their approach assumes the architecture is secondary to the learning rule. NEOL tests the inverse hypothesis: we fix the plasticity rules to standard, biologically grounded formulations (Oja, BCM) and evolve the topology via NEAT. Our results suggest that structural evolution can discover architectures that naturally exploit simple learning rules, removing the need to meta-learn the rules themselves.
\subsubsection{Differentiable Plasticity (Backpropamine):} \cite{Miconi2018DifferentiablePlasticity}
Modern meta-learning approaches use gradient descent to optimise plasticity coefficients (e.g., Backpropamine). While powerful, these methods require differentiable reward signals and typically operate on fixed architectures. NEOL is distinct in being gradient-free and focusing on \emph{topological evolution}. We demonstrate that evolving the architecture to exploit fixed, standard plasticity rules is a viable alternative to meta-learning parameters via backpropagation.

\subsubsection{Evolved Neuromodulation:} 
Prior work in \cite{Soltoggio2008NeuromodulatedPlasticity} evolves complex, dedicated modulatory neurons to regulate plasticity locally. NEOL simplifies this by using the global environmental reward directly as the neuromodulatory signal (Three-Factor Rule). Our results show that this simpler, "global-to-local" signal is sufficient to achieve state-of-the-art performance in continuous control without the overhead of evolving complex modulatory circuitry.

\end{document}